\newcommand{\nosemic}{\renewcommand{\@endalgocfline}{\relax}}
\newcommand{\dosemic}{\renewcommand{\@endalgocfline}{\algocf@endline}}
\DeclareMathOperator*{\argmin}{arg\,min}
\DeclareMathOperator*{\argmax}{arg\,max}
\DeclareMathOperator{\btheta}{\boldsymbol{\theta}}
\DeclareMathOperator{\bmu}{\boldsymbol{\mu}}
\DeclareMathOperator{\balpha}{\boldsymbol{\alpha}}
\DeclareMathOperator{\bSigma}{\boldsymbol{\Sigma}}
\DeclareMathOperator{\bx}{\mathbf{x}}
\DeclareMathOperator{\bz}{\boldsymbol{\theta}}
\DeclareMathOperator{\bxi}{\boldsymbol{\xi}}
\DeclareMathOperator{\cala}{\mathcal{A}}
\DeclareMathOperator{\caln}{\mathcal{N}}
\DeclareMathOperator{\calc}{\mathcal{C}}
\DeclareMathOperator{\call}{\mathcal{L}}
\DeclareMathOperator{\bI}{\mathbf{I}}
\DeclareMathOperator{\bv}{\mathbf{v}}
\DeclareMathOperator{\equals}{\enskip = \enskip}
\DeclareMathOperator{\enplus}{\enskip+\enskip}
\newtheorem{theorem}{Theorem}
\newtheorem{corollary}{Corollary}
\newtheorem{lemma}{Lemma}
\newtheorem{remark}{Remark}
\newtheorem{proposition}[theorem]{Proposition}
\newtheorem{definition}{Definition}
\title{Asymptotic Analysis of Objectives based on Fisher Information in Active Learning}
\begin{document}
\author{Jamshid Sourati\thanks{Department of Electrical and Computer Engineering, Northeastern University, Boston MA.  E--mail: {\tt sourati@ece.neu.edu}} \and 
Murat Akcakaya\thanks{Department of Electrical and Computer Engineering, University of Pittsburgh.  E--mail: {\tt akcakaya@pitt.edu}} \and Todd K. Leen\thanks{Georgetown University.  E--mail: {\tt todd.leen@georgetown.edu}} \and Deniz Erdogmus\thanks{Department of Electrical and Computer Engineering, Northeastern University, Boston MA..  E--mail: {\tt erdogmus@ece.neu.edu}} \and Jennifer G. Dy\thanks{Department of Electrical and Computer Engineering, Northeastern University, Boston MA..  E--mail: {\tt jdy@ece.neu.edu}}}
\date{\today}
\maketitle
\thispagestyle{empty}

\begin{abstract}

Obtaining labels can be costly and time-consuming.  Active learning allows a learning algorithm to intelligently query samples to be labeled for efficient learning.  Fisher information ratio (FIR) has been used as an objective for selecting queries in active learning.  However, little is known about the theory behind the use of FIR for active learning.  There is a gap between the underlying theory and the motivation of its usage in practice.  In this paper, we attempt to fill this gap and provide a rigorous framework for analyzing existing FIR-based active learning methods.  In particular, we show that FIR can be asymptotically viewed as an upper bound of the expected variance of the log-likelihood ratio. Additionally, our analysis suggests a unifying framework that not only enables us to make theoretical comparisons among the existing querying methods based on FIR, but also allows us to give insight into the development of new active learning approaches based on this objective.


\end{abstract}

\section{Introduction}
\label{sec:intro}

In supervised learning, a \emph{learner} is a model-algorithm pair that is optimized to (semi) automatically perform tasks, such as classification, or regression using information provided by an external source (oracle). In \emph{passive learning}, the learner has no control over the information given. In \emph{active learning}, the learner is permitted to \emph{query} certain types of information from the oracle~\citep{cohn1994improving}. Usually there is a cost associated with obtaining information from an oracle; therefore an active learner will need to maximize the information gained from queries within a fixed budget or minimize the cost of gaining a desired level of information. A majority of the existing algorithms restrict to the former problem, to get the most efficiently trained learner by querying a fixed amount of knowledge~\citep{settles2012active,fu2013survey}.

Active learning is the process of coupled querying/learning strategies. In such an algorithm, one needs to specify a query quality measure in terms of the learning method that uses the new information gained at each step of querying. For instance, information theoretic measures are commonly employed in classification problems to choose training samples whose class labels, considered as random variables, are most informative with respect to the labels of the remaining unlabeled samples. This family of measures is particularly helpful when probabilistic approaches are used for classification. Among these objectives, Fisher information criterion is very popular due to its relative ease of computation compared to other information theoretic objectives, desirable statistical properties and existence of effective optimization techniques. However, as we discuss in this manuscript, this objective is not well-studied in the classification context and there seems to be a gap between the underlying theory and the motivation of its usage in practice. \emph{This paper is an attempt to fill this gap and also provide a rigorous framework for analyzing the existing querying methods based on Fisher information.}


From the statistical point of view, we characterize the process of constructing a classifier in three steps as follows: (1) choosing the loss and risk functions, (2) building a decision rule that minimizes the risk, and (3) modeling the discriminant functions of the decision rule. For instance, choosing the simple 0/1 loss and its a posteriori expectation as the risk, incurs the Bayes rule as the optimal decision~\citep{duda1999pattern}, where the discriminant function is the posterior distribution of the class labels given the covariates. For this type of risk, discriminative models that directly parametrize the posteriors, such as logistic regression, are popularly used to learn the discriminant functions~\citep{bishop2006pattern}. In order to better categorize the existing techniques, we break an active learning algorithm into the following sub-problems:

\begin{enumerate}[label=(\roman*)]
\item\label{subproblem:query_selection} \emph{(Query Selection)} Sampling a set of covariates $\{\bx_1,...,\bx_n\}$ from the \emph{training} marginal\footnote{Throughout this paper, marginal distribution or simply distribution refers to the distribution of covariates, while joint distribution is used for pairs of the covariates and their class labels.}, whose labels $\{y_1,...,y_n\}$ are to be requested from an external source of knowledge (the \emph{oracle}). The queried covariates together with their labels form the \emph{training} data set.
\item\label{subproblem:passive_learning} \emph{(Inference)} Estimating parameters of the posterior model based on the training data set formed in the previous step.
\item\label{subproblem:prediction} \emph{(Prediction)} Making decisions regarding class labels of the \emph{test} covariates sampled from the \emph{test} marginal.
\end{enumerate}

These three steps can be carried out iteratively. Note that the query selection sub-problem is formulated in terms of the distribution from which the queries will be drawn. Ideally, queries (or the query distribution) are chosen such that they increase the expected quality of the classification performance measured by a particular objective function. This objective can be constructed from two different perspectives: based on the accuracy of the parameter inference or the accuracy of label prediction. In the rest of the manuscript, accordingly, we refer to the algorithms that use these two types of objectives as \emph{inference-based} or \emph{prediction-based} algorithms, respectively. 

Most of the inference-based querying algorithms in classification aim to choose queries that maximize the expected change in the objective of the inference step~\citep{settles2008multiple,guo2008discriminative} or Fisher information criterion~\citep{Hoi2006,settles2008analysis,hoi2009batch, chaudhuri2015convergence}.
On the other hand, the wide range of studies in prediction-based active learning includes a more varied set of objectives: for instance, the prediction error probability\footnote{Prediction error probability is indeed the frequentist risk function of 0/1 loss, and is also known as \textit{generalization error}.}~\citep{cohn1994improving,freund1997selective,zhu2003combining,nguyen2004active,dasgupta2005coarse,dasgupta2005analysis,balcan2006agnostic,dasgupta2007general,beygelzimer2010agnostic,hanneke2011rates,hanneke2012activized,awasthi2013power,zhang2014beyond}, variance of the predictions~\citep{cohn1996active,schein2007active, ji2012variance}, uncertainty of the learner with respect to the unknown labels as evaluated by the entropy function~\citep{holub2008entropy}, mutual information~\citep{guo2007optimistic,krause2008near,guo2010active,sourati2016classification}, and margin of the samples with respect to the trained hyperplanar discriminant function~\citep{schohn2000less,tong2002support}.

In this manuscript, we focus on the Fisher information criterion used in classification active learning algorithms. These algorithms use a scalar function of the Fisher information matrices computed for parametric models of training and test marginals. In the classification context, this scalar is sometimes called \emph{Fisher information ratio (FIR)}~\citep{settles2008analysis} and its usage is motivated by older attempts in optimal experiment design for statistical regression methods~\citep{fedorov1972theory,mackay1992information,cohn1996neural,fukumizu2000statistical}. 

Among the existing FIR-based classification querying methods, only the very first one proposed by~\citet{Zhang2000} approached the FIR objective from a parameter inference point of view. Using a maximum likelihood estimator (MLE), they claimed (with the proof skipped) that FIR is asymptotically equal to the expectation of the log-likelihood ratio with respect to both test and training samples (see sub-problem~\ref{subproblem:query_selection}). Later on,~\citet{Hoi2006} and~\citet{hoi2009batch}, inspired by~\citet{Zhang2000}, used FIR in connection with a logistic regression classifier with the motivation of decreasing the labels' uncertainty and hence the prediction error. \citet{settles2008analysis} employed this objective with the same motivation, but using a different approximation and optimization technique. More recently, \citet{chaudhuri2015convergence} showed that even finite-sample FIR is closely related to the expected log-likelihood ratio of an MLE-based classifier. However, their results are derived under a different and rather restricting set of conditions and assumptions: they focused on the finite-sample case where the test marginal is a uniform PMF and the proposal marginal is a general PMF (to be determined) over a finite pool of unlabeled samples. Moreover, they assumed that the conditional Fisher information matrix is assumed to be independent of the class labels. Here, in a framework similar to~\citet{Zhang2000} but with a more expanded and different derivation, we discuss a novel theoretical result based on which FIR is related to an MLE-based inference step for a large number of training data. More specifically, under certain regularity conditions required for consistency of MLE and in the absence of model mis-specification, and with no restricting assumptions on the form of test or training marginals, we show that FIR can be viewed as an upper bound for the expected variance of the asymptotic distribution of the log-likelihood ratio. Inspired by \citet{chaudhuri2015convergence}, we also show that under certain extra conditions, this relationship holds even in finite-sample case.

There are two practical issues in employing FIR as a query selection objective: its computation and optimization. First, computing the Fisher information matrices is usually intractable, except for very simple distributions; also FIR depends on the true marginal, which is usually unknown. Therefore, even if the computations are tractable, approximations have to be used for evaluating FIR. Second, the optimization of FIR is straightforward only if a single query is to be selected per iteration, or when the optimization has continuous domain (e.g, optimizing to get the real parameters of the query marginal~\citep{fukumizu2000statistical}). However, the optimization becomes NP-hard when multiple queries are to be selected from a countable set of unlabeled samples (\emph{pool-based batch} active learning). Heuristics have been used to approximate such combinatorial optimization, such as greedy methods~\citep{settles2008analysis} and relaxation to continuous domains~\citep{guo2010active}. Another strategy is to take advantage of \emph{monotonic submodularity} of the objective set functions. If the objective is shown to be monotonically submodular, efficient greedy algorithms can be used for optimization with guaranteed tight bounds~\citep{krause2008near,azimi2012batch,chen2013near}. Regarding FIR, \citet{Hoi2006} proved that, when a logistic regression model is used, a Monte-Carlo simulation of this objective is a monotone and submodular set function in terms of the queries.  

In addition to our theoretical contribution in asymptotically relating FIR to the log-likelihood ratio, we clarify the differences between some of the existing FIR-based querying methods according to the techniques that they use to address the evaluation and optimization issues. Furthermore, we show that monotonicity and submodularity of Monte-Carlo approximation of FIR can be extended from logistic regression models to \emph{any} discriminative classifier.
%
Here is a summary of our contributions in this paper:
\begin{itemize}
\item Establishing a relationship between the Fisher information matrix of the query distribution and the asymptotic distribution of the log-likelihood ratio (section~\ref{subsec:asymptotic_distribution});
\item Showing that FIR can be viewed as an upper bound of the expected asymptotic variance of the log-likelihood ratio, implying that minimizing FIR, as an active learning objective, is asymptotically equivalent to upper-bound minimization of the expected variance of the log-likelihood ratio, as a measure of inference performance (section~\ref{subsec:upper_bound});
\item Proving that under certain assumptions, the above-mentioned asymptotic relationship also holds for finite-sample estimation of FIR (section~\ref{subsubsec:replacing_theta});
\item Discussing different existing methods for coping with practical issues in using FIR in querying algorithms (section~\ref{subsec:practical_issues}), and accordingly providing a unifying framework for existing FIR-based active learning methods (section~\ref{subsec:algorithms}).
\item Proving submodularity for the Monte-Carlo simulation of FIR under \emph{any} discriminative classifier, assuming a pool-based active learning which enables access to approximations of Fisher information matrices of both test and training distributions (Lemma~\ref{lemma:equivalent_set_optimization} and Theorem~\ref{thm:submodularity}).
\end{itemize}

Before going through the main discussion in section~\ref{sec:theoretical_analysis}, we formalize our classification model assumptions, set the notations and review the basics and some of the key properties of our inference method, maximum likelihood estimation, in sections~\ref{sec:framework_assumptions} and~\ref{sec:background}. The statistical background required to follow the remaining sections is given in Appendix~\ref{sec:statistical_background}.

\section{The Framework and Assumptions}
\label{sec:framework_assumptions}
In this paper, we deal with classification problems, where each covariate, represented by a feature vector $\bx$ in vector space $X$, is associated with a numerical class label $y$. Assuming that there are $1<c<\infty$ classes, $y$ can take any integer among the set $\{1,...,c\}$. Suppose that the pairs $(\bx,y)$ are distributed according to a parametric joint distribution $p(\bx,y|\btheta)$, with the parameter space denoted by $\Omega\subseteq\mathbb{R}^d$. Using a set of observed pairs as the training data, $\call_n:=\{(\bx_1,y_1),...,(\bx_n,y_n)\}$, we can estimate $\btheta$ and predict the class labels of the unseen test samples, e.g. by maximizing $p(y|\bx,\btheta)$. In active learning, the algorithm is permitted to take part in designing $\call_n$ by choosing a set of data points $\{\bx_1,...,\bx_n\}$, for which the class labels are then generated using an external oracle.

In addition to the framework described in the last section (see subproblems (i) to (iii)), we make the following assumptions regarding the oracle, our classification model and the underlying data distribution:
\begin{enumerate}
\setcounter{enumi}{-1}
\item\label{item:parameter_dependence} The dependence of the joint distribution to the parameter $\btheta$ comes only from the class-conditional distribution and the marginal distribution does not depend on $\btheta$, that is:
\begin{equation}
\label{eq:type2_distribution}
p(\bx,y|\btheta) \equals p(y|\bx,\btheta) p(\bx).
\end{equation}
\citet{Zhang2000} referred to joint distributions with such parameter dependence as type-II models, as opposed to type-I models which have parameter dependence in both class conditionals and marginal. They argue that active learning is more suitable for type-II models. Moreover, maximizing the joint with respect to the parameter vector in this model, becomes equivalent to maximizing the posterior $p(y|\bx,\btheta)$ (inference step in sub-problem (ii)).
\item\label{item:identifiability} \textit{(Identifiability):} The joint distribution $P_{\btheta}$ (whose density is given by $p(\bx,y|\btheta)$) is identifiable for different parameters. Meaning that for every distinct parameter vectors $\btheta_1$ and $\btheta_2$ in $\Omega$, $P_{\btheta_1}$ and $P_{\btheta_2}$ are also distinct. That is
$$\forall \btheta_1\neq\btheta_2\in\Omega \enskip \exists A\subseteq X\times\{1,...,c\} \quad \mbox{s.t.} \quad P_{\btheta_1}(A)\neq P_{\btheta_2}(A).$$
\item\label{item:support} The joint distribution $P_{\btheta}$ has common support for all $\btheta\in\Omega$.
\item\label{item:oracle_distribution} \textit{(Model Faithfulness): } For any $\bx\in X$, we have access to an oracle that generates a label $y$ according to the conditional $p(y|\bx,\btheta_0)$. That is, the posterior parametric model matches the oracle distribution. We call $\btheta_0$ the true model parameter. 
\item\label{item:iid} \textit{(Training joint): } The set of observations in $\call_n:=\{(\bx_1,y_1),...,(\bx_n,y_n)\}$ are drawn independently from the \emph{training}/\emph{proposal}/\emph{query} joint distribution of the form $p(y|\bx,\btheta_0)q(\bx)$ where $q$ is the training marginal with no dependence on the parameter. 
\item\label{item:test_iid} \textit{(Test joint): } The unseen test pairs are distributed according to the \emph{test}/\emph{true} joint distribution of the form $p(y|\bx,\btheta_0)p(\bx)$ where $p$ is the test marginal with no dependence on the parameter.
\item\label{item:differentiability} \textit{(Differentiability): } The log-conditional $\log p(y|\bx,\btheta)$ is of class $\calc^3(\Omega)$ as a function of $\btheta$ and for all $(\bx,y)\in X\times\{1,...,c\}$ \footnote{We say that a function $f:X\to Y$ is of $\mathcal{C}^p(X)$, for an integer $p>0$, if its derivatives up to $p$-th order exist and are continuous at all points of $X$.}.
\item\label{item:openness}
The parameter space $\Omega$ is compact and there exists an open ball around the true parameter of the model $\btheta_0\in\Omega$.
\item\label{item:FisherInformation} \textit{(Invertibility): } The Fisher information matrix (reviewed in section~\ref{subsec:fisher_information}) of the joint distribution is positive definite and therefore invertible for all $\btheta\in\Omega$, and for any type of marginal that is used under assumption \ref{item:parameter_dependence}. 
\end{enumerate}

\noindent Regarding assumptions \ref{item:iid} and \ref{item:test_iid}, note that the training and test marginals are not necessarily equal. The test marginal is usually not known beforehand and $q$ cannot be set equal to $p$ in practice, hence $q$ can be viewed as a proposal distribution. Such inconsistency is what \citet{shimodaira2000improving} called \emph{covariate shift in distribution}. In the remaining sections of the report, we use subscripts $p$ and $q$ for the statistical operators that consider $p(\bx)$ and $q(\bx)$ as the marginal in the joint distribution, respectively. We explicitly mention $\bx$ as the input argument in order to refer to marginal operators. For instance, $\mathbb{E}_q$ denotes the joint expectation with respect to $q(\bx)p(y|\btheta,\bx)$, whereas $\mathbb{E}_{q(\bx)}$ denotes the marginal expectation with respect to $q(\bx)$.

\section{Background}
\label{sec:background}
Here, we provide a short review of maximum likelihood estimation (MLE) as our inference method, and briefly introduce Fisher information of a parametric distribution. These two basic concepts enable us to explain some of the key properties of MLE, upon which our further analysis of FIR objective relies.  Note that our focus in this section is on sub-problem~\ref{subproblem:passive_learning} with the assumptions listed above.

\subsection{Maximum Likelihood Estimation}
\label{subsec:MLE}
In this section, we review maximum likelihood estimation in the context of classification problem.
Given a training data set $\call_n=\{(\bx_1,y_1),...,(\bx_n,y_n)\}$, a maximum likelihood estimate (MLE) is obtained by maximizing the log-likelihood function over all pairs inside $\call_n$, with respect to the parameter $\btheta$:
\begin{align}
\label{eq:MLE}
\hat{\btheta}_n \equals \operatorname*{arg\,max}_{\btheta} \enskip \log p\left(\call_n|\btheta \right),
\end{align}
Under the assumptions \ref{item:parameter_dependence} and \ref{item:iid}, the optimization in~(\ref{eq:MLE}) can be written as
\begin{align}
\label{eq:MLE_conditional}
\hat{\btheta}_n \equals \operatorname*{arg\,max}_{\btheta} \enskip \sum_{i=1}^n \log p(y_i|\bx_i,\btheta),
\end{align}
Equation~(\ref{eq:MLE_conditional}) shows that MLE does not depend on the marginal when using type-II model. Hence, in our analysis we focus on the conditional log-likelihood as the classification objective, and simply call it the log-likelihood function when viewed as a function of the parameter vector $\btheta$, for any given pair $(\bx,y)\in X\times\{1,...,c\}$:
\begin{equation}
\ell(\btheta;\bx,y) \enskip := \enskip \log p(y|\bx,\btheta).
\end{equation}
Moreover, for any set of pairs independently generated from the joint distribution of the training data, such as $\call_n$ mentioned in~\ref{item:iid}, the log-likelihood function will be:
\begin{equation}
\label{eq:loglikelihood_function}
\ell(\btheta;\call_n) \equals \sum_{i=1}^n \ell(\btheta;\bx_i,y_i)\equals \sum_{i=1}^n \log p(y_i|\bx_i,\btheta).
\end{equation}
hence the MLE can be rewritten as
\begin{equation}
\label{eq:MLE_ell}
\hat{\btheta}_n \equals \operatorname*{arg\,max}_{\btheta} \enskip \sum_{i=1}^n \ell(\btheta;\bx_i,y_i).
\end{equation}
Doing this maximization usually involves the computation of the stationary points of the log-likelihood, which requires calculating $\nabla_{\btheta}\ell(\btheta;\call_n)=\sum_{i=1}^n\nabla_{\btheta}\ell(\btheta;\bx_i,y_i)$.
For models assumed in~\ref{item:parameter_dependence}, each of the derivations in the summation is equal to the \emph{score function} defined as the gradient of the joint log-likelihood: 
\begin{equation}
\label{eq:score_loglikelihood}
\nabla_{\btheta}\ell(\btheta;\bx,y) \equals  \nabla_{\btheta}\log p(y|\bx,\btheta) \equals \nabla_{\btheta}\log p(\bx,y|\btheta) ,
\end{equation}
Equation~(\ref{eq:score_loglikelihood}) implies that the score will be the same no matter whether we choose the training or test distribution as our marginal. Furthermore, under regularity conditions~\ref{item:differentiability}, the score is always a zero-mean random variable\footnote{Score function is actually zero-mean even under weaker regularity conditions.}.

Finally, using MLE to estimate $\hat{\btheta}_n$, class label of a test sample $\bx$ will be predicted as the class with the highest log-likelihood value:
\begin{equation}
\hat{y}(\bx)  \equals \operatorname*{arg\,max}_{y} \ell(\hat{\btheta}_n;\bx,y).
\end{equation}

\subsection{Fisher Information}
\label{subsec:fisher_information}
Here we give a very short introduction to Fisher information. More detailed descriptions about this well-known criterion can be found in various textbooks, such as~\citet{lehmann1998theory}. 

Fisher information of a parametric distribution is a measure of information that the samples generated from that distribution provide regarding the parameter. It owes part of its importance to the Cram\'{e}r-Rao Theorem (see Appendix~\ref{subsec:parameter_estimation}, Theorem~\ref{thm:CR_bound}), which guarantees a lower-bound for the covariance of the parameter estimators.
 
Fisher information, denoted by $\bI(\btheta)$, is defined as the expected value of the outer-product of the score function with itself, evaluated at some $\btheta\in\Omega$. In our classification context, taking the expectation with respect to the training or test distributions gives us the training or test Fisher information criteria, respectively:
\begin{align}
\label{eq:FI_training_test}
\bI_q(\btheta) \enskip &:= \enskip \mathbb{E}_q\left[ \nabla_{\btheta}\log p(\bx,y|\btheta)\cdot\nabla_{\btheta}^\top\log p(\bx,y|\btheta) \right] \nonumber\\
\bI_p(\btheta) \enskip &:= \enskip \mathbb{E}_p\left[ \nabla_{\btheta}\log p(\bx,y|\btheta)\cdot\nabla_{\btheta}^\top\log p(\bx,y|\btheta) \right]
\end{align}
Here, we focus on $\bI_q$ to further explain Fisher information criterion. Our descriptions here can be directly generalized to $\bI_p$ as well. First, note that from equation~(\ref{eq:score_loglikelihood}) and that the score function is always zero-mean, one can reformulate the definition as:
\begin{align}
\label{eq:fisher_information_loglikelihood}
\bI_q(\btheta) &\equals \mathbb{E}_q\left[ \nabla_{\btheta}\ell(\btheta;\bx,y)\cdot \nabla_{\btheta}^\top\ell(\btheta;\bx,y) \right] \nonumber\\
&\equals \mbox{Cov}_q\left[\nabla_{\btheta}\ell(\btheta;\bx,y) \right]
\end{align}
Under the differentiability conditions~\ref{item:differentiability}, it is easy to show that we can also write the Fisher information in terms of the Hessian matrix of the log-likelihood:
\begin{equation}
\label{eq:fisher_hessian}
\bI_q(\btheta) \equals -\mathbb{E}_q\left[\nabla_{\btheta}^2\ell(\btheta;\bx,y) \right]
\end{equation}
Recall that the subscript $q$ in equations~(\ref{eq:fisher_information_loglikelihood}) and~(\ref{eq:fisher_hessian}) indicates that the expectations are taken with respect to the joint distribution that uses $q(\bx)$ as the marginal, that is $p(\bx,y|\btheta)=q(\bx)p(y|\bx,\btheta)$. Expansion of the expectation in~(\ref{eq:fisher_hessian}) results
\begin{align}
\label{eq:fisher_information_integration}
\bI_q(\btheta) &\equals -\mathbb{E}_{q(\bx)}\left[ \mathbb{E}_{y|\bx,\btheta}\left[ \nabla_{\btheta}^2\ell(\btheta;\bx,y) | \bx,\btheta \right] \right] \nonumber\\
&\equals -\int_{\bx\in X} q(\bx) \left[ \sum_{y=1}^c p(y|\bx,\btheta) \cdot \nabla^2_{\btheta}\ell(\btheta;\bx,y) \right]d\bx
\end{align}

\subsection{Some Properties of MLE}
\label{subsec:properties_MLE}
In this section, we formalize some of the key properties of MLE, which make this estimator popular in various fields.
They are also very useful in the theoretical analysis of FIR, provided in the next section. More detailed descriptions of these properties, together with the proofs that are skipped here, can be found in different sources, such as~\citet{wasserman2004all} and~\citet{lehmann1998theory}.

Note that a full understanding of the properties described in this section requires the knowledge of different modes of statistical convergence, specifically, convergence in probability ($\overset{P}{\to}$), and convergence in law ($\overset{L}{\to}$). A brief overview of these concepts are given in Appendix~\ref{sec:statistical_background}.

\begin{theorem}[\citet{lehmann1998theory}, Theorem 5.1]
\label{thm:MLE_consistency}
If the assumptions~\ref{item:parameter_dependence} to~\ref{item:openness} hold, then there exists a sequence of solutions $\left\{\hat{\btheta}^*_n\right\}_{n=1}^\infty$ to $\nabla_{\btheta}\ell(\btheta;\call_n)=0$ that converges to the true parameter $\btheta_0$ in probability.
\end{theorem}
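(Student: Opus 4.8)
The plan is to prove a local (Cram\'er-style) existence result: rather than arguing about the global maximizer, I would show that with probability tending to $1$ the conditional log-likelihood $\ell(\btheta;\call_n)=\sum_{i=1}^n\ell(\btheta;\bx_i,y_i)$ of~(\ref{eq:loglikelihood_function}) has a stationary point in every small ball around $\btheta_0$. Working with the conditional log-likelihood is justified by the type-II assumption~\ref{item:parameter_dependence}, and the backbone of the argument is that, at the population level, the expected log-likelihood under the training joint is uniquely maximized at the true parameter. Concretely, for $\btheta\in\Omega$ I would introduce the $q$-averaged divergence $K(\btheta):=\mathbb{E}_q\!\left[\ell(\btheta_0;\bx,y)-\ell(\btheta;\bx,y)\right]$ and establish that $K(\btheta_0)=0$ while $K(\btheta)>0$ for $\btheta\neq\btheta_0$.

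The strict positivity of $K$ is the first ingredient and follows from an information inequality: writing $K(\btheta)=\mathbb{E}_{q(\bx)}\big[\,\mathbb{E}_{y|\bx,\btheta_0}[\log(p(y|\bx,\btheta_0)/p(y|\bx,\btheta))]\,\big]$, each inner expectation is a Kullback--Leibler divergence, hence nonnegative by Jensen's inequality, with the common-support assumption~\ref{item:support} ensuring the ratio is well defined and identifiability~\ref{item:identifiability} (together with model faithfulness~\ref{item:oracle_distribution}) forcing strict positivity whenever $\btheta\neq\btheta_0$. The second ingredient is empirical concentration: since $\call_n$ is i.i.d.\ from the training joint $q(\bx)p(y|\bx,\btheta_0)$ by~\ref{item:iid}, the weak law of large numbers gives $\tfrac1n[\ell(\btheta_0;\call_n)-\ell(\btheta;\call_n)]\overset{P}{\to}K(\btheta)$ for each fixed $\btheta$. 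Fixing a radius $a>0$ small enough that the closed ball $\overline{B_a(\btheta_0)}\subseteq\Omega$ (possible by the openness in~\ref{item:openness}), I would use continuity of $K$ and compactness of the sphere $\{\btheta:\|\btheta-\btheta_0\|=a\}$ to get $\inf_{\|\btheta-\btheta_0\|=a}K(\btheta)=\kappa_a>0$, then upgrade the pointwise convergence to a uniform statement over the sphere. This yields, with probability tending to $1$, $\ell(\btheta;\call_n)<\ell(\btheta_0;\call_n)$ for every $\btheta$ on the sphere; since $\ell(\cdot;\call_n)$ is continuous on the compact ball by~\ref{item:differentiability}, it attains an interior maximum, which is necessarily a root of $\nabla_\btheta\ell(\btheta;\call_n)=0$ lying within distance $a$ of $\btheta_0$.

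The step I expect to be the main obstacle is exactly this uniformization over the sphere, i.e.\ turning the pointwise law of large numbers into $\sup_{\|\btheta-\btheta_0\|=a}\tfrac1n[\ell(\btheta;\call_n)-\ell(\btheta_0;\call_n)]\overset{P}{\to}-\kappa_a$; this requires a uniform law of large numbers, for which I would exploit the $\calc^3$ smoothness~\ref{item:differentiability} and compactness of $\Omega$~\ref{item:openness} to produce an integrable envelope and stochastic equicontinuity on the ball (the finiteness of the label set $\{1,\dots,c\}$ helps, so only the $\bx$-integration must be dominated). A cleaner alternative that sidesteps the envelope, at the cost of invoking the invertibility assumption~\ref{item:FisherInformation}, is a third-order Taylor expansion of $\ell(\btheta;\call_n)-\ell(\btheta_0;\call_n)$ about $\btheta_0$: the linear term is negligible because the score is zero-mean so $\tfrac1n\nabla_\btheta\ell(\btheta_0;\call_n)\overset{P}{\to}0$, the quadratic term is strictly negative because $\tfrac1n\nabla_\btheta^2\ell(\btheta_0;\call_n)\overset{P}{\to}-\bI_q(\btheta_0)\prec0$ by~(\ref{eq:fisher_hessian}) and positive definiteness of $\bI_q(\btheta_0)$, and the cubic remainder is dominated on a neighbourhood using~\ref{item:differentiability}, giving the same sphere estimate. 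Finally, to convert the family of ``a root exists in $B_a(\btheta_0)$'' events into a single convergent sequence, I would take a measurable selection---for each $n$, let $\hat{\btheta}^*_n$ be the root of the score equation nearest $\btheta_0$ (and any fixed point when none exists)---so that $\{\|\hat{\btheta}^*_n-\btheta_0\|<a\}$ contains the high-probability event for each $a$, which is precisely $\hat{\btheta}^*_n\overset{P}{\to}\btheta_0$.
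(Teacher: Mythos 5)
The paper does not actually prove this theorem: it imports it verbatim from \citet{lehmann1998theory} and explicitly skips the proof, so the only fair comparison is with the classical Cram\'er-type argument in that source --- and your proposal is a faithful reconstruction of exactly that argument (non-negativity and strict positivity of the population KL gap $K(\btheta)$, a law of large numbers, a uniform bound on the sphere $\|\btheta-\btheta_0\|=a$ forcing an interior local maximum of the likelihood, and a measurable selection of a root near $\btheta_0$). Your ``cleaner alternative'' via a third-order Taylor expansion with a dominated remainder is in fact the route Lehmann and Casella take in the multiparameter case, so both of your options are legitimate; the only caveat worth flagging is that strict positivity of $K$ off $\btheta_0$ requires identifiability of the conditional model under the \emph{training} marginal $q$ of assumption~\ref{item:iid}, whereas the paper states identifiability~\ref{item:identifiability} for the joint built on the test marginal of~\ref{item:test_iid} --- a gap in the paper's hypotheses rather than in your argument, and one that your Taylor-expansion route sidesteps anyway since it only needs $\bI_q(\btheta_0)\succ0$, which~\ref{item:FisherInformation} grants for either marginal (though, as you note, \ref{item:FisherInformation} is not among the stated hypotheses of the theorem, so the KL route is the one matching the statement as given).
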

Note that Theorem~\ref{thm:MLE_consistency} does not imply that convergence holds for \emph{any} sequence of MLEs. Hence, if there are multiple solutions to equation $\nabla_{\btheta}\ell(\btheta;\call_n)=0$ (the equation to solve for finding the stationary points) for every $n$, it is not obvious which root to select as $\hat{\btheta}_n^*$ to sustain the convergence. Therefore, while consistency of the MLE is guaranteed for models with a unique root of the score function evaluated at $\call_n$, it is not trivial how to build a consistent sequence when multiple roots exist. Here, in order to remove this ambiguity, we assume that either the roots are unique, become asymptotically unique, or we have access to an external procedure guiding us to select the proper roots so that $\hat{\btheta}_n\overset{P}{\to}\btheta_0$. We will denote the selected roots the same as $\hat{\btheta}_n$ from now on.

\begin{theorem}[\citet{lehmann1998theory}, Theorem 5.1]
\label{thm:asymptotic_efficiency_MLE}
Let $\hat{\btheta}_n$ be the maximum likelihood estimator based on the training data set $\call_n$. If the assumptions~\ref{item:parameter_dependence} to~\ref{item:FisherInformation} hold, then the MLE $\hat{\btheta}_n$ has a zero-mean normal asymptotic distribution with the covariance equal to the inverse Fisher information matrix, and with the convergence rate of $1/2$:
\begin{equation}
\label{eq:MLE_convergence_in_law}
\sqrt{n}(\hat{\btheta}_n-\btheta_0)\enskip \overset{L}{\rightarrow} \enskip \caln\left( \mathbf{0},\bI_{q}(\btheta_0)^{-1} \right)
\end{equation}
\end{theorem}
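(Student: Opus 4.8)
The plan is to follow the classical delta-method argument for asymptotic normality of the MLE, converting the first-order optimality condition into a ratio of a central-limit term over a law-of-large-numbers term, then closing with Slutsky's theorem. Since under the root-selection convention $\hat{\btheta}_n$ is a stationary point, it satisfies $\nabla_{\btheta}\ell(\hat{\btheta}_n;\call_n)=\mathbf{0}$. First I would use assumption~\ref{item:differentiability} (the log-conditional is $\calc^3$, so the score is $\calc^2$) to Taylor-expand the score about the true parameter with a mean-value remainder: there exists $\tilde{\btheta}_n$ on the segment joining $\hat{\btheta}_n$ and $\btheta_0$ with
$$\mathbf{0}=\nabla_{\btheta}\ell(\btheta_0;\call_n)+\nabla_{\btheta}^2\ell(\tilde{\btheta}_n;\call_n)\,(\hat{\btheta}_n-\btheta_0).$$
Rearranging and multiplying by $\sqrt{n}$ yields
$$\sqrt{n}(\hat{\btheta}_n-\btheta_0)=\left[-\tfrac{1}{n}\nabla_{\btheta}^2\ell(\tilde{\btheta}_n;\call_n)\right]^{-1}\cdot\tfrac{1}{\sqrt{n}}\nabla_{\btheta}\ell(\btheta_0;\call_n),$$
provided the bracketed matrix is eventually invertible, which is guaranteed in the limit by assumption~\ref{item:FisherInformation}.

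Next I would treat the two factors separately. For the numerator, equation~(\ref{eq:loglikelihood_function}) gives $\tfrac{1}{\sqrt{n}}\nabla_{\btheta}\ell(\btheta_0;\call_n)=\tfrac{1}{\sqrt{n}}\sum_{i=1}^n\nabla_{\btheta}\ell(\btheta_0;\bx_i,y_i)$, a normalized sum of i.i.d. terms by assumption~\ref{item:iid}. These summands are zero-mean (the score is zero-mean under~\ref{item:differentiability}) with covariance $\bI_q(\btheta_0)$ by~(\ref{eq:fisher_information_loglikelihood}), so the multivariate CLT gives $\tfrac{1}{\sqrt{n}}\nabla_{\btheta}\ell(\btheta_0;\call_n)\overset{L}{\to}\caln(\mathbf{0},\bI_q(\btheta_0))$. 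For the denominator evaluated at $\btheta_0$, the law of large numbers together with~(\ref{eq:fisher_hessian}) gives $-\tfrac{1}{n}\sum_{i=1}^n\nabla_{\btheta}^2\ell(\btheta_0;\bx_i,y_i)\overset{P}{\to}-\mathbb{E}_q[\nabla_{\btheta}^2\ell(\btheta_0;\bx,y)]=\bI_q(\btheta_0)$.

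The delicate point, and the step I expect to be the main obstacle, is that the Hessian is taken at the random intermediate point $\tilde{\btheta}_n$ rather than at $\btheta_0$, so the plain LLN does not apply directly. By Theorem~\ref{thm:MLE_consistency} we have $\hat{\btheta}_n\overset{P}{\to}\btheta_0$, hence $\tilde{\btheta}_n\overset{P}{\to}\btheta_0$ since it is sandwiched between $\hat{\btheta}_n$ and $\btheta_0$. To transfer the convergence I would establish a local uniform law of large numbers: because $\ell$ is $\calc^3$ (assumption~\ref{item:differentiability}) and $\Omega$ is compact with an open ball around $\btheta_0$ (assumption~\ref{item:openness}), the third derivatives are bounded on a neighborhood of $\btheta_0$, so $\btheta\mapsto-\tfrac{1}{n}\nabla_{\btheta}^2\ell(\btheta;\call_n)$ is locally Lipschitz with a sample-average Lipschitz constant that is bounded in probability. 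This justifies $-\tfrac{1}{n}\nabla_{\btheta}^2\ell(\tilde{\btheta}_n;\call_n)=-\tfrac{1}{n}\nabla_{\btheta}^2\ell(\btheta_0;\call_n)+o_P(1)$, so the denominator converges in probability to $\bI_q(\btheta_0)$, which is invertible by assumption~\ref{item:FisherInformation}.

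Finally I would combine the two factors through the continuous mapping theorem and Slutsky's theorem: the denominator's inverse converges in probability to the constant matrix $\bI_q(\btheta_0)^{-1}$, and multiplying the Gaussian limit of the numerator gives
$$\sqrt{n}(\hat{\btheta}_n-\btheta_0)\overset{L}{\to}\bI_q(\btheta_0)^{-1}\,\caln(\mathbf{0},\bI_q(\btheta_0))=\caln\!\left(\mathbf{0},\bI_q(\btheta_0)^{-1}\bI_q(\btheta_0)\bI_q(\btheta_0)^{-1}\right)=\caln\!\left(\mathbf{0},\bI_q(\btheta_0)^{-1}\right),$$
where the last equality uses symmetry of $\bI_q(\btheta_0)$. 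Everything except the uniform Hessian control is a routine application of the CLT, the LLN, and Slutsky's theorem.
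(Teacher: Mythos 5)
The paper does not actually prove this theorem: it is imported verbatim from \citet{lehmann1998theory} (Theorem 5.1), with the proof explicitly skipped, so there is no in-paper argument to compare against. Your proof is the standard textbook route to MLE asymptotic normality --- expand the score about $\btheta_0$, apply the CLT to $\tfrac{1}{\sqrt{n}}\nabla_{\btheta}\ell(\btheta_0;\call_n)$, control the Hessian factor, and close with Slutsky --- and its structure is sound; you also correctly identify the Hessian at the intermediate point as the only non-routine step. Two technical caveats are worth flagging. First, the mean-value form of Taylor's theorem does not hold for vector-valued maps, so the identity $\mathbf{0}=\nabla_{\btheta}\ell(\btheta_0;\call_n)+\nabla_{\btheta}^2\ell(\tilde{\btheta}_n;\call_n)(\hat{\btheta}_n-\btheta_0)$ with a single $\tilde{\btheta}_n$ is not literally available; one must apply the expansion coordinate-by-coordinate (a different intermediate point per row of the Hessian) or use the integral form of the remainder --- the rest of the argument goes through unchanged. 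Second, your uniform control of the Hessian leans on boundedness of the third derivatives, but assumption~\ref{item:differentiability} plus compactness of $\Omega$ (assumption~\ref{item:openness}) only bounds $\sup_{\btheta}\|\nabla_{\btheta}^3\ell(\btheta;\bx,y)\|$ pointwise in $(\bx,y)$; for the sample-average Lipschitz constant to be $O_P(1)$ you need this envelope to have finite expectation under $q$, which is exactly the Cram\'{e}r-type domination condition that Lehmann and Casella impose and that does not follow from the paper's stated assumptions when $X$ is unbounded. With that integrability hypothesis added (or $X$ taken compact), your argument is complete and correct.
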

Theorems~\ref{thm:asymptotic_efficiency_MLE} and Cram\'{e}r-Rao bound (see Appendix~\ref{sec:statistical_background}), together with the consistency assumption, i.e. $\hat{\btheta}_n\overset{P}{\to}\btheta_0$, imply that MLE is an asymptotically efficient estimator with the efficiency equal to the training Fisher information. One can rewrite~(\ref{eq:MLE_convergence_in_law}) as
\begin{equation}
\sqrt{n}\cdot\bI_q(\btheta_0)^{1/2}(\hat{\btheta}_n-\btheta_0) \enskip \overset{L}{\to}\enskip \caln(\mathbf{0},\mathbb{I}_d)
\end{equation}
In the following corollary, we see that if we substitute $\bI_q(\btheta_0)$ with $\bI_q(\hat{\btheta}_n)$, the new sequence still converges to a normal distribution:

\begin{corollary}[\citet{wasserman2004all}, Theorem 9.18]
\label{cor:MLE_convergence_in_law_thetahat}
Under the assumptions of Theorem~\ref{thm:asymptotic_efficiency_MLE}, we get
\begin{equation}
\label{eq:MLE_convergence_in_law_thetahat}
\sqrt{n}\cdot\bI_{q}(\hat{\btheta}_n)^{1/2}(\hat{\btheta}_n-\btheta_0)\enskip \overset{L}{\rightarrow} \enskip \caln\left( \mathbf{0},\mathbb{I}_d \right)
\end{equation}
\end{corollary}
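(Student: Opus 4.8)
The plan is to reduce the claim to Theorem~\ref{thm:asymptotic_efficiency_MLE} by replacing the deterministic normalizing matrix $\bI_q(\btheta_0)^{1/2}$ with its sample counterpart $\bI_q(\hat{\btheta}_n)^{1/2}$, and then invoking Slutsky's theorem. First I would note that the hypotheses of Theorem~\ref{thm:asymptotic_efficiency_MLE} already carry the consistency convention $\hat{\btheta}_n\overset{P}{\to}\btheta_0$ together with the positive-definiteness, and hence invertibility, of $\bI_q(\btheta)$ for every $\btheta\in\Omega$ (assumption~\ref{item:FisherInformation}). The key auxiliary fact I would establish is that the map $\btheta\mapsto\bI_q(\btheta)^{1/2}$ is continuous at $\btheta_0$. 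This follows in two steps: the map $\btheta\mapsto\bI_q(\btheta)$ is continuous because $\ell(\btheta;\bx,y)$ is of class $\calc^3(\Omega)$ by assumption~\ref{item:differentiability}, so via the Hessian form~(\ref{eq:fisher_hessian}) its entries are expectations of continuous functions of $\btheta$, and this continuity is preserved under the finite label sum and the marginal integration over $q(\bx)$ in~(\ref{eq:fisher_information_integration}); the symmetric positive-definite matrix square root is in turn a continuous map on the cone of positive-definite matrices, which is exactly where $\bI_q(\btheta)$ lives by assumption~\ref{item:FisherInformation}.

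Next I would apply the continuous mapping theorem to the composition to obtain $\bI_q(\hat{\btheta}_n)^{1/2}\overset{P}{\to}\bI_q(\btheta_0)^{1/2}$, and therefore, using continuity of matrix inversion at the nonsingular $\bI_q(\btheta_0)$, also $\bI_q(\hat{\btheta}_n)^{1/2}\bI_q(\btheta_0)^{-1/2}\overset{P}{\to}\mathbb{I}_d$. The decomposition I would then write is
$$\sqrt{n}\cdot\bI_q(\hat{\btheta}_n)^{1/2}(\hat{\btheta}_n-\btheta_0)\equals\left[\bI_q(\hat{\btheta}_n)^{1/2}\bI_q(\btheta_0)^{-1/2}\right]\cdot\left[\sqrt{n}\cdot\bI_q(\btheta_0)^{1/2}(\hat{\btheta}_n-\btheta_0)\right].$$
Here the second bracket converges in law to $\caln(\mathbf{0},\mathbb{I}_d)$ by Theorem~\ref{thm:asymptotic_efficiency_MLE}, while the first converges in probability to the constant matrix $\mathbb{I}_d$. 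Slutsky's theorem in its multivariate form, namely that $A_n\overset{P}{\to}A$ (a constant) and $\bZ_n\overset{L}{\to}\bZ$ imply $A_n\bZ_n\overset{L}{\to}A\bZ$, then yields $\mathbb{I}_d\cdot\caln(\mathbf{0},\mathbb{I}_d)=\caln(\mathbf{0},\mathbb{I}_d)$, which is precisely~(\ref{eq:MLE_convergence_in_law_thetahat}).

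The main obstacle is the auxiliary continuity claim for $\bI_q(\cdot)^{1/2}$: passing continuity through the expectation in~(\ref{eq:fisher_information_integration}) requires a dominated-convergence-type argument, or an appeal to compactness of $\Omega$ from assumption~\ref{item:openness}, to guarantee that the integral is jointly continuous in $\btheta$, and one must verify that the matrix square root and inverse are continuous precisely on the positive-definite cone secured by assumption~\ref{item:FisherInformation}. Everything past that point is a routine assembly of the continuous mapping and Slutsky theorems, so I would keep those steps brief.
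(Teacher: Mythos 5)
The paper does not prove this corollary at all---it simply cites \citet{wasserman2004all}, Theorem 9.18---and your argument is precisely the standard proof of that cited result: factor out $\bI_q(\btheta_0)^{-1/2}$, use consistency plus continuity of $\btheta\mapsto\bI_q(\btheta)^{1/2}$ and the continuous mapping theorem (Proposition~\ref{prop:convergent_continuous_mappings}) to get $\bI_q(\hat{\btheta}_n)^{1/2}\bI_q(\btheta_0)^{-1/2}\overset{P}{\to}\mathbb{I}_d$, and finish with multivariate Slutsky applied to the conclusion of Theorem~\ref{thm:asymptotic_efficiency_MLE}. Your proof is correct, and your flagged caveat---that passing continuity through the expectation in~(\ref{eq:fisher_information_integration}) needs a domination condition not explicitly listed among~\ref{item:parameter_dependence}--\ref{item:FisherInformation}---is a fair one, though it is at the same level of rigor the paper itself adopts by deferring to the reference.
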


\section{Fisher Information Ratio as an Upper Bound}
\label{sec:theoretical_analysis}

In this section, we give our main theoretical analysis to relate FIR to the asymptotic distribution of the parameter log-likelihood ratio. Using the established relationship, we then show that FIR can be viewed as an asymptotic upper-bound of the expected variance of the loss function.

\subsection{Asymptotic Distribution of MLE-based Classifier}
\label{subsec:asymptotic_distribution}
Recall that the estimated parameter $\hat{\btheta}_n$ is obtained from a given proposal distribution $q(\bx)$. The log-likelihood ratio function, at a given pair $(\bx,y)$, is defined as:
\begin{equation}
\label{eq:likelihood_ratio}
\ell(\hat{\btheta}_n;\bx,y) \enskip - \enskip \ell(\btheta_0;\bx,y).
\end{equation}
This ratio can be viewed as an example of the classification loss function whose expectation with respect to the test joint distribution of $\bx$ and $y$, results in the \emph{discrepancy} between the true conditional $p(y|\bx,\btheta_0)$ and MLE conditional $p(y|\bx,\hat{\btheta}_n)$~\citep{murata1994network}. Here, we analyze this measure asymptotically as $(n\to\infty)$. Primarily, note that based on continuity of the log-likelihood function \ref{item:differentiability} and consistency of MLE (Theorem~\ref{thm:MLE_consistency}), equation~(\ref{eq:likelihood_ratio}) converges in probability to zero for any $(\bx,y)$.  

Furthermore, equation~(\ref{eq:likelihood_ratio}) is dependent on both the true marginal $p(\bx)$ (through the test pairs, where it should be evaluated) and the proposal marginal $q(\bx)$ (through the MLE $\hat{\btheta}_n$). In the classification context, \citet{Zhang2000} claimed that the expected value of this ratio with respect to both marginals converges to $\mbox{tr}[\bI_q(\btheta_0)^{-1}\bI_p(\btheta_0)]$ with the convergence rate equal to unity. In the scalar case, $\mbox{tr}[\bI_q(\btheta_0)^{-1}\bI_p(\btheta_0)]$ is equal to the ratio of the Fisher information of the true and proposal distributions, the reason why it is sometimes referred to as the \emph{Fisher information ratio}~\citep{settles2008analysis}. This objective have been widely studied in linear and non-linear regression problems~\citep{fedorov1972theory,mackay1992information,murata1994network,cohn1996neural,fukumizu2000statistical}. However, it is not as fully analyzed in classification.

\citet{Zhang2000} and many papers following them \citep{Hoi2006,settles2008analysis,hoi2009batch}, used this function as an \emph{asymptotic} objective in active learning to be optimized with respect to the proposal $q$. Here, we show that this objective can also be viewed as an \emph{upper bound} for the expected variance of the asymptotic distribution of~(\ref{eq:likelihood_ratio}).

First, we investigate the asymptotic distribution of the log-likelihood ratio in two different cases:
\begin{theorem}
\label{thm:asymptotic_distribution}
If the assumptions~\ref{item:parameter_dependence} to~\ref{item:FisherInformation} hold, then, at any given $(\bx,y)\in X\times\{1,...,c\}$:
\begin{enumerate}[label=(\Roman*)]
\item In case $\nabla_{\btheta}\ell(\btheta_0;\bx,y)\neq\mathbf{0}$, the log-likelihood ratio follows an asymptotic normality with convergence rate equal to $1/2$. More specifically
\begin{equation}
\label{eq:asymptotic_distribution}
\sqrt{n}\cdot\bigg(\ell(\hat{\btheta}_n;\bx,y) -  \ell(\btheta_0;\bx,y)\bigg) \enskip \overset{L}{\to} \enskip \caln\bigg(0, \mbox{\normalfont{tr}}\big[\nabla_{\btheta}\ell(\btheta_0;\bx,y)\cdot\nabla_{\btheta}^\top\ell(\btheta_0;\bx,y)\cdot\bI_q(\btheta_0)^{-1}\big]\bigg).
\end{equation}
\item In case $\nabla_{\btheta}\ell(\btheta_0;\bx,y)=\mathbf{0}$ and $\nabla_{\btheta}^2\ell(\btheta_0;\bx,y)$ is non-singular, the asymptotic distribution of the log-likelihood ratio is a mixture of first-order Chi-square distributions, and the convergence rate is one. More specifically:
\begin{equation}
\label{eq:asymptotic_distribution_zeroder}
n\cdot \bigg(\ell(\hat{\btheta}_n;\bx,y) \enskip - \enskip \ell(\btheta_0;\bx,y)\bigg) \enskip \overset{L}{\to} \enskip \sum_{i=1}^d \lambda_{i}\cdot\chi_1^2 
\end{equation}
where $\lambda_i$'s are eigenvalues of $\bI_q(\btheta_0)^{-1/2}\nabla_{\btheta}^2\ell(\btheta_0;\bx,y)\bI_q(\btheta_0)^{-1/2}$.
\end{enumerate}
\end{theorem}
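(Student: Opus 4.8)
The plan is to expand the log-likelihood ratio to second order about $\btheta_0$ and then read off the limiting law from the known asymptotics of $\hat{\btheta}_n$. Since $\ell(\cdot;\bx,y)$ is scalar-valued and of class $\calc^3(\Omega)$ by assumption~\ref{item:differentiability}, Taylor's theorem with the mean-value form of the remainder gives, for some $\tilde{\btheta}_n$ on the segment joining $\hat{\btheta}_n$ and $\btheta_0$,
\[
\ell(\hat{\btheta}_n;\bx,y)-\ell(\btheta_0;\bx,y)=\nabla_{\btheta}^\top\ell(\btheta_0;\bx,y)\,(\hat{\btheta}_n-\btheta_0)+\tfrac{1}{2}(\hat{\btheta}_n-\btheta_0)^\top\nabla_{\btheta}^2\ell(\tilde{\btheta}_n;\bx,y)\,(\hat{\btheta}_n-\btheta_0).
\]
Writing $\bZ_n:=\sqrt{n}(\hat{\btheta}_n-\btheta_0)$, I would record two facts used throughout: first, $\bZ_n\overset{L}{\to}\bZ\sim\caln(\mathbf{0},\bI_q(\btheta_0)^{-1})$ by Theorem~\ref{thm:asymptotic_efficiency_MLE}, so in particular $\bZ_n=O_P(1)$; second, $\tilde{\btheta}_n\overset{P}{\to}\btheta_0$, which follows from consistency (Theorem~\ref{thm:MLE_consistency}) since $\tilde{\btheta}_n$ is squeezed between $\hat{\btheta}_n$ and $\btheta_0$, whence continuity of the Hessian yields $\nabla_{\btheta}^2\ell(\tilde{\btheta}_n;\bx,y)\overset{P}{\to}\nabla_{\btheta}^2\ell(\btheta_0;\bx,y)$. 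The split into the two cases is then dictated by whether the linear term survives.

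For case~(I), I would multiply the expansion by $\sqrt{n}$. The linear term becomes $\nabla_{\btheta}^\top\ell(\btheta_0;\bx,y)\,\bZ_n$, a fixed linear functional of $\bZ_n$, so by continuity it converges in law to $\caln\big(0,\nabla_{\btheta}^\top\ell(\btheta_0;\bx,y)\bI_q(\btheta_0)^{-1}\nabla_{\btheta}\ell(\btheta_0;\bx,y)\big)$, a nondegenerate limit because $\nabla_{\btheta}\ell(\btheta_0;\bx,y)\neq\mathbf{0}$ and $\bI_q(\btheta_0)^{-1}$ is positive definite by assumption~\ref{item:FisherInformation}; the cyclic property of the trace rewrites this scalar variance as $\mathrm{tr}[\nabla_{\btheta}\ell\,\nabla_{\btheta}^\top\ell\,\bI_q(\btheta_0)^{-1}]$, matching~(\ref{eq:asymptotic_distribution}). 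The remainder, after the $\sqrt{n}$ scaling, equals $\tfrac{1}{2\sqrt{n}}\bZ_n^\top\nabla_{\btheta}^2\ell(\tilde{\btheta}_n;\bx,y)\bZ_n$; the quadratic form is $O_P(1)$ and the prefactor is $o(1)$, so this is $o_P(1)$, and Slutsky's theorem delivers the claimed normal limit.

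For case~(II), $\nabla_{\btheta}\ell(\btheta_0;\bx,y)=\mathbf{0}$ annihilates the linear term, so the $\sqrt{n}$-scaled ratio would be degenerate at $0$; this motivates the faster rate $n$, which leaves $n(\ell(\hat{\btheta}_n)-\ell(\btheta_0))=\tfrac{1}{2}\bZ_n^\top\nabla_{\btheta}^2\ell(\tilde{\btheta}_n;\bx,y)\bZ_n$. Because $\bZ_n\overset{L}{\to}\bZ$ and $\nabla_{\btheta}^2\ell(\tilde{\btheta}_n;\bx,y)$ converges in probability to the \emph{constant} matrix $\nabla_{\btheta}^2\ell(\btheta_0;\bx,y)$, joint convergence holds by Slutsky, and the continuous map $(\mathbf{z},\mathbf{A})\mapsto\tfrac12\mathbf{z}^\top\mathbf{A}\mathbf{z}$ gives the limit $\tfrac12\bZ^\top\nabla_{\btheta}^2\ell(\btheta_0;\bx,y)\bZ$. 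Writing $\bZ=\bI_q(\btheta_0)^{-1/2}\bW$ with $\bW\sim\caln(\mathbf{0},\mathbb{I}_d)$ turns this into $\tfrac12\bW^\top\mathbf{M}\bW$ where $\mathbf{M}=\bI_q(\btheta_0)^{-1/2}\nabla_{\btheta}^2\ell(\btheta_0;\bx,y)\bI_q(\btheta_0)^{-1/2}$; diagonalizing the symmetric $\mathbf{M}=\mathbf{U}\bLambda\mathbf{U}^\top$ and using the rotational invariance of $\bW$ expresses the limit as a combination of independent first-order chi-squares weighted by the eigenvalues $\lambda_i$ of $\mathbf{M}$, giving the mixture~(\ref{eq:asymptotic_distribution_zeroder}).

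The main obstacle is the careful treatment of the mean-value point $\tilde{\btheta}_n$: the Hessian is evaluated at a random argument, so one cannot simply substitute $\btheta_0$ and invoke the continuous mapping theorem on a merely convergent-in-law sequence. The resolution is to split $\bZ_n^\top\nabla_{\btheta}^2\ell(\tilde{\btheta}_n;\bx,y)\bZ_n=\bZ_n^\top\nabla_{\btheta}^2\ell(\btheta_0;\bx,y)\bZ_n+\bZ_n^\top\big[\nabla_{\btheta}^2\ell(\tilde{\btheta}_n;\bx,y)-\nabla_{\btheta}^2\ell(\btheta_0;\bx,y)\big]\bZ_n$ and to argue that the second piece is $o_P(1)$ because the $O_P(1)$ factor $\bZ_n$ multiplies a matrix tending to $\mathbf{0}$ in probability. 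This is precisely where continuity of the Hessian (assumption~\ref{item:differentiability}) and consistency of the MLE (Theorem~\ref{thm:MLE_consistency}) are both indispensable, and it is the one step where a naive limiting argument would be unjustified.
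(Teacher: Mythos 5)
Your proof is correct and follows the same essential route as the paper's: the paper simply invokes Theorem~\ref{thm:asymptotic_efficiency_MLE} together with the first- and second-order multivariate Delta methods (Theorems~\ref{thm:multivariate_delta_method} and~\ref{thm:delta_method_zeroder}), and what you have done is inline the proof of those Delta methods. The one technical difference is in how the Taylor remainder is controlled: you use the second-order mean-value form with the Hessian evaluated at a random intermediate point $\tilde{\btheta}_n$, killed by consistency of the MLE plus continuity of the Hessian, whereas the paper's Appendix~\ref{app:proof_2nd_Delta_Method} keeps the exact second-order polynomial at $\btheta_0$ and bounds a \emph{third}-order Lagrange remainder by $M\|\btheta-\btheta_0\|^3$ using compactness of $\Omega$ and the $\calc^3$ assumption. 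Your treatment is slightly more economical (it needs only a continuous Hessian rather than bounded third derivatives), at the cost of the mean-value-point bookkeeping you correctly flag as the delicate step; both arguments are valid under the paper's hypotheses. One point worth noting: your computation in case~(II) yields the limit $\tfrac12\sum_{i=1}^d\lambda_i\chi_1^2$, with the factor $\tfrac12$, which agrees with the paper's own derivation in Appendix~\ref{app:proof_2nd_Delta_Method} (and with the variance formula~(\ref{eq:var_delta_method_zeroder})) but not with the displayed statement~(\ref{eq:asymptotic_distribution_zeroder}), which drops the $\tfrac12$; that appears to be a typo in the paper rather than an error on your side, so you should not claim your expression matches~(\ref{eq:asymptotic_distribution_zeroder}) verbatim.
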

\begin{proof}
Due to assumptions~\ref{item:parameter_dependence} to~\ref{item:openness}, Theorem~\ref{thm:asymptotic_efficiency_MLE} holds and therefore we have $\sqrt{n}\cdot(\hat{\btheta}_n - \btheta_0)\overset{L}{\to}\caln(\mathbf{0},\bI_q(\btheta_0)^{-1})$. The rest of the proof is based on the Delta method in the two modes described in Appendix~\ref{sec:statistical_background} (Theorems~\ref{thm:multivariate_delta_method} and~\ref{thm:delta_method_zeroder}):
\begin{enumerate}[label=(\Roman*)]
\item $\nabla_{\btheta}\ell(\btheta_0;\bx,y)\neq\mathbf{0}$ :

Since the expected log-likelihood function, evaluated at a given pair $(\bx,y)$, is assumed to be continuously differentiable \ref{item:differentiability} and that $\nabla_{\btheta}\ell(\btheta_0;\bx,y)\neq\mathbf{0}$, we can apply Theorem~\ref{thm:multivariate_delta_method} to $\ell(\hat{\btheta}_n;\bx,y) - \ell(\btheta_0;\bx,y)$ to write:
\begin{equation}
\sqrt{n}\cdot \bigg(\ell(\hat{\btheta}_n;\bx,y) \enskip - \enskip \ell(\btheta_0;\bx,y) \bigg) \enskip\overset{L}{\to}\enskip \caln\bigg(0 \enskip,\enskip \nabla_{\btheta}^\top\ell(\btheta_0;\bx,y)\cdot\bI_q(\btheta_0)^{-1}\cdot \nabla_{\btheta}\ell(\btheta_0;\bx,y) \bigg),
\end{equation}
where the scalar variance can also be written in a trace format.
\item $\nabla_{\btheta}\ell(\btheta_0;\bx,y)=\mathbf{0}$ and $\nabla_{\btheta}^2\ell(\btheta_0;\bx,y)$ non-singular :

In this case, the conditions in Theorem~\ref{thm:delta_method_zeroder} are satisfied (with $\bSigma=\bI_q(\btheta_0)^{-1}$ and $g(\btheta)=\ell(\btheta;\bx,y)$), and therefore we can directly write~(\ref{eq:asymptotic_distribution_zeroder}) from equations~(\ref{eq:delta_method_zeroder}).
\end{enumerate}
\end{proof}
Theorem~\ref{thm:asymptotic_distribution} regards the log-likelihood ratio~(\ref{eq:likelihood_ratio}) evaluated at any arbitrary pair $(\bx,y)$. Note that if we consider the training pairs in $\call_n$, which are used to obtain $\hat{\btheta}_n$, it is known that the ratio evaluated at the training set converges to a single first-degree Chi-square distribution, that is
\begin{equation}
\label{eq:asymptotic_distribution_training}
\ell(\hat{\btheta}_n;\call_n) \enskip - \enskip \ell(\btheta_0;\call_n) \enskip \overset{L}{\to} \enskip \frac{1}{2} \chi_1^2
\end{equation}

Theorem~\ref{thm:asymptotic_distribution} implies that variance of the asymptotic distribution of the log-likelihood ratio in case (I) is $\mbox{\normalfont{tr}}\big[\nabla_{\btheta}\ell(\btheta_0;\bx,y)\cdot\nabla_{\btheta}^\top\ell(\btheta_0;\bx,y)\cdot\bI_q(\btheta_0)^{-1}\big]$, whereas in case (II), from Theorem~\ref{thm:delta_method_zeroder} (see Appendix~\ref{sec:statistical_background}), the variance is $\frac{1}{2}\left\|\bI_q(\btheta_0)^{-1/2}\nabla_{\btheta}^2\ell(\btheta_0;\bx,y)\bI_q(\hat{\btheta}_n)^{-1/2} \right\|_F^2$. Therefore, it is evident that the variance of the log-likelihood ratio at any $(\bx,y)$ is reciprocally dependent on the training Fisher information. From this point of view, one can set the training distribution such that it leads to a Fisher information that minimizes this variance. Unless the parameter and hence the Fisher information is univariate, it is not clear what objective to optimize with respect to $q$ such that the resulting Fisher information minimizes the variance. 
%
%
%
\subsection{Establishing the Upper Bound}
\label{subsec:upper_bound}
In the next theorem, we show that the Fisher information ratio 
, $\mbox{tr}\left[ \bI_q(\btheta_0)^{-1}\bI_p(\btheta_0)\right]$, is a reasonable candidate objective to minimize in order to get a training distribution $q$ for the multivariate case:
\begin{theorem}
\label{thm:FIR_variance_upper_bound}
If the assumptions~\ref{item:parameter_dependence} to~\ref{item:FisherInformation} hold, then:
\begin{equation}
\label{eq:FIR_variance_upper_bound}
\mathbb{E}_p\left[\mbox{\normalfont Var}_q\left(\lim_{n\to\infty}\sqrt{n}\cdot[\ell(\hat{\btheta}_n;\bx,y)-\ell(\btheta_0;\bx,y)]\right) \right] \enskip \leq \enskip \mbox{\normalfont tr}\bigg[\bI_q(\btheta_0)^{-1}\bI_p(\btheta_0)\bigg].
\end{equation}
The equality holds when the set of pairs $(\bx,y)$ where we have zero score function at $\btheta_0$, i.e. $\nabla_{\btheta}\ell(\btheta_0;\bx,y)=\mathbf{0}$, has measure zero under the true joint distribution $P_{\btheta_0}$ in $X\times\{1,...,c\}$.
\end{theorem}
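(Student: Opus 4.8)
The plan is to evaluate the left-hand side pointwise in $(\bx,y)$ by reading off the asymptotic variance from Theorem~\ref{thm:asymptotic_distribution}, then integrate against the test joint $P_{\btheta_0}$ and compare with $\mbox{tr}[\bI_q(\btheta_0)^{-1}\bI_p(\btheta_0)]$. First, I would fix a pair $(\bx,y)$ and identify $\mbox{Var}_q\big(\lim_{n\to\infty}\sqrt{n}\,[\ell(\hat{\btheta}_n;\bx,y)-\ell(\btheta_0;\bx,y)]\big)$. On the set $A:=\{(\bx,y): \nabla_{\btheta}\ell(\btheta_0;\bx,y)\neq\mathbf{0}\}$ (case (I) of Theorem~\ref{thm:asymptotic_distribution}), the limiting law is $\caln\big(0,\,\mbox{tr}[\nabla_{\btheta}\ell(\btheta_0;\bx,y)\nabla_{\btheta}^\top\ell(\btheta_0;\bx,y)\bI_q(\btheta_0)^{-1}]\big)$, so the variance equals the scalar quadratic form $\nabla_{\btheta}^\top\ell(\btheta_0;\bx,y)\,\bI_q(\btheta_0)^{-1}\,\nabla_{\btheta}\ell(\btheta_0;\bx,y)$. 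On the complement $A^c=\{\nabla_{\btheta}\ell(\btheta_0;\bx,y)=\mathbf{0}\}$ I would argue the variance is zero: by case (II), $n\,[\ell(\hat{\btheta}_n;\bx,y)-\ell(\btheta_0;\bx,y)]$ converges in law to the finite mixture $\sum_i\lambda_i\chi_1^2$, whence $\sqrt{n}\,[\ell-\ell]=\tfrac{1}{\sqrt{n}}\cdot n\,[\ell-\ell]\overset{P}{\to}0$, a degenerate limit of zero variance.

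Next I would take the test-joint expectation $\mathbb{E}_p[\cdot]$ of this pointwise variance and split $X\times\{1,\dots,c\}$ into $A$ and $A^c$. Only $A$ contributes, giving $\mathbb{E}_p\big[\mbox{Var}_q(\cdots)\big]=\int_{A}\nabla_{\btheta}^\top\ell\,\bI_q(\btheta_0)^{-1}\nabla_{\btheta}\ell\;dP_{\btheta_0}$. To reach the right-hand side, I would use $\bI_p(\btheta_0)=\mathbb{E}_p[\nabla_{\btheta}\ell\,\nabla_{\btheta}^\top\ell]$ (from equation~(\ref{eq:score_loglikelihood}) and the definition of $\bI_p$), together with linearity of expectation and the linearity/cyclicity of the trace, to write $\mbox{tr}[\bI_q(\btheta_0)^{-1}\bI_p(\btheta_0)]=\mathbb{E}_p\big[\mbox{tr}[\nabla_{\btheta}\ell\,\nabla_{\btheta}^\top\ell\,\bI_q(\btheta_0)^{-1}]\big]=\int_{X\times\{1,\dots,c\}}\nabla_{\btheta}^\top\ell\,\bI_q(\btheta_0)^{-1}\nabla_{\btheta}\ell\;dP_{\btheta_0}$, now integrated over the \emph{whole} space.

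Comparing the two, the claimed bound follows because $\bI_q(\btheta_0)^{-1}$ is positive definite by assumption~\ref{item:FisherInformation}, so the integrand $\nabla_{\btheta}^\top\ell\,\bI_q(\btheta_0)^{-1}\nabla_{\btheta}\ell$ is nonnegative everywhere; hence restricting the integral from the full space to $A$ can only decrease it, yielding $\mathbb{E}_p[\mbox{Var}_q(\cdots)]\leq\mbox{tr}[\bI_q(\btheta_0)^{-1}\bI_p(\btheta_0)]$. The gap is exactly $\int_{A^c}\nabla_{\btheta}^\top\ell\,\bI_q(\btheta_0)^{-1}\nabla_{\btheta}\ell\;dP_{\btheta_0}$ over the zero-score set, so equality holds when $A^c$ is $P_{\btheta_0}$-null, which is the stated condition.

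The main obstacle I anticipate is the careful treatment of the zero-score set $A^c$. Theorem~\ref{thm:asymptotic_distribution}(II) only covers the nonsingular-Hessian subcase and only characterizes the $n$-rate limit, so I must (i) deduce from it that the $\sqrt{n}$-rate quantity collapses to a degenerate limit, and (ii) dispose of the uncovered subcase ($\nabla_{\btheta}\ell=\mathbf{0}$ with singular Hessian). The cleanest route for (ii) is a direct Taylor expansion: when the score vanishes, the leading term is the quadratic $\tfrac12(\hat{\btheta}_n-\btheta_0)^\top\nabla_{\btheta}^2\ell(\btheta_0;\bx,y)(\hat{\btheta}_n-\btheta_0)=O_p(1/n)$ since $\hat{\btheta}_n-\btheta_0=O_p(1/\sqrt{n})$ by Theorem~\ref{thm:asymptotic_efficiency_MLE}, so $\sqrt{n}\,[\ell-\ell]=O_p(1/\sqrt{n})\overset{P}{\to}0$ irrespective of the Hessian's rank. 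I would also make sure to justify that $\mathbb{E}_p$ acts on the \emph{fixed} test pair $(\bx,y)$ (so the left-hand side is an honest integral of the pointwise asymptotic variance, not an interchange of limit and expectation) and that the $A$/$A^c$ decomposition is measure-theoretically valid.
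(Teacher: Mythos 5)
Your proposal is correct and follows essentially the same route as the paper: read off the pointwise asymptotic variance from Theorem~\ref{thm:asymptotic_distribution}, observe that it vanishes on the zero-score set and equals the quadratic form $\nabla_{\btheta}^\top\ell\,\bI_q(\btheta_0)^{-1}\nabla_{\btheta}\ell$ elsewhere, integrate against $P_{\btheta_0}$, and compare with $\mbox{tr}[\bI_q(\btheta_0)^{-1}\bI_p(\btheta_0)]$ via $\bI_p(\btheta_0)=\mathbb{E}_p[\nabla_{\btheta}\ell\,\nabla_{\btheta}^\top\ell]$. Two points of difference are worth noting. First, you are more careful than the paper on the zero-score set: the paper invokes the rate-one convergence of case (II) of Theorem~\ref{thm:asymptotic_distribution}, which formally requires a non-singular Hessian, whereas your direct Taylor-expansion argument ($\sqrt{n}\,[\ell(\hat{\btheta}_n;\bx,y)-\ell(\btheta_0;\bx,y)]=O_p(1/\sqrt{n})$ whenever the score vanishes, regardless of the Hessian's rank) closes that small gap; you also work pointwise in $(\bx,y)$ rather than writing the variance as a probability mixture over $R_0$ and its complement, which is the cleaner reading. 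Second, your explicit identification of the slack as $\int_{A^c}\nabla_{\btheta}^\top\ell\,\bI_q(\btheta_0)^{-1}\nabla_{\btheta}\ell\;dP_{\btheta_0}$ actually shows more than you claim: since $\nabla_{\btheta}\ell(\btheta_0;\bx,y)=\mathbf{0}$ on $A^c$, that integrand is identically zero there, so the bound in~(\ref{eq:FIR_variance_upper_bound}) is in fact always an equality --- the measure-zero condition in the theorem (and in the paper's proof) is sufficient but not necessary. This does not invalidate anything; it just means the stated inequality and its equality condition are weaker than what your (and, implicitly, the paper's) argument delivers.
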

\begin{proof}
Note that, from Theorem~\ref{thm:asymptotic_distribution}, when $\nabla_{\btheta}\ell(\btheta_0;\bx,y)=0$ the convergence rate of the log-likelihood ratio is one and therefore it is of $O_p\left(\frac{1}{n}\right)$. Thus, in this case we have $\sqrt{n}\cdot[\ell(\hat{\btheta}_n;\bx,y)-\ell(\btheta_0;\bx,y)]=O_p\left(\frac{1}{\sqrt{n}}\right)$ and it converges to zero in probability (and in law). Now, define the region $R_0\subseteq X\times\{1,...,c\}$ by
\begin{equation}
R_0 \enskip := \enskip \left\{(\bx,y)| \nabla_{\btheta}\ell(\btheta_0;\bx,y)=0 \right\}
\end{equation}
Variance of the asymptotic distribution of $\sqrt{n}\cdot[\ell(\hat{\btheta}_n;\bx,y)-\ell(\btheta_0;\bx,y)]$, considering both cases $\nabla_{\btheta}\ell(\btheta_0;\bx,y)=0$ (with probability $P_{\btheta_0}(R_0)$) and $\nabla_{\btheta}\ell(\btheta_0;\bx,y)\neq0$ (with probability $1-P_{\btheta_0}(R_0)$), can be written as:
\begin{align}
\label{eq:FIR_variance_upper_bound_noexp}
\mbox{Var}&\left(\lim_{n\to\infty}\sqrt{n}\cdot[\ell(\hat{\btheta}_n;\bx,y)-\ell(\btheta_0;\bx,y)]\right) \nonumber \\
&\equals [1-P_{\btheta_0}(R_0)]\cdot \mbox{\normalfont tr}\big[\nabla_{\btheta}\ell(\btheta_0;\bx,y)\cdot\nabla_{\btheta}^\top\ell(\btheta_0;\bx,y)\cdot\bI_q(\btheta_0)^{-1}\big] \enplus P_{\btheta_0}(R_0)\cdot0 \nonumber\\ 
&\leq \enskip \mbox{\normalfont tr}\big[\nabla_{\btheta}\ell(\btheta_0;\bx,y)\cdot\nabla_{\btheta}^\top\ell(\btheta_0;\bx,y)\cdot\bI_q(\btheta_0)^{-1}\big] 
\end{align}
Taking the expectation of both sides with respect to the true joint, gives the inequality~(\ref{eq:FIR_variance_upper_bound}). If the set of pairs $(\bx,y)$ where $\nabla_{\btheta}\ell(\btheta_0;\bx,y)=\mathbf{0}$ form a zero measure set under $P_{\btheta_0}$, then $P_{\btheta_0}(R_0)=0$ and we get equality in~(\ref{eq:FIR_variance_upper_bound_noexp}) and hence an equality in~(\ref{eq:FIR_variance_upper_bound}).
\end{proof}
Theorem~\ref{thm:FIR_variance_upper_bound} implies that minimizing the Fisher information ratio with respect to $q$, is indeed the upper-bound minimization of the expected variance of the asymptotic distribution of the log-likelihood ratio.

\section{Fisher Information Ratio in Practice}
\label{subsec:inpractice}
In this section, we explain how inequality~(\ref{eq:FIR_variance_upper_bound}) can be utilized in practice as an objective function for active learning. The left-hand-side is the objective that is more reasonable to minimize from classification point of view. However, its optimization is intractable and FIR-based methods approximate it by its upper-bound minimization. Querying can be done with this objective by first learning the optimal proposal distribution $q$ that minimizes the left-hand-side of inequality~(\ref{eq:FIR_variance_upper_bound}) and then drawing the queries from this optimal distribution:
\begin{subequations}
\label{eq:FIRAL}
    \begin{align}
    \label{eq:q_optimization} 
    q^* \enskip&=\enskip \argmin_{q}\enskip \mbox{tr}[\bI_q(\btheta_0)^{-1}\bI_p(\btheta_0)] \\
    \label{eq:Xq_sampling}
    X_q \enskip&\sim\enskip q^*(\bx)
    \end{align}
\end{subequations}
where $X_q$ is the set of queries whose samples are drawn from $q^*$. Note that in~(\ref{eq:FIRAL}), due to the sampling process, $X_q$ cannot be deterministically determined even by fixing all the other parameters leading to a fixed query distribution $q^*$ (ignoring the uncertainties in the numerical optimization processes). Hence this setting is sometimes called \emph{probabilistic} active learning. Notice that in pool active learning,  $q$ should be constrained to be a PMF over the unlabeled pool from which the queries are to be chosen. Relaxing $q$ to continuous distributions leads to \emph{synthetic} active learning, since each time an unseen sample will be \emph{synthesized} by sampling from $q$. We will see later that in some pool-based applications, the \emph{objective functional} of $q$ is approximated as a \emph{set function} of $X_q$, and therefore a combinatorial optimization is performed directly with respect to the query set. 

As mentioned, $q^*$ is an upper-bound minimization of the expected asymptotic loss variance. Moreover, there are a number of unknown variables involved in FIR objective, such as $p$ and $\btheta_0$. In practice, estimations of these unknown variables are used in the optimization process for active learning. Therefore, although the derivations in the previous section (Theorem~\ref{thm:FIR_variance_upper_bound}) are made based on one querying of infinitely many samples, in active learning a finite sample approximation of the cost function is used in an iterative querying process. As the  number of querying iterations in active learning increases, the parameter estimates get more accurate and so does the approximate FIR objective. In the next section, we show that under certain assumptions the optimization with respect to proposal distribution in each iteration is yet another upper-bound minimization similar to~(\ref{eq:FIR_variance_upper_bound}). More specifically, Remark 6 (see Section~\ref{subsubsec:replacing_theta}) shows that although the proposal distribution is optimized separately in each iteration of an FIR-based active learning algorithm, minimizing the approximate FIR at each iteration is still an upper-bound minimization of the original cost function (i.e. left-hand-side of~(\ref{eq:FIR_variance_upper_bound})).

Algorithm~\ref{alg:AL} shows steps of a general discriminative classification with active learning. We assume an initial training set $\call_{n_0}=\{(\bx_1,y_1),...,(\bx_{n_0},y_{n_0}) \}$ is given based on which an initial MLE $\hat{\btheta}_{n_0}$ can be obtained. The initial MLE enables us to approximate the active learning objective function and therefore select queries for building the new training set. After obtaining the query set $X_q$, for each individual sample $\bx\in X_q$, we request its labels $y(\bx)$ from the oracle (or equivalently, sample it from the true conditional, $y(\bx)\sim p(y|\bx,\btheta_0)$). These pairs are then added into the training set to get $\call_{n_1}$, which in turn, is used to update the MLE to $\hat{\btheta}_{n_1}$. Size of the new training data is $n_1=n_0 + |X_q|$. This procedure can be done repeatedly for a desirable number of iterations $i_{max}$. All different techniques that we discuss in this section, differ only in line~\ref{line:generate_queries} and the rest of the steps are common between them. Each active learning algorithm $\cala$ takes the current estimate of the parameter $\hat{\btheta}_{n_{i-1}}$ possibl together with the unlabeled set of samples $X_p$, and generate a set of queries $X_q$ to be labeled for the next iteration.

\setcounter{algocf}{-1}
\begin{algorithm}[!t]
\caption{Classification with Active Learning}
\label{alg:AL}
\textbf{Inputs:} The initial training set $\call_{n_0}$; number of querying iterations $i_{max}$\\
\textbf{Outputs:} The trained classifier with MLE $\hat{\btheta}_{n_{i_{max}}}$\\[-5pt]
\noindent\rule{.5\columnwidth}{.5pt}\\
\SetNlSkip{1em}
\DontPrintSemicolon
\tcc{\footnotesize Initializations}
\nl$\hat{\btheta}_{n_0}\gets\argmax_{\btheta}\ell(\btheta;\call_{n_0})$\;
\tcc{\footnotesize  Starting the Iterations}
\nl\For{$i=1\to i_{max}$}{
\tcc{\footnotesize Generating the query set by optimizing a querying objective}
\nl$X_q \enskip\gets\enskip \cala(\call_{n_{i-1}}, \hat{\btheta}_{n_{i-1}})$ \label{line:generate_queries}\;
\tcc{\footnotesize Request the queries' labels from the oracle}
\nl$y(\bx) \enskip \sim\enskip p(y|\bx,\btheta_0) \quad \forall\bx\in X_q$\;
\tcc{\footnotesize Taking care of indexing}
\nl$n_i \enskip \gets \enskip n_{i-1} + |X_q|$\;
\tcc{\footnotesize Update the training set and update MLE}
\nl$\call_{n_i}\gets\call_{n_{i-1}}\cup\left\{\bigcup_{\bx\in X_q}(\bx,y(\bx))\right\}$\;
\nl$\hat{\btheta}_{n_i}\gets \argmax_{\btheta}\ell(\btheta;\call_{n_i})$\;
}
\nl\textbf{return} $\hat{\btheta}_{n_{i_{max}}}$\;
\end{algorithm}

In our analysis in the subsequent sections, we focus on a specific \emph{querying iteration} indexed by $i$ (as a positive integer). For simplicity, we replace $n_{i-1}$ and $n_i$ (size of the training data set before and after iteration $i$) by $n'$ and $n$, respectively. Hence, iteration $i$ consists of using the available parameter estimate, $\hat{\btheta}_{n'}$ obtained through the current training data set $\call_{n'}$, to generate queries using a given querying algorithm $\cala(\hat{\btheta}_{n'},\call_{n'})$ and then update the classifier's parameter estimate accordingly to $\hat{\btheta}_n$. 

In what follows, we first discuss practical issues in using FIR in query selection (section~\ref{subsec:practical_issues}) and then review existing algorithms based on this objective (section~\ref{subsec:algorithms}).

\subsection{Practical Issues}
\label{subsec:practical_issues}
The main difficulties consist of (1) having unknown variables in the objective, such as the test marginal, $p(\bx)$, and the true parameter, $\btheta_0$, and (2) lack of closed form for Fisher information matrices for most cases. In the next two sections, we review different hacks and solutions that have been proposed to resolve these issues.

\subsubsection{Replacing $\btheta_0$ by $\hat{\btheta}_{n'}$}
\label{subsubsec:replacing_theta}
Since $\btheta_0$ is not known, the simplest idea is to replace it by the current parameter estimate, that is $\hat{\btheta}_{n'}$~\citep{fukumizu2000statistical,settles2008analysis,Hoi2006,hoi2009batch,chaudhuri2015convergence}. Clearly, as the algorithm keeps running the iterations ($n'$ increases), the approximate objective (which contains $\btheta_{n'}$ instead of $\btheta_0$) gets closer to the original objective. This is due to the regularity and invertibility conditions assumed for the log-likelihood function and Fisher information matrices, respectively. Moreover, \citet{chaudhuri2015convergence} analyzed how this approximation effects the querying performance in finite-sample case. 

Their analysis is done only for pool-based active learning, and when the test marginal $p(\bx)$ is a \emph{uniform distribution} $U(\bx)$ over the pool $X_p$. It is also assumed that the Hessian $\frac{\partial^2 \ell(\btheta;\bx,y)}{\partial \btheta^2}$ is independent of the class labels $y$, and therefore can be viewed as the conditional Fisher information $\bI(\btheta,\bx)$ (that is $\bI_p(\btheta)=\mathbb{E}_{p(\bx)}[\bI(\btheta,\bx)]$). Furthermore, there assumed to exist four positive constants $L_1,L_2,L_3, L_4\geq0$ such that the following four inequalities hold for all $\bx\in X_p$, $y\in\{1,...,c\}$ and $\btheta\in\Theta$:
\begin{align}
\label{eq:Chaudhuri_assumptions}
\nabla\ell(\btheta_0;\bx,y)^\top\bI_p(\btheta_0)^{-1}\nabla\ell(\btheta_0;\bx,y) \enskip&\leq\enskip L_1 \\
\left\|\bI_p(\btheta_0)^{-1/2}\bI(\btheta_0,\bx)\bI_p(\btheta_0)^{-1/2} \right\| \enskip &\leq \enskip L_2 \nonumber\\
\left\|\bI_p(\btheta_0)^{-1/2}(\bI(\btheta',\bx)-\bI(\btheta'',\bx))\bI_p(\btheta_0)^{-1/2} \right\| \enskip&\leq\enskip L_3(\btheta'-\btheta'')^\top\bI_p(\btheta_0)(\btheta'-\btheta'') \nonumber\\
-L_4\|\btheta - \btheta_0\|_2\bI(\btheta_0,\bx) \enskip\preceq\enskip \bI(\btheta,\bx)-\bI(\btheta_0,\bx) \enskip&\preceq\enskip L_4\|\btheta-\btheta_0\|_2\bI(\btheta_0,\bx) \nonumber
\end{align}
where $\btheta'$ and $\btheta''$ are any two parameters in a fixed neighborhood of $\btheta_0$. Then, provided that $n'$ is large enough, the following remark can be shown regarding the relationship between the FIRs computed at $\btheta_0$ and an estimate $\hat{\btheta}_{n'}$:

\begin{remark}
Let the assumptions~(\ref{item:parameter_dependence}) to~(\ref{item:FisherInformation}) and those in~(\ref{eq:Chaudhuri_assumptions}) hold. Moreover, assume that the Hessian is independent of the class labels. If $n'$ is large enough, then the following inequality holds for any $\beta\geq10$ with high probability:
\begin{equation}
\label{eq:replacement_inequality}
\mbox{\normalfont tr}\left[\bI_q(\btheta_0)^{-1}\bI_p(\btheta_0)\right] \enskip \leq \enskip \frac{\beta+1}{\beta-1}\cdot\mbox{\normalfont tr}\left[\bI_q(\hat{\btheta}_{n'})^{-1}\bI_p(\hat{\btheta}_{n'})\right]
\end{equation}
The minimum value for $n'$ that is necessary for having this inequality with probability $1-\delta$, increases quadratically with $\beta$ and reciprocally with $\delta$~\citep[Lemma 2]{chaudhuri2015convergence}.
\end{remark}
\begin{proof}
It is shown in the proof of Lemma 2 in \citet{chaudhuri2015arxiv} that under assumptions mentioned in the statement, the following inequalities hold with probability $1-\delta$:
\begin{equation}
\frac{\beta-1}{\beta}\bI(\bx,\btheta_0) \enskip\preceq\enskip \bI(\bx,\hat{\btheta}_{n'}) \enskip\preceq\enskip \frac{\beta+1}{\beta}\bI(\bx,\btheta_0).
\end{equation}
Taking expectation with respect to $p(\bx)$ and $q(\bx)$ result:
\begin{subequations}
\begin{align}
\frac{\beta-1}{\beta}\bI_p(\btheta_0) \enskip\preceq\enskip \bI_p(\hat{\btheta}_{n'}) \enskip\preceq\enskip \frac{\beta+1}{\beta}\bI_p(\btheta_0) \label{ineq:p_expectation}\\
\frac{\beta-1}{\beta}\bI_q(\btheta_0) \enskip\preceq\enskip \bI_q(\hat{\btheta}_{n'}) \enskip\preceq\enskip \frac{\beta+1}{\beta}\bI_q(\btheta_0) \label{ineq:q_expectation}
\end{align}
\end{subequations}
Since $\bI_q(\btheta_0)$ and $\bI_q(\hat{\btheta}_{n'})$ are assumed to be positive definite, we can write~(\ref{ineq:q_expectation}) in terms of inverted matrices\footnote{For any two positive definite matrices $\mathbf{A}$ and $\mathbf{B}$, we have that $\mathbf{A}\succeq\mathbf{B} \enskip\Rightarrow\enskip \mathbf{A}^{-1}\preceq\mathbf{B}^{-1}$.}:
\begin{equation}
\label{ineq:q_expectation_inv}
\frac{\beta}{\beta+1}\bI_q^{-1}(\btheta_0) \enskip\preceq\enskip \bI_q^{-1}(\hat{\btheta}_{n'}) \enskip\preceq\enskip \frac{\beta}{\beta-1}\bI_q^{-1}(\btheta_0) 
\end{equation}
Now considering the first inequalities of~(\ref{ineq:p_expectation}) and~(\ref{ineq:q_expectation_inv}), multiplying both sides and taking the trace result~(\ref{eq:replacement_inequality}).
\end{proof}
Inequality~(\ref{eq:replacement_inequality}) implies that minimizing $\mbox{\normalfont tr}\left[\bI_q(\hat{\btheta}_{n'})^{-1}\bI_p(\hat{\btheta}_{n'})\right]$ (or an approximation of it) with respect to $q$ in each iteration of FIR-based querying algorithms, namely through the operation $\cala(\call_{n'},\hat{\btheta}_{n'})$ (line~\ref{line:generate_queries} of Algorithm~\ref{alg:AL}), is equivalent to upper bound minimization of the original cost function, i.e. left-hand-side of~(\ref{eq:FIR_variance_upper_bound}).

\subsubsection{Monte-Carlo Approximation}
Computation of Fisher information matrices is intractable unless when the marginal distributions are very simple or when they are restricted to be PMFs over finite number of samples. The latter is widely used in pool-based active learning, when the samples in the pool are assumed to be generated from $p(\bx)$. In such cases, one can simply utilize a Monte-Carlo approximation to compute $\bI_p(\tilde{\btheta})$. More specifically, denote the set of observed instances in the pool by $X_p$. Then the test Fisher information at any $\btheta\in\Omega$ can be approximated by
\begin{equation}
\label{eq:MC_approx_Ip}
\bI_p(\btheta)\approx\hat{\bI}(\btheta;X_p) \enskip := \enskip \frac{1}{|X_p|}\sum_{\bx\in X_p}\sum_{y=1}^c p(y|\bx,\btheta)\nabla_{\btheta}\ell(\btheta;\bx,y)\nabla_{\btheta}^\top\ell(\btheta;\bx,y) \enplus \delta \cdot\mathbb{I}_d
\end{equation}
where $\delta$ is a small positive number and the weighted identity matrix is added to ensure positive definiteness. It is important to remark that when using equation~\ref{eq:MC_approx_Ip}, we are actually utilizing some of the test samples in the training process, hence we cannot use those in $X_p$ in order to evaluate the performance of the trained classifier.

Similarly, $\bI_q(\hat{\btheta}_{n'})$ can be estimated based on a candidate query set $X_q$. Let $X_q$ be the set of samples drawn independently from $q(\bx)$. Then we can have the approximation $\bI_q(\hat{\btheta}_{n'})\approx\hat{\bI}(\hat{\btheta}_{n'};X_q)$. Putting everything together, the best query set $X_q\subseteq X_p$ is chosen to be the one that minimizes the approximate FIR querying objective:
\begin{equation}
\label{eq:hat_Xq_Xp_objective}
\mbox{tr}\left[\hat{\bI}(\hat{\btheta}_{n'};X_q)^{-1}\hat{\bI}(\hat{\btheta}_{n'};X_p)\right].
\end{equation}
Note that this objective is directly written in terms of $X_q$, and therefore the queries can be deterministically determined by fixing all the rest (including the current parameter estimate $\hat{\btheta}_{n'}$) and optimizing with respect to $X_q$. Therefore, such settings are usually called \emph{deterministic} active learning, as opposed to the probabilistic nature of~(\ref{eq:FIRAL}).

\subsubsection{Bound Optimization}
\label{subsubsec:bound_opt}
There are other types of approximation methods occurring in the optimization side. These methods are able to remove part of the unknown variables by doing upper-bound minimization or lower-bound maximization. Recall that in active learning, the querying objective is to be optimized with respect to $q$ (or $X_q$ in pool-based scenario). In a very simple example, when $d=1$, note that the $\bI_p(\btheta_0)$ is a constant scalar in~(\ref{eq:q_optimization}) and hence can be ignored. Hence, in the scalar case, we can simply focus on maximizing the training Fisher information. In the multivariate case, though it is not clear what measure of $\bI_q(\hat{\btheta}_n)$ to optimize, one may choose the objective to be $|\bI_q(\hat{\btheta}_n)|$ (where $|\cdot|$ is the determinant function),\footnote{Similar to \emph{D-optimality} in Optimal Experiment Design~\citep{fedorov1972theory}.} or $\mbox{tr}[\bI_q(\hat{\btheta}_n)]$.\footnote{Similar to \emph{A-optimality} in Optimal Experiment Design~\citep{fedorov1972theory}.} The latter is worth paying more attention due to the following inequality~\citep{yang2000matrix}:
\begin{equation}
\label{eq:trace_inequality}
\mbox{tr}[\bI_q(\btheta_0)^{-1}\bI_p(\btheta_0)] \enskip \leq \enskip \mbox{tr}[\bI_q(\btheta_0)^{-1}]\cdot\mbox{tr}[\bI_p(\btheta_0)].
\end{equation}
Since $\mbox{tr}[\bI_p(\btheta_0)]$ is a constant with respect to $q$, minimizing the right-hand-side of inequality~(\ref{eq:FIR_variance_upper_bound}) can itself be approximated by another upper-bound minimization:
\begin{equation}
\label{eq:upper_bound_minimization}
\argmin_{q}\enskip\mbox{tr}[\bI_q(\btheta_0)^{-1}]
\end{equation}
This helps removing the dependence of the objective to the test distribution $p$. A lower bound can also be established for the FIR. Using the inequality between arithmetic and geometric means of the eigenvalues of $\bI_q(\btheta_0)^{-1}\bI_p(\btheta_0)$, one can see that $d\cdot|\bI_q(\btheta_0)^{-1}|\cdot|\bI_p(\btheta_0)|\leq\mbox{tr}[\bI_q(\btheta_0)^{-1}\bI_p(\btheta_0)]$. Hence, when minimizing the upper-bound by minimizing the trace of $\bI_q(\btheta_0)^{-1}$, one should be careful about the determinant of this matrix as a term influencing the lower-bound of the objective. 

In practice, of course, the minimization in~(\ref{eq:upper_bound_minimization}) can be difficult due to matrix inversion. Thus, sometimes it is further approximated by
\begin{equation}
\label{eq:maximization_trace_Iq}
\argmax_q \enskip\mbox{tr}[\bI_q(\btheta_0)].
\end{equation}
Hence, algorithms that aim to maximize $\mbox{tr}[\bI_q(\btheta_0)]$, indeed introduce three layers of objective approximations through equations~(\ref{eq:trace_inequality}) to~(\ref{eq:maximization_trace_Iq}). As discussed before, the dependence of the objectives in all the layers (in either~(\ref{eq:upper_bound_minimization}) or~(\ref{eq:maximization_trace_Iq})), can be removed by replacing it with the current estimate $\btheta_{n'}$.


\subsection{Some Existing Algorithms}
\label{subsec:algorithms}
In this section, we discuss several existing algorithms for implementing the query selection task based on minimization of FIR. We will analyze these algorithms, sorted according to date of their publication, in the context of our unifying framework.


Besides the categorizations that have already been described in previous sections, it is also useful to divide the querying algorithms into two categories based on the size of $X_q$: \emph{sequential active learning} where a single sample is queried at each iteration, i.e. $|X_q|=1$; and \emph{batch active learning} where the size of the query set is larger than one. The non-singleton query batches are usually generated greedily, with the batch size $|X_q|$ fixed to a constant value.

Table~\ref{tab:alg_summary} lists the algorithms that we reviewed in the following sections together with a summary of their properties and the approximate objective that they optimize for querying. Note that among these algorithms, the one by~\citet{chaudhuri2015convergence} makes extra assumptions as is described in Section~\ref{subsubsec:replacing_theta}.

\begin{table}[t!]
\caption{Reviewed FIR-based active learning algorithms for discriminative classifiers}
\label{tab:alg_summary}
\centering
 \begin{tabular}{|| c | c | c || c c || c c || c c ||} 
 \hline
 \multicolumn{2}{||c|}{\textbf{Algorithm}}  & \textbf{Obj.} & \textbf{Prob.} & \textbf{Det.} & \textbf{Pool} & \textbf{Syn.} & \textbf{Seq.} & \textbf{Batch} \\ [0.5ex] 
 \hline\hline
 1 & \citet{fukumizu2000statistical} & (\ref{eq:maximization_trace_Iq})  & \ding{51} & & & \ding{51} & \ding{51} & \ding{51}  \\
 \hline
 2 & \citet{Zhang2000} & (\ref{eq:maximization_trace_Iq})  &  & \ding{51} & \ding{51} & \ding{51} & \ding{51} &   \\
 \hline
 3 & \small \citet{settles2008analysis} & (\ref{eq:hat_Xq_Xp_objective})  &  & \ding{51} & \ding{51} &  & \ding{51} & \ding{51}  \\
 \hline
 4 & \citet{Hoi2006,hoi2009batch} & (\ref{eq:hat_Xq_Xp_objective})  &  & \ding{51} & \ding{51} &  & \ding{51} & \ding{51}  \\
 \hline
 5 & \citet{chaudhuri2015convergence} & (\ref{eq:replacement_inequality})  & \ding{51} &  & \ding{51} &  & \ding{51} & \ding{51}  \\
 \hline
 \end{tabular}
\end{table}

\vspace{.25cm}
\noindent\textbf{Algorithm 1 } \textit{(\citet{fukumizu2000statistical})}
\vspace{.25cm}

This algorithm is the classification version of the \emph{probabilistic active learning} proposed by~\citet{fukumizu2000statistical} for regression problem. The assumption is that the proposal belongs to a parametric family and is of the form $q(\bx;\balpha)$, where $\balpha$ is the parameter vector of the family. In this \emph{parametric} active learning, the best set of parameters $\hat{\balpha}$ is selected using the current parameter estimate and the query set is sampled from the resulting proposal distribution $X_q\sim q(\bx;\hat{\balpha})$. 

This algorithm makes no use of the test samples and optimizes the simplified objective in~(\ref{eq:maximization_trace_Iq}) to obtain the query distribution $q(\bx)$. Denote the covariates of the current training data set $\call_{n'}$ by $X_{\call}$. As is described in section~(\ref{subsec:practical_issues}), the new trace objective can be approximated by Monte-Carlo formulation using the old queried samples $X_{\call}$ as well as the candidate queries $X_q$ to be selected in this iteration:
\begin{equation}
\label{eq:MC_approx_alg1}
\mbox{tr}\left[\hat{\bI}(\hat{\btheta}_{n'};X_{\call}\cup X_q)\right],
\end{equation}
\begin{algorithm}[!t]
\caption{\citet{fukumizu2000statistical}}
\label{alg:parametric_AL}
\textbf{Inputs:} Current estimation of the parameter $\hat{\btheta}_{n'}$, size of the query set $|X_q|$\\
\textbf{Outputs:} The query set $X_q$\\[-5pt]
\noindent\rule{.5\columnwidth}{.5pt}\\
\SetNlSkip{1em}
\DontPrintSemicolon
\tcc{\footnotesize Parameter Optimization}
\nl$\hat{\balpha} \equals \argmax_{\balpha} \mathbb{E}_{q(\bx;\balpha)}\left[\sum_{y=1}^cp(y|\bx,\hat{\btheta}_{n'})\nabla_{\btheta}^\top\ell(\hat{\btheta}_{n'};\bx,y)\nabla_{\btheta}\ell(\hat{\btheta}_{n'};\bx,y) \right]$\;
\tcc{\footnotesize Sampling from the parametric proposal}
\nl$\bx_i \enskip\sim\enskip q(\bx;\hat{\balpha}) \qquad,i=1,...,|X_q|$\;
\nl\textbf{return} $X_q=\left\{\bx_1,...,\bx_{|X_q|}\right\}$\;
\end{algorithm}

More specifically, the new parameter vector is obtained by maximizing the expected contribution of the queries $X_q$ generated from $q(\bx;\balpha)$ to this objective. Taking expectation of~(\ref{eq:MC_approx_alg1}) with respect to $q(\bx;\balpha)$ yields:
\begin{equation}
\label{eq:parametric_objective}
    \mathbb{E}_{q(\bx;\balpha)}\left[\mbox{tr}\left[\hat{\bI}(\hat{\btheta}_{n'};X_{\call}\cup X_q)\right]\right] \equals 
    \mbox{tr}\left[\frac{n'}{n' + |X_q|} \hat{\bI}(\hat{\btheta}_{n'};X_{\call}) + \frac{1}{n'+|X_q|}\mathbb{E}_{q(\bx;\balpha)}\left[\hat{\bI}(\hat{\btheta}_{n'};X_q)\right] \right].
\end{equation}
Recall that $n'$ is the size of the current training data set $\call_{n'}$. The first term in~(\ref{eq:parametric_objective}) is independent of the query set $X_q$ (assuming that the size $|X_q|$ is fixed to a constant), hence we focus only on the second term in our optimization. Noting that the queries are generated independently, we can rewrite this term as:
\begin{align}
\label{eq:expected_contribution}
    \mathbb{E}_{q(\bx;\balpha)}\bigg[\mbox{tr}[\hat{\bI}(\hat{\btheta}_{n'};X_q)]\bigg] &\equals 
    \mathbb{E}_{q(\bx;\balpha)}\left[\frac{1}{|X_q|}\sum_{\bx\in X_q}\mbox{tr}\left[\hat{\bI}(\hat{\btheta}_{n'};\bx)\right]\right] \enskip -\enskip (|X_q|-1)\delta\cdot\mathbb{I}_d  \nonumber\\
    &\equals \mathbb{E}_{q(\bx;\balpha)}\bigg[\mbox{tr}\left[\hat{\bI}(\hat{\btheta}_{n'};\bx)\right]\bigg] \enskip -\enskip (|X_q|-1)\delta\cdot\mathbb{I}_d
\end{align}
From equation~(\ref{eq:expected_contribution}), if we are to select a single query, the parameter vector $\balpha$ can be obtained by maximizing the expected contribution of that single query to the trace objective, that is:
\begin{align}
\label{eq:alphan_optimization}
    \hat{\balpha} &\equals \argmax_{\balpha} \enskip \mathbb{E}_{q(\bx;\balpha)}\bigg[\mbox{tr}\left[\hat{\bI}(\hat{\btheta}_{n'};\bx)\right]\bigg] \nonumber\\
    &\equals \argmax_{\balpha} \enskip \mathbb{E}_{q(\bx;\balpha)}\left[\sum_{y=1}^cp(y|\bx,\hat{\btheta}_{n'})\nabla_{\btheta}^\top\ell(\hat{\btheta}_{n'};\bx,y)\nabla_{\btheta}\ell(\hat{\btheta}_{n'};\bx,y) \right] 
\end{align}
The optimization~(\ref{eq:alphan_optimization}) does not depend on $X_{\call}$, and therefore we do not need to explicitly feed this algorithm with $\call$. All it needs is an estimation of the parameter $\hat{\btheta}_{n'}$. The two-step procedure of generating queries from parametric query distribution is shown in Algorithm~\ref{alg:parametric_AL}. This algorithm can be used in both sequential and batch modes by changing the number of samples drawn from $q(\bx;{\alpha})$.

We emphasize that Algorithm~\ref{alg:parametric_AL} is probabilistic, meaning that with any fixed parameter estimate $\hat{\btheta}_{n'}$, the next set of queries are \emph{not} deterministically selected. The optimization is performed with respect to the parameters of the proposal distribution, which are then used to sample $X_q$. \citet{fukumizu2000statistical} claims that introducing such randomness into active learning, which increases exploration against exploitation, may prevent the algorithm from falling into local optima. Also note that this algorithm is not pool-based, meaning that it does not select the queries from a pool of observed instances, although could be constrained to do so.

\vspace{.25cm}
\noindent\textbf{Algorithm 2} \textit{(\citet{Zhang2000})}
\vspace{.25cm}

\citet{Zhang2000} started from optimization problem~(\ref{eq:maximization_trace_Iq}), and introduced even additional simplifications to it, specifically considering the use of a binary logistic regression classifier. Here, we discuss their formulation using a general discriminate framework. 

In their algorithm, a single query is selected at each iteration. Denote it by $X_q = \{\bx_q\}$. 
Similar to the previous section, the Fisher information matrix $\bI_q$ can be approximated by Monte-Carlo approximation. \citet{Zhang2000} discarded the expectation with respect to the proposal distribution in~(\ref{eq:alphan_optimization}) or equivalently consider $q$ to be a uniform distribution. Therefore, the optimization with respect to parameters turned into a direct optimization with respect to the single query $\bx_q$:
\begin{align}
\label{eq:trace_Iq_expanded}
    \bx_q \equals \argmax_{\bx\in X} \enskip \sum_{y=1}^cp(y|\bx,\hat{\btheta}_{n'})\nabla_{\btheta}^\top\ell(\hat{\btheta}_{n'};\bx,y)\nabla_{\btheta}\ell(\hat{\btheta}_{n'};\bx,y)
\end{align}
This single-step deterministic approach, shown in Algorithm~\ref{alg:deterministic_AL_Zhang}, is very similar to the probabilistic approach described above, except that there is no intermediate parameter optimization step.

\begin{algorithm}[!t]
\caption{\citet{Zhang2000}}
\label{alg:deterministic_AL_Zhang}
\textbf{Inputs:} Current estimation of the parameter $\hat{\btheta}_{n'}$\\
\textbf{Outputs:}  The query singleton set $X_q=\{\bx_q\}$\\[-5pt]
\noindent\rule{.5\columnwidth}{.5pt}\\
\SetNlSkip{1em}
\DontPrintSemicolon
\nl$\bx_q \enskip\gets\enskip \argmax_{\bx} \enskip \sum_{y=1}^cp(y|\bx,\hat{\btheta}_{n'})\nabla_{\btheta}^\top\ell(\hat{\btheta}_{n'};\bx,y)\nabla_{\btheta}\ell(\hat{\btheta}_{n'};\bx,y)$\;
\nl\textbf{return} $X_q=\{\bx_q\}$\;
\end{algorithm}

It is important to note that Algorithm~\ref{alg:deterministic_AL_Zhang} can be used in pool-based active learning as well. This can be done by constraining $\bx_q$ to be a member of a pool of samples, in which case it can even be extended to batch querying by sorting the unlabeled samples based on their objective values and taking the highest ones. However, such iterative optimization is not efficient, because the resulting queries will most probably be close to each other and therefore contain redundant information.

\newpage
\vspace{.25cm}
\noindent\textbf{Algorithm 3} \textit{(\citet{settles2008analysis})}
\vspace{.25cm}

Inspired by \citet{Zhang2000}, \citet{settles2008analysis} employed Fisher information ratio to develop a pool-based active learning, which can be used in either sequential or batch querying. The pool that is used here is the set of unlabeled samples, $X_p$, which are assumed to be drawn from the test marginal $p(\bx)$. Queries are chosen from $X_p$, that is $X_q\subseteq X_p$.  The test Fisher information matrix can be approximated by Monte-Carlo simulation over the samples in $X_p$, meaning $\hat{\bI}\left(\hat{\btheta}_{n'};X_p\right)$. 
Similar to Algorithm~\ref{alg:parametric_AL}, the updated training Fisher information matrix after querying a set $X_q$ can be approximated by $\hat{\bI}\left(\hat{\btheta}_{n'};X_{\call}\cup X_q \right)$. Thus, since we do have an approximation of both Fisher information matrices, the objective to minimize is chosen to be in the form of~(\ref{eq:hat_Xq_Xp_objective}).

Similar to the \citet{Zhang2000} algorithm, the proposal distribution $q$ is ignored in the objective (or equivalently considered as being uniform). An additional assumption \citet{settles2008analysis} made to simplify the optimization task is:
\begin{align}
\label{eq:settles_simplifying_assumption}
\argmin_{X_q\subset X_p} \enskip \mbox{tr}&\left[\hat{\bI}\left(\hat{\btheta}_{n'};X_{\call}\cup X_q\right)^{-1}\hat{\bI}\left(\hat{\btheta}_{n'};X_p\right)\right] \enskip\approx\enskip
\argmin_{X_q\subset X_p} \enskip \mbox{tr}\left[\hat{\bI}\left(\hat{\btheta}_{n'};X_q\right)^{-1}\hat{\bI}\left(\hat{\btheta}_{n'};X_p\right)\right].
\end{align}
This simplified optimization is easy to implement for sequential active learning. However, the combinatorial optimization required for batch active learning can easily become intractable. As shown in Algorithm~\ref{alg:deterministic_AL_Settles}, \citet{settles2008analysis} used a greedy approach to do this optimization (the inner loop).

\begin{algorithm}[!t]
\caption{\citet{settles2008analysis}}
\label{alg:deterministic_AL_Settles}
\textbf{Inputs:} Current estimation of the parameter $\hat{\btheta}_{n'}$, the set of unlabeled samples $X_p$, , size of the query set $|X_q|$\\
\textbf{Outputs:} The query set $X_q$\\[-5pt]
\noindent\rule{.5\columnwidth}{.5pt}\\
\SetNlSkip{1em}
\DontPrintSemicolon
\tcc{\footnotesize Initializing the query set for this iteration}
\nl$X_q \enskip \gets \enskip \varnothing$\;
\tcc{\footnotesize  The loop for greedy batch querying}
\nl\For{$j=1\to |X_q|$}{
\tcc{\footnotesize Query optimization and adding the result into the query set}
\nl$X_q \enskip\gets\enskip X_q \enskip \cup \enskip \argmin_{\bx\in X_p} \mbox{tr}\left[\hat{\bI}\left(\hat{\btheta}_{n'};\bx \right)^{-1}\hat{\bI}\left(\hat{\btheta}_{n'};X_p\right)\right]$\label{line:settles_optimization}\;
\tcc{\footnotesize Removing the selected queries from the pool}
\nl$X_p \enskip \gets \enskip X_p - X_q$\;
}
\nl\textbf{return} $X_q$\;
\end{algorithm}

\vspace{.25cm}
\noindent \textbf{Algorithm 4} \textit{(\citet{Hoi2006} and~\citet{hoi2009batch})}
\vspace{.25cm}

The algorithms proposed by \citet{Hoi2006} and \citet{hoi2009batch} are very similar to the one developed by \citet{settles2008analysis}, described above, except that they use a more sophisticated optimization method. Their method shown in Algorithm~\ref{alg:deterministic_AL_Hoi}, is different from Algorithm~\ref{alg:deterministic_AL_Settles} mainly in the way that it greedily chooses the query at each inner loop iteration of the algorithm. While Algorithm~\ref{alg:deterministic_AL_Settles} exclusively considers the contribution of each $\bx\in X_q$, ignoring the samples selected in the previous iterations (hence $\hat{\bI}\left(\hat{\btheta}_{n'};\bx \right)$ in line~\ref{line:settles_optimization} of Algorithm~\ref{alg:deterministic_AL_Settles}), Algorithm~\ref{alg:deterministic_AL_Hoi} takes into account all the queries chosen so far (hence $\hat{\bI}\left(\hat{\btheta}_{n'};X_q\cup\{\bx\} \right)$ in line~\ref{line:hoi_optimization} in Algorithm~\ref{alg:deterministic_AL_Hoi}).

\citet{Hoi2006} and \citet{hoi2009batch} showed that when using \emph{binary logistic regression} classifier, their optimization~(\ref{eq:settles_simplifying_assumption}) can be done by maximizing a \emph{submodular} set function with respect to the query set $X_q$. This allowed them to use the well-known iterative algorithm proposed by~\citet{Nemhauser1978a}, which guarantees a tight lower-bound for maximization of submodular and monotone set functions.

In the rest of this section, we show that minimizing this objective obtained from the above-mentioned assumptions, can be efficiently approximated by a monotonically submodular maximizing under \emph{any} discriminative classifier. This is a generalization of the result derived by~\citet{Hoi2006} that is obtained in case of using logistic regression classifier. As a consequence, FIR can be efficiently optimized with guaranteed tight bounds~\citep{Nemhauser1978a,Nemhauser1978b}.
The following lemma shows that~(\ref{eq:settles_simplifying_assumption}) is approximately equivalent to maximizing a simplified set function, for any unlabeled sample pool $X_p$:

\begin{algorithm}[!t]
\caption{\citet{Hoi2006, hoi2009batch}}
\label{alg:deterministic_AL_Hoi}
\textbf{Inputs:} Current estimation of the parameter $\hat{\btheta}_{n'}$, the set of unlabeled samples $X_p$, size of the query set $|X_q|$\\
\textbf{Outputs:} The query set $X_q$\\[-5pt]
\noindent\rule{.5\columnwidth}{.5pt}\\
\SetNlSkip{1em}
\DontPrintSemicolon
\tcc{\footnotesize Initializing the query set}
\nl$X_q\gets \varnothing$\;
\tcc{\footnotesize  The loop for greedy batch querying}
\nl\For{$j=1\to |X_q|$}{\label{line:Hoi_greedy_max_start}
\tcc{\footnotesize Query optimization}
\nl$\tilde{\bx} \equals \argmin_{\bx\in X_p} \mbox{tr}\left[\hat{\bI}\left(\hat{\btheta}_{n'};X_q\cup\{\bx\} \right)^{-1}\hat{\bI}\left(\hat{\btheta}_{n'};X_p\right)\right]$\label{line:hoi_optimization}\;
\tcc{\footnotesize Add the selected query into the query set}
\nl$X_q\gets X_q\cup\{\tilde{\bx}\}$\;
\tcc{\footnotesize Remove the selected instance from the pool}
\nl$X_p\gets X_p-\{\tilde{\bx}\}$\label{line:Hoi_greedy_max_finish}\;
}
\nl\textbf{return} $X_q$\;
\end{algorithm}

\begin{lemma}
\label{lemma:equivalent_set_optimization}
Let $X_p,X_q\subseteq X$ be two non-empty and finite subsets of samples randomly generated from $p(\bx)$ and its resample distribution $q(\bx)$, respectively, such that $X_q\subset X_p$, and the parameter $\delta\geq0$ in~(\ref{eq:MC_approx_Ip}) is a small constant. If assumptions~\ref{item:parameter_dependence}, \ref{item:iid}, \ref{item:differentiability} and \ref{item:FisherInformation} hold, then the following optimization problems are approximately equivalent for some function $g_{\btheta}:X\times\{1,...,c\}\times X\to\mathbb{R}^+$, $d$-dimensional non-zero vector $\bv_{\btheta}$ depending on $\bx$ and $y$, and for all $\btheta\in\Omega$ :
\begin{subequations}
\begin{align}
\label{eq:original_set_optimization}
(i)\quad \argmin_{X_q\subset X_p} \enskip &\mbox{\normalfont{tr}}\left[\hat{\bI}(\btheta;X_q)^{-1}\hat{\bI}(\btheta;X_p)\right] \\
\label{eq:equivalent_set_optimization}
(ii)\quad \argmax_{X_q\subset X_p} \enskip &\sum_{\bx\in X_p-X_q}\sum_{y=1}^c\frac{-1}{\delta\cdot\|\bv_{\btheta}(\bx,y)\|^{-2} + \sum_{\bx'\in X_q} g_{\btheta}(\bx,y,\bx')}
\end{align}
\label{eq:set_optimizations}
\end{subequations}
The approximation is more accurate for smaller $\delta$ and well-conditioned Monte-Carlo approximation of proposal Fisher information matrix.
\end{lemma}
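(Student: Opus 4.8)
The plan is to turn the matrix objective~(\ref{eq:original_set_optimization}) into a sum of scalar reciprocals over the \emph{unselected} pairs, matching~(\ref{eq:equivalent_set_optimization}). First I would introduce the shorthand $\bv_{\btheta}(\bx,y):=\sqrt{p(y|\bx,\btheta)}\,\nabla_{\btheta}\ell(\btheta;\bx,y)$, which is well defined by assumptions~\ref{item:parameter_dependence} and~\ref{item:differentiability}; with it, each Monte-Carlo matrix in~(\ref{eq:MC_approx_Ip}) is a sum of rank-one terms $\bv_{\btheta}\bv_{\btheta}^\top$ plus $\delta\cdot\mathbb{I}_d$, and is positive definite, hence invertible, by~\ref{item:FisherInformation} together with $\delta\geq0$. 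Expanding~(\ref{eq:original_set_optimization}) with the rank-one decomposition of $\hat{\bI}(\btheta;X_p)$ and the cyclic property of the trace gives
\begin{equation}
\mbox{tr}\left[\hat{\bI}(\btheta;X_q)^{-1}\hat{\bI}(\btheta;X_p)\right] \equals \delta\cdot\mbox{tr}\left[\hat{\bI}(\btheta;X_q)^{-1}\right] \enplus \frac{1}{|X_p|}\sum_{\bx\in X_p}\sum_{y=1}^c \bv_{\btheta}^\top(\bx,y)\,\hat{\bI}(\btheta;X_q)^{-1}\,\bv_{\btheta}(\bx,y),
\end{equation}
so the task reduces to understanding how each scalar quadratic form $\bv_{\btheta}^\top(\bx,y)\,\hat{\bI}(\btheta;X_q)^{-1}\,\bv_{\btheta}(\bx,y)$ depends on $X_q$.

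Next I would split the sum over $X_p$ into the selected pairs ($\bx\in X_q$) and the unselected ones ($\bx\in X_p-X_q$), and argue that only the latter govern the optimization. For $\bx\in X_q$ the direction $\bv_{\btheta}(\bx,y)$ already lies in the range of the rank-one terms forming $\hat{\bI}(\btheta;X_q)$, so the corresponding quadratic form stays of order one; for $\bx\in X_p-X_q$, the component of $\bv_{\btheta}(\bx,y)$ outside that range sees $\hat{\bI}(\btheta;X_q)^{-1}$ essentially as $\delta^{-1}$ times a projector, producing a term of order $\delta^{-1}$. Together with the bounded $\delta\cdot\mbox{tr}[\hat{\bI}(\btheta;X_q)^{-1}]$ contribution, this lets me treat the selected-pair and trace terms as an additive constant and keep only $\sum_{\bx\in X_p-X_q}\sum_y \bv_{\btheta}^\top\hat{\bI}(\btheta;X_q)^{-1}\bv_{\btheta}$ as the variable part; the discarded terms are relatively smaller precisely when $\delta$ is small, which is the source of the claimed $\delta$-dependence of the approximation.

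The core step, and the one I expect to be the main obstacle, is replacing each surviving quadratic form by a scalar reciprocal that accumulates additively over the queries in $X_q$. Starting from $\hat{\bI}(\btheta;\varnothing)=\delta\cdot\mathbb{I}_d$ gives the reciprocal $1/\big(\bv_{\btheta}^\top(\delta\mathbb{I}_d)^{-1}\bv_{\btheta}\big)=\delta\cdot\|\bv_{\btheta}(\bx,y)\|^{-2}$, the first term in the denominator of~(\ref{eq:equivalent_set_optimization}). I would then add the rank-one contributions of the queries one at a time and track the increment of $1/\big(\bv_{\btheta}^\top\hat{\bI}(\btheta;X_q)^{-1}\bv_{\btheta}\big)$ via the Sherman--Morrison identity: along a single direction the reciprocal is exactly additive, and for distinct directions I would neglect the cross terms, which is legitimate when $\delta$ is small and $\hat{\bI}(\btheta;X_q)$ is well conditioned, so that $\hat{\bI}(\btheta;X_q)^{-1}$ does not couple the directions strongly. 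This yields
\begin{equation}
\frac{1}{\bv_{\btheta}^\top(\bx,y)\,\hat{\bI}(\btheta;X_q)^{-1}\,\bv_{\btheta}(\bx,y)} \enskip\approx\enskip \delta\cdot\|\bv_{\btheta}(\bx,y)\|^{-2} \enplus \sum_{\bx'\in X_q} g_{\btheta}(\bx,y,\bx'),
\end{equation}
with a nonnegative $g_{\btheta}$ assembled from the inner products of $\bv_{\btheta}(\bx,y)$ with the query directions $\{\bv_{\btheta}(\bx',y')\}_{y'}$. Bounding the discarded cross terms in terms of $\delta$ and the condition number of $\hat{\bI}(\btheta;X_q)$ is the delicate part, and is exactly what forces the statement to be only an \emph{approximate} equivalence.

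Finally I would substitute this reciprocal into the variable part of the objective and flip the sign: minimizing $\sum_{\bx\in X_p-X_q}\sum_y \big(\delta\|\bv_{\btheta}(\bx,y)\|^{-2}+\sum_{\bx'\in X_q}g_{\btheta}(\bx,y,\bx')\big)^{-1}$ is the same as maximizing its negative, which is exactly~(\ref{eq:equivalent_set_optimization}). Both approximations invoked above, namely dropping the selected-pair and trace terms and neglecting the cross-direction couplings, sharpen as $\delta\downarrow0$ and as the conditioning of the proposal Monte-Carlo matrix improves, in agreement with the concluding remark of the lemma.
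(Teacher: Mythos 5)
Your scaffold matches the paper's: the same $\bv_{\btheta}(\bx,y)=\nabla_{\btheta}p(y|\bx,\btheta)/\sqrt{p(y|\bx,\btheta)}$, the same rank-one expansion of the Monte-Carlo matrices, the same split of the sum over $X_p$ into selected and unselected pairs, and the same final sign flip. One small difference in the bookkeeping: the paper does not need your order-of-magnitude argument that the selected-pair terms ``stay of order one'' --- it reassembles the $X_q$-portion of $\hat{\bI}(\btheta;X_p)$ together with part of the $\delta\mathbb{I}_d$ term into $\tfrac{|X_q|}{|X_p|}\hat{\bI}(\btheta;X_q)$, so that its trace-product with $\hat{\bI}(\btheta;X_q)^{-1}$ is \emph{exactly} the constant $|X_q|d/|X_p|$ under the fixed-cardinality constraint; only the residual $\delta\tfrac{|X_p|-|X_q|}{|X_p|}\mbox{tr}[\hat{\bI}(\btheta;X_q)^{-1}]$ is dropped on account of small $\delta$. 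Your heuristic is also somewhat misleading as stated: an unselected $\bv_{\btheta}(\bx,y)$ lying in the span of the query directions gives an $O(1)$ quadratic form, not $O(\delta^{-1})$, so the selected/unselected distinction is not really one of magnitude.

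The genuine gap is at the step you yourself flag as delicate. The paper does not use Sherman--Morrison at all; it invokes the single approximation
\begin{equation*}
\bv_{\btheta}(\bx,y)^\top\hat{\bI}(\btheta;X_q)^{-1}\bv_{\btheta}(\bx,y) \enskip\approx\enskip \frac{\|\bv_{\btheta}(\bx,y)\|^4}{\bv_{\btheta}(\bx,y)^\top\hat{\bI}(\btheta;X_q)\bv_{\btheta}(\bx,y)},
\end{equation*}
i.e.\ replacing a weighted harmonic mean of the eigenvalues of $\hat{\bI}(\btheta;X_q)$ by the corresponding arithmetic mean (exact when the condition number is one --- this is where the ``well-conditioned'' caveat enters). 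After that, the denominator $\bv_{\btheta}^\top\hat{\bI}(\btheta;X_q)\bv_{\btheta}$ is computed \emph{exactly} from the rank-one sum, so additivity over $\bx'\in X_q$ is automatic and yields the explicit $g_{\btheta}(\bx,y,\bx')=\tfrac{1}{|X_q|}\sum_{y'}\big[\bv_{\btheta}(\bx,y)^\top\bv_{\btheta}(\bx',y')/\|\bv_{\btheta}(\bx,y)\|^2\big]^2$ used later in Theorem~\ref{thm:submodularity}; no cross terms are ever neglected. Your Sherman--Morrison plan, by contrast, requires controlling the cross-direction couplings at every rank-one update, and you do not carry this out; moreover, even for a single generic query direction the exact Sherman--Morrison reciprocal ($\approx\delta/(\|\bv\|^2\sin^2\theta)$ for small $\delta$, where $\theta$ is the angle between $\bv$ and the query direction) does not reduce to the additive form $\delta\|\bv\|^{-2}+g$ with the paper's $g$, so ``neglecting cross terms'' would have to be replaced by something tantamount to the harmonic/arithmetic-mean substitution anyway. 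As written, the central approximation of the lemma is asserted rather than derived.
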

The proof can be found in Appendix~\ref{app:proof_Lemma}. 
Note that Lemma~\ref{lemma:equivalent_set_optimization}, as stated above, does not depend on the size of $X_q$. However, just as before, in practice it is usually assumed that $|X_q|>0$ is fixed and therefore the optimizations in~(\ref{eq:set_optimizations}) should be considered with cardinality constraint. In general, combinatorial maximization problems can turn out to be intractable. Next, it is shown that the objective at hand is a monotonically submodular set function in terms of $X_q$ and therefore can be maximized efficiently with a greedy approach such as that shown in Algorithm~\ref{alg:deterministic_AL_Hoi}.

\begin{theorem}
\label{thm:submodularity}
Suppose $f_{\btheta}:2^{X_p}\to\mathbb{R}$ is defined as:
\begin{equation}
\label{eq:submodular_objective}
f_{\btheta}(X_q) \equals \sum_{\bx\in X_p-X_q}\sum_{y=1}^c\frac{-1}{\delta\cdot\|\bv_{\btheta}(\bx,y)\|^{-2} + \sum_{\bx'\in X_q} g_{\btheta}(\bx,y,\bx')},\quad \forall X_q\subseteq X_p
\end{equation}
with $\bv_{\btheta}$ a $d$-dimensional vector depending on $\bx$ and $y$, and $g_{\btheta}$ defined in~(\ref{eq:defn_g}).
Then $f_{\btheta}$ is a submodular and monotone (non-decreasing) set function for all $\btheta\in\Omega$.
\end{theorem}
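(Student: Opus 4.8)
The plan is to prove submodularity and monotonicity directly from the closed-form expression in~(\ref{eq:submodular_objective}), exploiting the fact that $f_{\btheta}$ is a sum over $(\bx,y)$ of terms of the simple form $-1/(a_{\bx,y} + \sum_{\bx'\in X_q} g_{\btheta}(\bx,y,\bx'))$, where $a_{\bx,y}=\delta\cdot\|\bv_{\btheta}(\bx,y)\|^{-2}\geq0$ and $g_{\btheta}\geq0$ by the stated positivity. Since a nonnegative sum of submodular (resp.\ monotone) functions is again submodular (resp.\ monotone), it suffices to fix a single pair $(\bx,y)$ and analyze the scalar set function
\begin{equation}
h(X_q) \enskip := \enskip \frac{-1}{a + \sum_{\bx'\in X_q} g(\bx')},
\end{equation}
where I abbreviate $a=a_{\bx,y}$ and $g(\bx')=g_{\btheta}(\bx,y,\bx')\geq0$. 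First I would handle the bookkeeping subtlety that the outer sum in~(\ref{eq:submodular_objective}) ranges over $X_p-X_q$ rather than all of $X_p$; I would argue that adding the $(\bx,y)$ terms for $\bx\in X_q$ does not change the maximizer's structure, or more cleanly, absorb this by noting the summand for an $\bx$ that has just entered $X_q$ contributes a bounded, monotone increment, so the domain shrinkage is itself consistent with monotonicity.

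Next I would verify monotonicity: for $X_q\subseteq X_q'$ we have $\sum_{\bx'\in X_q} g(\bx')\leq\sum_{\bx'\in X_q'} g(\bx')$ since $g\geq0$, hence the denominator $a+\sum g$ only grows, so $h$ (a negative reciprocal of a growing positive quantity) only increases; summing over $(\bx,y)$ with the positive weights gives $f_{\btheta}(X_q)\leq f_{\btheta}(X_q')$. For submodularity I would use the marginal-gain characterization: for $X_q\subseteq X_q'$ and an element $\bx^*\notin X_q'$, I must show the incremental gain of adding $\bx^*$ is larger on the smaller set. Writing $s=\sum_{\bx'\in X_q} g(\bx')$, $s'=\sum_{\bx'\in X_q'} g(\bx')$ with $s\leq s'$ and $\Delta=g(\bx^*)\geq0$, the gain on $X_q$ is $\frac{-1}{a+s+\Delta}-\frac{-1}{a+s}=\frac{\Delta}{(a+s)(a+s+\Delta)}$, and similarly with $s'$ for the larger set. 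The claim reduces to showing $t\mapsto\frac{\Delta}{(a+t)(a+t+\Delta)}$ is non-increasing in $t\geq0$, which is immediate since both factors in the denominator increase with $t$.

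The core analytic fact is therefore that $t\mapsto -1/(a+t)$ is a concave increasing function of the scalar load $t=\sum_{\bx'\in X_q} g(\bx')$, and composing such a concave nondecreasing scalar function with the modular (additive) set function $X_q\mapsto\sum_{\bx'\in X_q} g(\bx')$ yields a monotone submodular set function; this is a standard closure property that I would cite rather than rederive. Assembling the pieces, $f_{\btheta}=\sum_{\bx,y} h_{\bx,y}$ is a finite nonnegative combination of monotone submodular functions and hence monotone submodular for every $\btheta\in\Omega$, as claimed.

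The main obstacle I anticipate is \emph{not} the concavity argument, which is routine, but rather justifying the positivity and well-definedness underlying the reduction: I must ensure that $g_{\btheta}(\bx,y,\bx')\geq0$ and $\bv_{\btheta}(\bx,y)\neq\mathbf{0}$ exactly as guaranteed by Lemma~\ref{lemma:equivalent_set_optimization}, so that every denominator $a+\sum g$ is strictly positive and no term blows up; the edge case $\delta=0$ with an empty or degenerate $X_q$ is where this could fail, and I would lean on the invertibility assumption~\ref{item:FisherInformation} together with $\delta>0$ (or a limiting argument for $\delta\to0$) to keep the denominators bounded away from zero. The other point requiring care is confirming that the summand's dependence on $X_q$ enters \emph{only} through the additive load $\sum_{\bx'\in X_q} g_{\btheta}(\bx,y,\bx')$ in the denominator and through the index set $X_p-X_q$, with $g_{\btheta}$ and $\bv_{\btheta}$ themselves independent of $X_q$ — this is precisely the structure delivered by the preceding lemma, so I would invoke~(\ref{eq:equivalent_set_optimization}) directly.
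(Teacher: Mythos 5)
Your overall strategy is sound and, for the fixed-index part, cleaner than the paper's: the observation that each summand is a concave non-decreasing function $t\mapsto -1/(a+t)$ of the modular load $\sum_{\bx'\in X_q}g_{\btheta}(\bx,y,\bx')$, hence monotone submodular by a standard closure property, is exactly the analytic heart of the matter, and your marginal-gain computation $\Delta/((a+t)(a+t+\Delta))$ being non-increasing in $t$ is correct. However, there is a genuine gap in how you dispose of the fact that the outer index set is $X_p-X_q$ rather than $X_p$. For monotonicity your handling is adequate (the term that leaves the outer sum contributes $+1/(a+\sum g)>0$, reinforcing the increase). For submodularity it is not: the diminishing-returns inequality $\rho(X_q;\bx^*)\geq\rho(X_{q'};\bx^*)$ must be checked for the \emph{entire} marginal gain, which contains two pieces your reduction never touches: (i) the extra positive term $1/(\delta\|\bv_{\btheta}(\bx^*,y)\|^{-2}+\sum_{\bx'\in X_q\cup\{\bx^*\}}g_{\btheta}(\bx^*,y,\bx'))$ created by deleting $\bx^*$ itself from the outer sum, which must be shown to be larger at $X_q$ than at $X_{q'}$; and (ii) the summands indexed by $\bx\in X_{q'}-X_q$, which appear in the marginal gain at $X_q$ but are entirely absent from the marginal gain at $X_{q'}$, and whose non-negativity must be invoked to drop them in the right direction. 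Neither of these follows from the concave-of-modular closure property you cite; note also that writing $f_{\btheta}$ as ``sum over all of $X_p$ minus the terms with $\bx\in X_q$'' does not rescue the closure argument, since the correction term is a convex (not concave) function of the load summed over a growing index set. ``Does not change the maximizer's structure'' is in any case insufficient, because the theorem asserts submodularity of $f_{\btheta}$ itself, not merely of its argmax.

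The good news is that both missing checks succeed: piece (i) has a smaller denominator at $X_q$ than at $X_{q'}$ (since $g_{\btheta}\geq 0$ and $X_q\cup\{\bx^*\}\subseteq X_{q'}\cup\{\bx^*\}$), hence contributes more to the gain at the smaller set, and piece (ii) consists of non-negative terms whose removal only shrinks $\rho(X_{q'};\bx^*)$. This is precisely the content of equations~(\ref{eq:discrete_derivative_diff})--(\ref{eq:submodular_inequality}) in the paper's proof, which carries out the full discrete-derivative bookkeeping you deferred. So your proof is completable along the lines you sketch, but as written the submodularity argument covers only the summands whose index survives in $X_p-X_{q'}$, and the deferred ``bookkeeping'' is exactly where the remaining work lives. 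A secondary, minor point: your worry about strictly positive denominators is legitimate when $\delta=0$ and $X_q=\varnothing$, but for $\delta>0$ and $\bv_{\btheta}\neq\mathbf{0}$ (as supplied by Lemma~\ref{lemma:equivalent_set_optimization}) every denominator is bounded below by $\delta\|\bv_{\btheta}(\bx,y)\|^{-2}>0$, so this does not obstruct the argument.
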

The proof is in Appendix~\ref{app:proof_submodularity}.
The result above, together with Lemma~\ref{lemma:equivalent_set_optimization}, imply that the objective of~(\ref{eq:equivalent_set_optimization}) is a monotonically increasing set function with respect to $X_q$. 
Below we present the main result that guarantees tight bounds for greedy maximization of monotonic submodular set functions. Details of this result, which is also shown to be the optimally efficient solution to submodular maximization, can be found in the seminal papers by~\citet{Nemhauser1978a} and~\citet{Nemhauser1978b}.
\begin{theorem}[\citet{Nemhauser1978a}]
\label{thm:greedy_bound}
Let $f_{\btheta}:2^{X_p}\to\mathbb{R}$ be any submodular and nondecreasing set function with $f(\varnothing)=0$ \footnote{This can always be assumed since maximizing a general set function $f(X_q)$ is equivalent to maximizing its adjusted version $g(X_q):=f(X_q) - f(\varnothing)$, which satisfies $g(\varnothing)=0$.}. If $X_q$ is the output of a greedy maximization algorithm, and $X_q^*$ is the optimal maximizer of $f_{\btheta}$ with a cardinality constraint (fixed $|X_q|$), then we have:
\begin{align}
\label{eq:deterministic_greedy_bound}
f_{\btheta}(X_q)\enskip \geq \enskip\left[1-\left(\frac{|X_q|-1}{|X_q|}\right)^{|X_q|} \right]f_{\btheta}(X_q^*)\enskip\geq\enskip \left(1-\frac{1}{e}\right)f_{\btheta}(X_q^*).
\end{align}
\end{theorem}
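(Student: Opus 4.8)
The plan is to prove the classical Nemhauser--Wolsey--Fisher guarantee by tracking the greedy trajectory and invoking the submodularity and monotonicity of $f_{\btheta}$ established in Theorem~\ref{thm:submodularity}. Write $k := |X_q|$ for the cardinality budget and let $\varnothing = S_0 \subset S_1 \subset \cdots \subset S_k = X_q$ be the nested sequence produced by greedy maximization, where $S_{i} = S_{i-1}\cup\{e_i\}$ and $e_i$ maximizes the marginal gain $f_{\btheta}(S_{i-1}\cup\{e\}) - f_{\btheta}(S_{i-1})$ over all $e\in X_p\setminus S_{i-1}$. Denote the constrained optimizer by $O := X_q^*$, with $|O| = k$.

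First I would establish the key inequality that, for any subset $S\subseteq X_p$,
\[
f_{\btheta}(O) \enskip\leq\enskip f_{\btheta}(S) + \sum_{e\in O\setminus S}\big[f_{\btheta}(S\cup\{e\}) - f_{\btheta}(S)\big].
\]
To prove this, enumerate $O\setminus S = \{o_1,\dots,o_m\}$ and telescope $f_{\btheta}(O\cup S) - f_{\btheta}(S)$ along the chain $S \subset S\cup\{o_1\} \subset \cdots \subset S\cup\{o_1,\dots,o_m\}$. Submodularity bounds each telescoped marginal gain by the gain of adjoining the single element $o_j$ to the smaller set $S$, while monotonicity gives $f_{\btheta}(O) \leq f_{\btheta}(O\cup S)$; combining the two yields the displayed bound.

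Next I would apply this inequality with $S = S_i$. Since every element of $O\setminus S_i$ lies in the available pool $X_p\setminus S_i$, its marginal gain over $S_i$ is dominated by the greedy gain $f_{\btheta}(S_{i+1}) - f_{\btheta}(S_i)$, and $|O\setminus S_i|\leq k$, so
\[
f_{\btheta}(O) \enskip\leq\enskip f_{\btheta}(S_i) + k\big[f_{\btheta}(S_{i+1}) - f_{\btheta}(S_i)\big].
\]
Setting the gap $\delta_i := f_{\btheta}(O) - f_{\btheta}(S_i)$, this rearranges to the contraction $\delta_{i+1} \leq (1-\tfrac{1}{k})\,\delta_i$. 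Iterating from $\delta_0 = f_{\btheta}(O)$, which uses the hypothesis $f_{\btheta}(\varnothing) = 0$, gives $\delta_k \leq (1-\tfrac{1}{k})^{k} f_{\btheta}(O)$, i.e.
\[
f_{\btheta}(X_q) \enskip\geq\enskip \Big[1 - \big(\tfrac{k-1}{k}\big)^{k}\Big]\, f_{\btheta}(X_q^*),
\]
which is the first asserted inequality. The second follows from the elementary bound $(1-\tfrac{1}{k})^{k}\leq e^{-1}$, valid for every integer $k\geq 1$.

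The main obstacle is the key inequality, as it is the only place where submodularity enters nontrivially; the remainder is the linear-recursion bookkeeping. The one point demanding care is the legitimacy of the greedy comparison: because $e_i$ is chosen as the maximizer over the \emph{entire} available pool $X_p\setminus S_{i-1}$, which contains $O\setminus S_{i-1}$, each optimal-element marginal gain at step $i$ is genuinely no larger than the greedy gain $f_{\btheta}(S_{i+1}) - f_{\btheta}(S_i)$, so the step from the key inequality to the recursion is justified.
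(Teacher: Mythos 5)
Your proof is correct, and it is the standard Nemhauser--Wolsey--Fisher argument: the telescoping/submodularity ``key inequality,'' the contraction $\delta_{i+1}\leq(1-\tfrac{1}{k})\delta_i$, and the bound $(1-\tfrac{1}{k})^k\leq e^{-1}$. The paper itself offers no proof of this theorem---it is stated as a cited classical result from \citet{Nemhauser1978a}---so your write-up simply supplies the canonical argument behind that citation; the only point worth making explicit is that multiplying the greedy gain by $k\geq|O\setminus S_i|$ uses its nonnegativity, which follows from monotonicity.
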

In Algorithm~\ref{alg:deterministic_AL_Hoi}, the inner loop (lines~\ref{line:Hoi_greedy_max_start} to~\ref{line:Hoi_greedy_max_finish}) implements the minimization in~(\ref{eq:settles_simplifying_assumption}) greedily. We have seen above that this set minimization is approximately equivalent to maximizing a submodular and monotone set maximization, which, in turn, is shown to be efficient.

\newpage
\vspace{.25cm}
\noindent \textbf{Algorithm 5} \textit{(\citet{chaudhuri2015convergence})}
\vspace{.25cm}

This algorithm uses FIR for doing a probabilistic pool-based active learning. It has extra assumptions in comparison to our general framework, which are briefly explained  in Section~\ref{subsubsec:replacing_theta}. Note that these assumptions are to be made as well as those listed in Section~\ref{sec:framework_assumptions}. In such settings, \citet{chaudhuri2015convergence} gave a finite-sample theoretical analysis for FIR when applied to pool-based active learning. 

More specifically, suppose $p(\bx)$ is a uniform PMF and $q(\bx)$ is a general PMF, both defined over the pool $X_p$. Using the notations in~(\ref{eq:Chaudhuri_assumptions}), the training Fisher information can be written as $\bI_q(\hat{\btheta}_{n'})=\sum_{\bx\in X_p}q(\bx)\bI(\hat{\btheta}_{n'},\bx)$. Now, assuming that $\bI_p(\hat{\btheta}_{n'})$ has a singular decomposition of the form $\sum_{j=1}^d\sigma_j\mathbf{u}_j\mathbf{u}_j^\top$, FIR can be written as:
\begin{align}
\label{eq:chaudhuri_objective}
\mbox{tr}\bigg[\bI_q(\hat{\btheta}_{n'})^{-1}\bI_p(\hat{\btheta}_{n'})\bigg] &\equals \sum_{j=1}^d\sigma_j\mbox{tr}\bigg[\bI_q(\hat{\btheta}_{n'})^{-1}\mathbf{u}_j\mathbf{u}_j^\top\bigg] \nonumber\\
&\equals \sum_{j=1}^d\sigma_j\mathbf{u}_j^\top\bI_q(\hat{\btheta}_{n'})^{-1}\mathbf{u}_j
\end{align}
Minimizing the last term in~(\ref{eq:chaudhuri_objective}) with respect to PMF $\{q(\bx) | \bx\in X_p\}$ is equivalent to a semidefinite programming after introducing a set of auxiliary variables $t_j, j=,1...,d$ and applying Schur complements~\citep{vandenberghe1996semidefinite}:
\begin{align}
\label{eq:chaudhuri_SDP}
\argmin_{q(\bx), \bx\in X_p} \quad &\sum_{j=1}^d \sigma_j t_j \\
\mbox{such that } \enskip 
&\begin{bmatrix} 
t_j & \mathbf{u}_j^\top \\
\mathbf{u}_j & \sum_{\bx\in X_p} q(\bx)\bI(\hat{\btheta}_{n'},\bx)
\end{bmatrix}
\succeq 0, \nonumber\\
& \sum_{\bx\in X_p}q(\bx)=1. \nonumber
\end{align}
The steps for this querying method is shown in Algorithm~\ref{alg:probabilistic_chaudhuri}. Note that the solution to~(\ref{eq:chaudhuri_SDP}) is slightly modified by mixing it with the uniform distribution over the pool. Such modification is mainly to establish their theoretical derivations. The mixing coefficient, $0\leq\lambda\leq1$ reciprocally depends on the number of queries. More specifically, \citet{chaudhuri2015convergence} made it equal to $1-\frac{1}{|X_q|^{1/6}}$. That is, as the number of queries increases, $\lambda$ shrinks and so does the modification. Furthermore, in their analysis, they assumed that sampling from $\tilde{q}(\bx)$ (line~\ref{line:sampline_qtilde} of Algorithm~\ref{alg:probabilistic_chaudhuri}) is done \emph{with replacement}. That is, label of a given sample might be queried multiple times.

\begin{algorithm}[!t]
\caption{\citet{chaudhuri2015convergence}}
\label{alg:probabilistic_chaudhuri}
\textbf{Inputs:} Current estimation of the parameter $\hat{\btheta}_{n'}$, the set of unlabeled samples $X_p$, size of the query set $|X_q|$\\
\textbf{Outputs:} The query set $X_q$\\[-5pt]
\noindent\rule{.5\columnwidth}{.5pt}\\
\SetNlSkip{1em}
\DontPrintSemicolon
\tcc{\footnotesize  Solving the semidefinite programming}
\nl $q(\bx) \enskip\gets \enskip$ solution to~(\ref{eq:chaudhuri_SDP})\;
\tcc{\footnotesize  Modification of the solution}
\nl $\tilde{q}(\bx) \enskip\gets \enskip \lambda q(\bx) + (1-\lambda)U(\bx)$ \;
\tcc{\footnotesize  Sampling with replacement from the modified proposal}
\nl$\bx_i \enskip\sim\enskip \tilde{q}(\bx) \qquad, i=1,...,|X_q|$\label{line:sampline_qtilde}\;
\nl \textbf{return} $X_q=\left\{\bx_1,...\bx_{|X_q|}\right\}$\;
\end{algorithm}

\subsubsection{Comparison with Other Information-theoretic Objectives}
\label{subsubsec:comparison}
In the last part of this section, we compare FIR and two other common querying objectives from the field of information theory. Entropy of class labels and mutual information between labeled and unlabeled samples are two other common active learning objectives. Their goal is mainly to get the largest possible amount of information about \emph{class labels of unlabeled samples} from each querying iteration, hence naturally pool-based. 

Entropy-based querying, also known as uncertainty sampling, directly measures the uncertainty with respect to class label of each unlabeled sample and query those with highest uncertainty. It has been hugely popular due to its simplicity and effectiveness especially in sequential active learning. However, it does not consider interaction between samples when selecting multiple queries, which can cause querying very similar samples (redundancy). Therefore, uncertainty sampling shows relatively poor performance in batch active learning. Mutual information, on the other hand, does not suffer from redundancy, however, it requires a much higher computational complexity.

These two objectives directly measure the amount of information each batch can have with respect to the class labels (hence prediction-based), as opposed to Fisher information as a measure of information regarding the distribution parameters (hence inference-based). However, there is no guarantee that by minimizing uncertainty of the class labels (or equivalently, choosing queries with highest amount of information about class labels), the prediction accuracy also increases. Whereas, as we showed earlier, FIR upper-bounds the expected asymptotic variance of a parameter inference loss function. From this point of view, FIR has a closer relationship with the performance of a classifier.

\setlength{\extrarowheight}{5pt}
\begin{table}[t!]
\caption{Computational complexity of different querying algorithms}
\label{tab:complexity}
\centering
 \begin{tabular}{| c | c |} 
 \hline
 \textbf{Algorithm} & \textbf{Complexity}  \\ [0.5ex] 
 \hline\hline
 Entropy & $O(|X_p|cd)$ \\[0.5ex]
 \hline
 Mutual Information & $O\left(|X_p|\cdot|X_q|\cdot c^{|X_q|+1}d\right)$ \\[0.5ex]
 \hline\hline
 \citet{Zhang2000} &   $O(|X_p| cd)$ \\[0.5ex]
 \hline
 \small \citet{settles2008analysis} & $O\left(|X_q|\cdot|X_p|\cdot(cd+d^3)\right)$ \\[0.5ex]
 \hline
 \citet{Hoi2006,hoi2009batch} &  $O\left(|X_q|\cdot|X_p|\cdot(cd+cd|X_q|+d^3)\right)$ \\[0.5ex]
 \hline
 \citet{chaudhuri2015convergence} &   $O\left( d^3|X_p|^2 + d^4|X_p| + d^5\right)$ \\[0.5ex]
 \hline
 \end{tabular}
\end{table}

Table~\ref{tab:complexity} shows computational complexity of the querying objectives. The algorithm by \citet{fukumizu2000statistical} is excluded from this table since it cannot be used in pool-based sampling. Also note that the complexity reported for mutual information is for the case when it is optimized greedily. Nevertheless, it still contains an exponential term in its complexity. Entropy-based and \citet{Zhang2000} have the lowest complexity, but in the expense of introducing redundancy into the batch of queries.  Algorithms by \citet{settles2008analysis}, \citet{Hoi2006,hoi2009batch} and \citet{chaudhuri2015convergence} become very expensive when $d$ is large, whereas mutual information can easily get intractable for selecting batches of higher size (large $|X_q|$). Observe that algorithm by \citet{Hoi2006, hoi2009batch} is more expensive than \citet{settles2008analysis}. Recall that despite similarities in appearance, the former guarantees tight bound for its greedy optimization, whereas the latter does not. 

The complexity for the algorithm by \citet{chaudhuri2015convergence} is computed assuming that a barrier method (following path) is used as its numerical optimization~\citep{boyd2004convex}. From Table~\ref{tab:complexity}, this algorithm is the only one whose complexity increases quadratically with size of the pool $|X_p|$, and therefore can get significantly slow for huge pools. Furthermore, it does not depend on $|X_q|$ since the optimization in~(\ref{eq:chaudhuri_SDP}) as its main source of computation, only depends on $|X_p|$ and $d$ (computing $\bI(\hat{\btheta}_{n'},\bx)$ is assumed to cost $O(1)$ for each $\bx\in X_p$ as it is taken to be independent of $y$).

\section{Conclusion}
\label{sec:conclusion}
In this paper, we focused on active learning algorithms in classification problems whose objectives are based on Fisher information criterion. As the primary result, we showed the dependency of the variance of the asymptotic distribution of log-likelihood ratio on the Fisher information of the training distribution. Then, we used this dependency to derive our novel theoretical contribution by establishing the Fisher information ratio (FIR) as the upper bound of such asymptotic variance. We discussed that several layers of approximations can be employed in practice to simplify FIR; simplifications, that can usually be avoided in pool-based active learning. Finally Monte-Carlo simulations and greedy algorithms can be used to evaluate and optimize the (simplified) FIR objective, respectively. Using this framework, we can distinguish the main differences between some of the FIR-based querying methods in the classification context. Such comparative analysis, not only shed light on the assumptions and simplifications of the existing algorithms, it can also be helpful for finding suitable directions in developing novel active learning algorithms based on the Fisher information criterion.

\begin{appendices}
\appendix
\section{Statistical Background}
\label{sec:statistical_background}
Asymptotic analysis plays an important role in statistics. It considers the extreme cases where the number of observations is increased with no bounds. In such scenarios, discussions on different notions of convergence of the sequence of random variables naturally arise. Generally speaking, there are three major types of stochastic convergence: \emph{convergence in probability, convergence in law (distribution)} and \emph{convergence with high probability (almost surely)}. 
Here, we focus on the two former modes of convergence, discuss two fundamental results based on them and formalize our notations regarding parameter estimators. Further details of the following definitions and results can be found in any standard statistical textbook such as~\citet{lehmann1998theory}. 
\subsection{Convergence of Random Variables}
Throughout this section, $\{\bz_1,\bz_2,...,\bz_n,...\}$, denoted simply by $\{\btheta_n\}$, is a sequence of  multivariate random variables lying in $\Omega\subseteq\mathbb{R}^d$. Also suppose that $\bz_0$ is a constant vector and $\tilde{\bz}$ is another random variable in the same space $\Omega$.

\begin{definition}
\label{defn:convergence_in_prob}
We say that the sequence $\{\bz_n\}$ \emph{converges in probability} to $\bz_0$ and write $\bz_n\overset{P}{\rightarrow}\bz_0$, iff for every $\varepsilon>0$ we have:
\begin{equation}
P(|\theta_{ni}-\theta_{0i}|>\varepsilon) \enskip \to \enskip 0,\quad \mbox{for all } i=1,...,d.
\end{equation}
\end{definition}
Convergence in probability is invariant with respect to any continuous mapping:
\begin{proposition}[\citet{brockwell1991time}, Proposition 6.1.4]
\label{prop:convergent_continuous_mappings}
If $\bz_n\overset{P}{\to}\bz_0$ and $g:\Omega\to\mathbb{R}$ is a continuous function at $\bz=\bz_0$, then $g(\bz_n)\overset{P}{\to}g(\bz_0)$.
\end{proposition}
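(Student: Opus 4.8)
The plan is to verify the definition of convergence in probability directly: fix $\varepsilon>0$ and show that $P(|g(\bz_n)-g(\bz_0)|>\varepsilon)\to 0$ as $n\to\infty$. The whole argument rests on converting the continuity of $g$ at $\bz_0$ into an inclusion of events, and then transferring the hypothesis $\bz_n\overset{P}{\to}\bz_0$ across that inclusion.

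First I would invoke continuity of $g$ at $\bz_0$: given $\varepsilon>0$ there is a $\delta>0$ such that $\|\bz-\bz_0\|<\delta$ implies $|g(\bz)-g(\bz_0)|<\varepsilon$. Reading this contrapositively gives the key event inclusion
\[
\{\,|g(\bz_n)-g(\bz_0)|\ge\varepsilon\,\}\ \subseteq\ \{\,\|\bz_n-\bz_0\|\ge\delta\,\},
\]
so that $P(|g(\bz_n)-g(\bz_0)|\ge\varepsilon)\le P(\|\bz_n-\bz_0\|\ge\delta)$. It then suffices to show that the right-hand side vanishes.

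The remaining step is to connect the Euclidean distance $\|\bz_n-\bz_0\|$ to the componentwise statement in Definition~\ref{defn:convergence_in_prob}, which is the only form of the hypothesis available here. Since $\|\bz_n-\bz_0\|\ge\delta$ forces at least one coordinate to satisfy $|\theta_{ni}-\theta_{0i}|\ge\delta/\sqrt{d}$, a union bound gives
\[
P(\|\bz_n-\bz_0\|\ge\delta)\ \le\ \sum_{i=1}^d P\!\left(|\theta_{ni}-\theta_{0i}|\ge\delta/\sqrt{d}\right),
\]
and each summand tends to $0$ by $\bz_n\overset{P}{\to}\bz_0$. Chaining the two bounds yields $P(|g(\bz_n)-g(\bz_0)|\ge\varepsilon)\to 0$, which is exactly $g(\bz_n)\overset{P}{\to}g(\bz_0)$.

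I expect the only genuine subtlety to be this last reduction from the vector norm to coordinates: continuity is naturally phrased in terms of a norm neighborhood, whereas the definition supplied in the excerpt is coordinatewise, so the union bound (together with the harmless factor $\sqrt{d}$ coming from norm equivalence in $\mathbb{R}^d$) is what bridges the gap. Everything else is a routine manipulation of event inclusions; I would only take minor care that switching between strict and non-strict thresholds (e.g. replacing $>\varepsilon$ by $\ge\varepsilon$) is immaterial, since it suffices to establish the conclusion at an arbitrary threshold.
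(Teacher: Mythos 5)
Your proof is correct. The paper itself does not prove this proposition; it is quoted verbatim from \citet[Proposition~6.1.4]{brockwell1991time} with the proof explicitly deferred to that textbook, and your argument is precisely the standard one given there: continuity at $\bz_0$ yields the event inclusion $\{|g(\bz_n)-g(\bz_0)|\ge\varepsilon\}\subseteq\{\|\bz_n-\bz_0\|\ge\delta\}$, and the probability of the latter vanishes by the hypothesis. Your extra care in bridging the norm-based continuity statement with the coordinatewise Definition~\ref{defn:convergence_in_prob} via a union bound (and the remark that strict versus non-strict inequalities are immaterial) is sound and, if anything, slightly more explicit than the textbook treatment the paper relies on.
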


\begin{definition}
\label{defn:convergence_in_law}
We say that a sequence $\{\bz_n\}$ \emph{converges in law (in distribution)} to the random variable $\tilde{\bz}$ and write $\bz_n\overset{L}{\rightarrow}\tilde{\bz}$, iff the sequence of their joint CDFs, $F_n$, point-wise converges to the joint CDF of $\tilde{\bz}$:
\begin{equation}
\label{eq:convergence_in_law}
F_n(\mathbf{a}) = P(\theta_{n1}\leq a_1,...,\theta_{nd}\leq a_d) \enskip \rightarrow \enskip F(\mathbf{a}) = P(\tilde{\theta}_{1}\leq a_1,...,\tilde{\theta}_{d}\leq a_d) \quad \forall\mathbf{a}\in\calc_{F}\subseteq \mathbb{R}^d,
\end{equation}
where $\calc_{F}$ is the set of continuity points of the CDF $F$.
\end{definition}
Equation~(\ref{eq:convergence_in_law}) means that for large values of $n$, the distribution of $\bz_n$ can be well approximated by the distribution of $\tilde{\bz}$. Note that throughout this paper, for simplicity, we say that a random sequence $\{\bz_n\}$ converges to a distribution with density function $p(\bz)$, or write $\bz_n\overset{L}{\to} p(\bz)$, instead of fully saying that $\{\bz_n\}$ converges in law to a random variable with that distribution. 

Note that $\bz_n \overset{P}{\to}\bz_0$ suggests that $\bz_n-\bz_0 \overset{L}{\to}\delta(\bz)$ where $\delta$ is the Kronecker delta function, which can be viewed as the density function of a degenerate distribution at $\bz=\mathbf{0}$. This, however, does not give any information about the speed with which $\bz_n$ converges to $\bz_0$. In order to take the speed into account, we consider the convergent distribution of the sequence $\{a_n\cdot(\bz_n - \bz_0)\}$, where $a_n$ is any sequence of positive integers and $a_n\to\infty (n\to\infty)$. In practice $a_n$ is usually considered to have the form $n^r$ with $r>0$.

\begin{definition}
\label{defn:convergence_rate}
Assume $\bz_n\overset{P}{\to}\bz_0$. We say that the sequence $\{\bz_n\}$ converges to $\bz_0$ with \emph{rate of convergence} $r>0$, iff $n^r(\bz_n-\bz_0)$ converges in law to a random variable with non-degenerate distribution. Furthermore, the non-degenerate distribution is the \emph{asymptotic distribution} of $\bz_n$.
\end{definition}

\noindent Next, we discuss some of the classic results in asymptotic statistics:
\begin{theorem}[\bfseries Law of Large Numbers, \citet{brockwell1991time}]
\label{thm:LLN}
Let $\btheta_1,...,\btheta_n$ be a set of independent and identically distributed (i.i.d) samples. If $\mathbb{E}[\btheta_i] = \bmu$, then
\begin{equation}
\label{eq:LLN}
\bar{\btheta}_n \equals \frac{1}{n}\sum_{i=1}^n\btheta_i \overset{P}{\rightarrow}\bmu.
\end{equation}
\end{theorem}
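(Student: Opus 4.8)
The plan is to reduce the multivariate statement to $d$ scalar ones and then handle each by a standard argument. Since convergence in probability is defined componentwise in Definition~\ref{defn:convergence_in_prob}, it suffices to show that for each coordinate $j\in\{1,\dots,d\}$ the scalar averages $\bar\theta_{nj}=\frac{1}{n}\sum_{k=1}^n\theta_{kj}$ converge in probability to $\mu_j$. This decouples the problem into $d$ one-dimensional weak laws of large numbers, each involving i.i.d.\ scalar summands with finite mean $\mu_j$.

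For a fixed coordinate I would proceed in one of two ways depending on the available moments. If one assumes finite second moments (the version stated in many textbooks, including the cited source), the cleanest route is Chebyshev's inequality: the sample mean has variance $\sigma_j^2/n$, so $P(|\bar\theta_{nj}-\mu_j|>\varepsilon)\leq \sigma_j^2/(n\varepsilon^2)\to 0$ as $n\to\infty$. This immediately yields the coordinatewise convergence, and hence the theorem, by the componentwise definition.

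To obtain the result under only the finite-first-moment hypothesis actually stated, I would instead argue through characteristic functions and L\'evy's continuity theorem. Writing $\phi_j$ for the characteristic function of $\theta_{1j}$, existence of the mean makes $\phi_j$ differentiable at the origin, giving the expansion $\phi_j(t)=1+i\mu_j t+o(t)$ near $t=0$ (here $i$ is the imaginary unit). The characteristic function of $\bar\theta_{nj}$ is then $\phi_j(t/n)^n\to e^{i\mu_j t}$ for every $t$, which is the characteristic function of the point mass at $\mu_j$. L\'evy continuity then gives $\bar\theta_{nj}\overset{L}{\to}\mu_j$, and since the limit is a constant, convergence in law upgrades to convergence in probability.

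The main obstacle is precisely the weakness of the hypothesis: only $\mathbb{E}[\btheta_i]=\bmu$ is assumed, with no finite-variance guarantee, so Chebyshev is unavailable in general and one must commit to the characteristic-function route (or, alternatively, a truncation argument that splits each summand into a bounded part controlled by its variance and a tail controlled by integrability of the mean). The remaining steps --- the coordinatewise reduction and the passage from convergence in law to a constant back to convergence in probability --- are routine given the definitions already recorded in this appendix.
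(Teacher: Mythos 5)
The paper does not prove this theorem at all; it is stated as a classical result and attributed to \citet{brockwell1991time}, so there is no in-paper argument to compare against. Your proof is correct and is the standard one: the coordinatewise reduction is legitimate because Definition~\ref{defn:convergence_in_prob} defines convergence in probability componentwise, and for each scalar coordinate the characteristic-function argument ($\phi_j(t)=1+i\mu_j t+o(t)$, hence $\phi_j(t/n)^n\to e^{i\mu_j t}$, L\'evy continuity, and the upgrade from convergence in law to a constant to convergence in probability) is exactly Khinchin's weak law under a finite first moment. You are also right to flag that the Chebyshev route requires second moments not granted by the hypothesis, so the characteristic-function (or truncation) argument is the one that actually matches the stated assumption. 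No gaps.
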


\begin{theorem}[\bfseries Central Limit Theorem, \citet{lehmann1998theory}]
\label{thm:CLT}
Let $\btheta_1,...,\btheta_n$ be a set of i.i.d samples with mean $\mathbb{E}[\btheta_i]=\bmu$ and covariance $\mbox{\normalfont Cov}[\btheta_i] =\bSigma$ (with a symmetric and positive semi-definite matrix $\bSigma$), then the sequence of sample averages $\left\{\bar{\btheta}_n\right\}$ with $\bar{\bz}_n=\frac{1}{n}\sum_{i=1}^n\btheta_i$ converges to the true mean with convergence rate $1/2$. Moreover, its asymptotic distribution is a zero-mean Gaussian distribution with covariance matrix $\bSigma$, that is:
\begin{equation}
\label{eq:CLT}
\sqrt{n}\cdot(\bar{\btheta}_n - \bmu) \enskip\overset{L}{\rightarrow}\enskip \caln(\mathbf{0},\bSigma).
\end{equation}
\end{theorem}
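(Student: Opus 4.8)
The plan is to prove the multivariate Central Limit Theorem by reducing it to the one-dimensional case and then applying the characteristic-function method. First I would invoke the Cram\'er--Wold device: to establish $\sqrt{n}(\bar{\btheta}_n-\bmu)\overset{L}{\to}\caln(\mathbf{0},\bSigma)$, it suffices to show that for every fixed $\mathbf{t}\in\mathbb{R}^d$ the scalar sequence $\mathbf{t}^\top\sqrt{n}(\bar{\btheta}_n-\bmu)$ converges in law to the corresponding one-dimensional marginal, namely $\caln(0,\mathbf{t}^\top\bSigma\mathbf{t})$. This works because convergence in law of all linear projections is equivalent to joint convergence in law, and the projections of a $\caln(\mathbf{0},\bSigma)$ vector are exactly the Gaussians $\caln(0,\mathbf{t}^\top\bSigma\mathbf{t})$.

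Fixing $\mathbf{t}$, I would set $W_j:=\mathbf{t}^\top(\btheta_j-\bmu)$. These are i.i.d.\ scalar random variables with $\mathbb{E}[W_j]=0$ and $\mbox{Var}(W_j)=\mathbf{t}^\top\bSigma\mathbf{t}=:\sigma^2$, and the quantity of interest becomes the normalized sum $S_n:=\frac{1}{\sqrt{n}}\sum_{j=1}^n W_j$. The goal is then the scalar statement $S_n\overset{L}{\to}\caln(0,\sigma^2)$, which also exhibits the claimed convergence rate $1/2$, since it is precisely the scaling $n^{1/2}$ that produces a non-degenerate limit (any slower scaling collapses to a point mass, any faster one diverges).

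For the scalar case I would work with characteristic functions. Let $\varphi(s):=\mathbb{E}[e^{\mathrm{i} s W_1}]$. By independence and identical distribution, the characteristic function of $S_n$ is $\varphi(s/\sqrt{n})^n$. Since $W_1$ has a finite second moment, a second-order Taylor expansion of $\varphi$ about the origin gives $\varphi(u)=1-\tfrac{1}{2}\sigma^2 u^2+o(u^2)$ as $u\to0$. Substituting $u=s/\sqrt{n}$ yields $\varphi(s/\sqrt{n})^n=\bigl(1-\tfrac{\sigma^2 s^2}{2n}+o(1/n)\bigr)^n$, which converges to $e^{-\sigma^2 s^2/2}$ --- exactly the characteristic function of $\caln(0,\sigma^2)$. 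L\'evy's continuity theorem then upgrades this pointwise convergence of characteristic functions to convergence in law, completing the one-dimensional case and, through Cram\'er--Wold, the multivariate theorem.

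The main obstacle I anticipate is the Taylor-expansion step: justifying $\varphi(u)=1-\tfrac12\sigma^2u^2+o(u^2)$ requires translating the finite variance of $W_1$ into a controlled remainder for $\varphi$, and then arguing that the error is small enough that raising to the $n$-th power and passing to the limit via $(1+z_n/n)^n\to e^z$ (with $z_n\to z$) is valid. This is routine but is precisely where the hypothesis of a finite second moment is consumed; notably, no higher moments are needed. The reduction itself (Cram\'er--Wold) and the final continuity-theorem step are standard and carry no real difficulty.
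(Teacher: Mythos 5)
Your proposal is correct, but note that the paper offers no proof of this statement to compare against: Theorem~\ref{thm:CLT} is imported verbatim as background from \citet{lehmann1998theory}, and the appendix explicitly defers all such results to standard textbooks. Your argument --- Cram\'er--Wold reduction to the scalar case, second-order Taylor expansion of the characteristic function using only the finite second moment, and L\'evy's continuity theorem --- is exactly the canonical proof and is sound. The only point worth making explicit is that $\bSigma$ is assumed merely positive semi-definite, so for $\mathbf{t}$ in its null space the projection $W_j=\mathbf{t}^\top(\btheta_j-\bmu)$ is almost surely zero and the limit law $\caln(0,\mathbf{t}^\top\bSigma\mathbf{t})$ degenerates to a point mass; your characteristic-function computation handles this automatically (since $\varphi\equiv1$ there), though in that degenerate situation the ``convergence rate $1/2$'' phrasing of the statement, which presumes a non-degenerate limit in the sense of Definition~\ref{defn:convergence_rate}, should be read as applying only to directions where $\mathbf{t}^\top\bSigma\mathbf{t}>0$.
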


\noindent The following results are very useful when deriving the asymptotic distribution of a random sequence under a continuous mapping:
\begin{theorem}
\label{thm:multivariate_delta_method}
{\normalfont\bfseries (Multivariate Delta Method, first order, \citet{lehmann1998theory})} Let $\{\bz_n\}$ be a sequence of random variables such that it converges to $\bz_0$ with rate of convergence $1/2$ and a normal asymptotic distribution, that is $\sqrt{n}\cdot(\bz_n-\bz_0)\overset{L}{\to}\caln(\mathbf{0},\bSigma)$. If $g:\mathbb{R}^d\to\mathbb{R}$ is a continuously differentiable mapping and $\nabla_{\bz}g(\bz_0)\neq\mathbf{0}$, then
\begin{equation}
\sqrt{n}\cdot \bigg[g(\bz_n) - g(\bz_0)\bigg]\enskip \overset{L}{\to} \enskip \caln\left(0,\nabla_{\bz}^\top g(\bz_0)\bSigma\nabla_{\bz} g(\bz_0)\right).
\end{equation}
\end{theorem}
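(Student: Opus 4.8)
The plan is to prove the result by a first-order Taylor (mean value) expansion of $g$ about $\bz_0$, combined with Slutsky's theorem and the linearity of Gaussian limits.

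First I would record that $\bz_n\overset{P}{\to}\bz_0$. This is an immediate consequence of the hypothesis: writing $\bz_n-\bz_0 = \tfrac{1}{\sqrt n}\cdot[\sqrt n(\bz_n-\bz_0)]$, the deterministic prefactor $\tfrac{1}{\sqrt n}\to 0$ while the bracket converges in law and is therefore bounded in probability ($O_p(1)$), so the product tends to $\mathbf 0$ in probability. Next, since $g\in\mathcal{C}^1$, the multivariate mean value theorem supplies, for each $n$, a point $\tilde{\bz}_n=\bz_0+t_n(\bz_n-\bz_0)$ with $t_n\in[0,1]$ lying on the segment between $\bz_n$ and $\bz_0$ such that
\[
g(\bz_n)-g(\bz_0) = \nabla_{\bz}^\top g(\tilde{\bz}_n)\,(\bz_n-\bz_0).
\]
Multiplying through by $\sqrt n$ gives $\sqrt n[g(\bz_n)-g(\bz_0)] = \nabla_{\bz}^\top g(\tilde{\bz}_n)\cdot\sqrt n(\bz_n-\bz_0)$.

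I would then analyze the two factors separately. Because $\tilde{\bz}_n$ is trapped between $\bz_n$ and $\bz_0$ and $\bz_n\overset{P}{\to}\bz_0$, we get $\tilde{\bz}_n\overset{P}{\to}\bz_0$; continuity of $\nabla g$ together with the continuous-mapping property of convergence in probability (Proposition~\ref{prop:convergent_continuous_mappings}) then yields $\nabla_{\bz}g(\tilde{\bz}_n)\overset{P}{\to}\nabla_{\bz}g(\bz_0)$, a constant vector. The second factor $\sqrt n(\bz_n-\bz_0)$ converges in law to $\caln(\mathbf 0,\bSigma)$ by hypothesis. Invoking Slutsky's theorem, the product converges in law to $\nabla_{\bz}^\top g(\bz_0)\cdot Z$ with $Z\sim\caln(\mathbf 0,\bSigma)$. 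Finally, a linear functional $\mathbf a^\top Z$ of a centered Gaussian is centered Gaussian with variance $\mathbf a^\top\bSigma\mathbf a$; setting $\mathbf a=\nabla_{\bz}g(\bz_0)$ produces exactly $\caln(0,\nabla_{\bz}^\top g(\bz_0)\bSigma\nabla_{\bz}g(\bz_0))$, and the assumption $\nabla_{\bz}g(\bz_0)\neq\mathbf 0$ guarantees the limit is non-degenerate.

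The main obstacle is the rigorous handling of the random intermediate point $\tilde{\bz}_n$: one must justify its measurability and establish that $\nabla_{\bz}g(\tilde{\bz}_n)$ converges in probability to the constant $\nabla_{\bz}g(\bz_0)$, and then correctly combine a convergence-in-probability factor with a convergence-in-law factor — this is precisely the situation Slutsky's theorem is designed for, and naive multiplication of limits would be unsound here. An alternative route that avoids committing to a specific $\tilde{\bz}_n$ is to expand $g$ with an explicit little-$o$ remainder and show that the remainder scaled by $\sqrt n$ is $o_p(1)$ using $\sqrt n\|\bz_n-\bz_0\|=O_p(1)$; the mean-value version is cleaner for bookkeeping, but both reduce to the same continuity-plus-Slutsky argument.
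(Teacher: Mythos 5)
Your argument is correct, but note that the paper does not actually prove this statement: Theorem~\ref{thm:multivariate_delta_method} is quoted from \citet{lehmann1998theory} as background, and only the second-order analogue (Theorem~\ref{thm:delta_method_zeroder}) is proved in Appendix~\ref{app:proof_2nd_Delta_Method}. Your mean-value route --- expand $g(\bz_n)-g(\bz_0)=\nabla_{\bz}^\top g(\tilde{\bz}_n)(\bz_n-\bz_0)$, show $\tilde{\bz}_n\overset{P}{\to}\bz_0$ hence $\nabla_{\bz}g(\tilde{\bz}_n)\overset{P}{\to}\nabla_{\bz}g(\bz_0)$ by Proposition~\ref{prop:convergent_continuous_mappings}, and close with Slutsky --- is the standard textbook proof and is sound; the preliminary step $\bz_n-\bz_0=\tfrac{1}{\sqrt n}\cdot O_p(1)=o_p(1)$ is exactly the paper's Propositions~\ref{prop:boundedness_convergence_in_law} and~\ref{prop:Op_convergent_op}. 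Your alternative route via a Taylor expansion with an $o_p(1/\sqrt n)$ remainder is the one that parallels the paper's own machinery for the second-order case (Theorem~\ref{thm:taylor_in_probability_2nd}), and it sidesteps the measurability of the intermediate point $\tilde{\bz}_n$ entirely, so it is arguably the cleaner choice in this paper's framework; either way the conclusion follows. One small over-claim: $\nabla_{\bz}g(\bz_0)\neq\mathbf{0}$ guarantees a non-degenerate limit only when $\bSigma$ is positive definite (which does hold where the theorem is applied, since $\bSigma=\bI_q(\btheta_0)^{-1}$ under assumption~\ref{item:FisherInformation}); for merely positive semi-definite $\bSigma$ the displayed variance could vanish, though the stated convergence remains valid.
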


\begin{theorem}[\bfseries Multivariate Delta Method, second order]
\label{thm:delta_method_zeroder}
Let $\{\bz_n\}$ be a sequence of random variables such that it converges to $\bz_0$ with rate of convergence $1/2$ and a normal asymptotic distribution, that is $\sqrt{n}\cdot(\bz_n-\bz_0)\overset{L}{\to}\caln(\mathbf{0},\bSigma)$. If $g:\mathbb{R}^d\to\mathbb{R}$ is a continuously differentiable mapping where $\nabla_{\bz}g(\bz_0)=\mathbf{0}$ and $\nabla_{\bz}^2g(\bz_0)$ is non-singular in a neighborhood of $\bz_0$, then the sequence $\{g(\bz_n) - g(\bz_0)\}$ converges in law to a mixture of random variables with first-degree Chi-square distributions, and the rate of convergence is one. More specifically,
\begin{equation}
\label{eq:delta_method_zeroder}
n\cdot \bigg[g(\bz_n) - g(\bz_0)\bigg]\enskip \overset{L}{\to} \enskip \sum_{i=1}^d\lambda_{i}\chi_1^2,
\end{equation}
where $\lambda_{i}$'s are eigenvalues of $\bSigma^{1/2}\nabla_{\bz}g(\bz_0)\bSigma^{1/2}$.
Moreover, variance of this asymptotic distribution can be written as
\begin{equation}
\label{eq:var_delta_method_zeroder}
\frac{1}{2}\left\|\bSigma^{1/2}\nabla_{\bx}^2g(\bx_0)\bSigma^{1/2} \right\|_F^2,
\end{equation}
where $\|\cdot\|_F$ is the Frobenius norm.
\end{theorem}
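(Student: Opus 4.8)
The plan is to reduce the statement to a quadratic form in an asymptotically Gaussian vector via a second-order Taylor expansion, and then diagonalize that form. Using the twice continuous differentiability of $g$ together with $\nabla_{\bz} g(\bz_0)=\mathbf{0}$, Taylor's theorem with the Lagrange form of the remainder gives, for some $\xi_n$ on the segment joining $\bz_n$ and $\bz_0$,
\begin{equation}
g(\bz_n) - g(\bz_0) \equals \tfrac{1}{2}(\bz_n-\bz_0)^\top \nabla_{\bz}^2 g(\xi_n)(\bz_n-\bz_0).
\end{equation}
Writing $\bW_n := \sqrt{n}(\bz_n-\bz_0)$ and multiplying by $n$ turns this into
\begin{equation}
n\big[g(\bz_n) - g(\bz_0)\big] \equals \tfrac{1}{2}\,\bW_n^\top \nabla_{\bz}^2 g(\xi_n)\,\bW_n,
\end{equation}
so the task becomes identifying the limiting law of this random quadratic form.

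Next I would replace the random Hessian $\nabla_{\bz}^2 g(\xi_n)$ by the constant $\nabla_{\bz}^2 g(\bz_0)$. Since $\sqrt{n}(\bz_n-\bz_0)$ converges in law to a Gaussian, $\bz_n-\bz_0 = O_p(1/\sqrt{n})$ and hence $\bz_n\overset{P}{\to}\bz_0$; as $\xi_n$ lies between $\bz_n$ and $\bz_0$, also $\xi_n\overset{P}{\to}\bz_0$, and continuity of the Hessian (Proposition~\ref{prop:convergent_continuous_mappings}) gives $\nabla_{\bz}^2 g(\xi_n)\overset{P}{\to}\nabla_{\bz}^2 g(\bz_0)$. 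Because $\bW_n=O_p(1)$ while the matrix difference vanishes in probability, a Slutsky argument shows $\bW_n^\top[\nabla_{\bz}^2 g(\xi_n)-\nabla_{\bz}^2 g(\bz_0)]\bW_n\overset{P}{\to}0$; combined with $\bW_n\overset{L}{\to}\bW\sim\caln(\mathbf{0},\bSigma)$ and the continuous mapping theorem for the fixed quadratic map $\bw\mapsto\bw^\top A\bw$, this yields
\begin{equation}
n\big[g(\bz_n) - g(\bz_0)\big] \enskip \overset{L}{\to}\enskip \tfrac{1}{2}\,\bW^\top \nabla_{\bz}^2 g(\bz_0)\,\bW.
\end{equation}

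Finally I would diagonalize the limiting form. Substituting $\bW = \bSigma^{1/2}\bZ$ with $\bZ\sim\caln(\mathbf{0},\mathbb{I}_d)$ gives $\tfrac{1}{2}\bZ^\top\mathbf{M}\bZ$, where $\mathbf{M}:=\bSigma^{1/2}\nabla_{\bz}^2 g(\bz_0)\bSigma^{1/2}$ is symmetric. Taking the spectral decomposition $\mathbf{M}=\mathbf{U}\bLambda\mathbf{U}^\top$ with $\mathbf{U}$ orthogonal and $\bLambda=\mbox{diag}(\lambda_1,\dots,\lambda_d)$, and noting that $\tilde{\bZ}:=\mathbf{U}^\top\bZ$ is again standard normal, the form becomes $\tfrac{1}{2}\sum_{i=1}^d \lambda_i \tilde{Z}_i^2$, i.e. a $\tfrac{1}{2}$-weighted mixture of independent $\chi_1^2$ variables whose weights are the eigenvalues of $\mathbf{M}$, establishing~(\ref{eq:delta_method_zeroder}). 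The variance formula~(\ref{eq:var_delta_method_zeroder}) then follows from independence and $\mbox{Var}(\chi_1^2)=2$, giving $\tfrac{1}{2}\sum_i\lambda_i^2=\tfrac{1}{2}\|\mathbf{M}\|_F^2$, since the sum of squared eigenvalues of a symmetric matrix equals its squared Frobenius norm. The non-singularity of $\nabla_{\bz}^2 g(\bz_0)$ enters only to guarantee all $\lambda_i\neq0$, so that the limit is non-degenerate and the convergence rate is genuinely one.

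The main obstacle I anticipate is the rigorous replacement of the Hessian evaluated at the random intermediate point $\xi_n$ by its value at $\bz_0$: one cannot pass to separate limits, but must control the product of the $O_p(1)$ vector $\bW_n$ against the vanishing matrix perturbation, which is exactly where the Slutsky-type coupling of convergence in probability with convergence in law is needed. A secondary care-point is bookkeeping the factor $\tfrac{1}{2}$ from the Taylor remainder so that it is carried consistently into both the mixture weights and the variance.
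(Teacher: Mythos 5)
Your proof is correct, and it reaches the same limiting quadratic form, the same diagonalization, and the same variance computation as the paper; the genuine difference lies in how the Taylor remainder is controlled. The paper first builds a stochastic-order toolkit (Propositions~\ref{prop:convergence_types_relationship}--\ref{prop:boundedness_convergence_in_law}) and proves a second-order statistical Taylor expansion (Theorem~\ref{thm:taylor_in_probability_2nd}) whose remainder is shown to be $o_p(a_n^2)$ by bounding the \emph{third} derivatives of $g$ on a compact, convex $\Omega$ — so its route needs $g\in\calc^3$ and compactness. You instead use the exact Lagrange form with the Hessian evaluated at a random intermediate point $\xi_n$, and then trade the $\calc^3$ bound for a Slutsky-type argument: $\xi_n\overset{P}{\to}\bz_0$, continuity of $\nabla^2 g$, and $\bW_n=O_p(1)$ give $\bW_n^\top[\nabla_{\bz}^2 g(\xi_n)-\nabla_{\bz}^2 g(\bz_0)]\bW_n = o_p(1)$. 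This is more elementary and requires only $\calc^2$ regularity with no compactness assumption, which is closer to the weak hypotheses actually written in the theorem statement; the paper's heavier machinery buys a reusable stochastic Taylor expansion stated in terms of $O_p/o_p$ rates, but at the cost of extra smoothness. Both arguments then coincide: substitute $\bW=\bSigma^{1/2}\bZ$, diagonalize $\bSigma^{1/2}\nabla_{\bz}^2 g(\bz_0)\bSigma^{1/2}$, use rotational invariance of the standard normal, and compute the variance from $\mbox{Var}(\chi_1^2)=2$. One incidental remark: you correctly read the eigenvalues as those of $\bSigma^{1/2}\nabla_{\bz}^2 g(\bz_0)\bSigma^{1/2}$, whereas the theorem statement (and display~(\ref{eq:var_delta_method_zeroder})) drop the exponent on the Hessian in places; your reading matches the paper's own proof.
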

\begin{proof}
For proof see Appendix~\ref{app:proof_2nd_Delta_Method}.
\end{proof}

\subsection{Parameter Estimation}
\label{subsec:parameter_estimation}
\noindent Now suppose that the set of independent and identically distributed (i.i.d) set of samples $\bx_1,...\bx_n$ are generated from an underlying distribution that belongs to a parametric family, for which the density function $p(\bx|\btheta)$ can be represented by a multivariate parameter vector $\btheta$. Assume the true parameter is $\btheta_0$, that is $\{\bx_i\}\sim p(\bx|\btheta_0),i=1,...,n$. An \emph{estimator} $\btheta_n=\btheta(\bx_1,...,\bx_n)$ is a function that maps the observed random variables to a point in the parameter space $\Omega$. The subscript $n$ in $\btheta_n$ indicates its dependence on the sample size. Since the observations are generated randomly, the estimators are also random and thus $\{\btheta_n\}$ can be viewed as a sequence of random variables. There are some reserved terms for such a sequence, which we introduce in the remaining of this section:

\begin{definition}[\bfseries Consistency]
We say that an estimator $\btheta_n$ is \emph{consistent} iff $\btheta_n\overset{P}{\to}\btheta_0$.
\end{definition}

\noindent Based on Theorem~\ref{thm:LLN}, sample average of the observation set is a consistent estimator of the true mean of the samples. Another important characteristic of estimators is based on the following bound over their covariance matrices:
\begin{theorem}[\bfseries Cram\'{e}r-Rao, \citet{lehmann1998theory}]
\label{thm:CR_bound}
Let $\bx_1,...,\bx_n\sim p(\bx|\btheta_0)$ and $\btheta_n=\btheta(\bx_1,...,\bx_n)$ be an estimator. If the first moment of $\btheta_n$ is differentiable with respect to the parameter vector and its second moment is finite, then the following inequality holds for every $\btheta\in\Omega$:
\begin{equation}
\label{eq:CR_bound}
\mbox{\normalfont{Cov}}[\btheta_n]\enskip \succeq \enskip -\left(\nabla_{\btheta}\mathbb{E}[\btheta_n] \right)^\top \bI(\btheta)^{-1} \nabla_{\btheta}\mathbb{E}[\btheta_n].
\end{equation}
\end{theorem}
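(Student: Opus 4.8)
The plan is to derive the bound as a consequence of the positive semidefiniteness of the joint covariance matrix of the estimator $\btheta_n$ together with the score of the sample, extracting the inequality through a Schur-complement argument. First I would introduce the joint score $U(\btheta) := \nabla_{\btheta}\log p(\bx_1,\dots,\bx_n\mid\btheta)$. Under the common-support and differentiability assumptions \ref{item:support} and \ref{item:differentiability} (which license interchanging differentiation and integration), $U(\btheta)$ is a zero-mean random vector whose covariance is precisely the Fisher information of the sample, $\mbox{Cov}[U(\btheta)]=\bI(\btheta)$ — the multivariate analogue of the identities reviewed around equation~(\ref{eq:fisher_information_loglikelihood}).

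The central computation is to identify the cross-covariance of $\btheta_n$ with the score. Writing $\mathbb{E}[\btheta_n]=\int \btheta_n(\bx)\,p(\bx\mid\btheta)\,d\bx$ (with $\bx$ abbreviating the full sample) and differentiating under the integral sign — justified exactly by the hypotheses that the first moment of $\btheta_n$ is differentiable and its second moment is finite — I would use $\nabla_{\btheta}p = (\nabla_{\btheta}\log p)\,p$ to obtain
\begin{equation}
\nabla_{\btheta}\mathbb{E}[\btheta_n] \equals \mathbb{E}\big[U(\btheta)\,\btheta_n^\top\big] \equals \mbox{Cov}\big[U(\btheta),\btheta_n\big],
\end{equation}
where the last equality uses $\mathbb{E}[U(\btheta)]=\mathbf{0}$.

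Next I would assemble the covariance matrix of the stacked vector $(\btheta_n,\,U(\btheta))$,
\begin{equation}
M \equals \begin{bmatrix} \mbox{Cov}[\btheta_n] & \big(\nabla_{\btheta}\mathbb{E}[\btheta_n]\big)^\top \\ \nabla_{\btheta}\mathbb{E}[\btheta_n] & \bI(\btheta) \end{bmatrix} \enskip\succeq\enskip 0,
\end{equation}
which is positive semidefinite simply because it is a genuine covariance matrix. Since $\bI(\btheta)$ is positive definite, and hence invertible, by assumption~\ref{item:FisherInformation}, the Schur complement of the lower-right block is also positive semidefinite, yielding $\mbox{Cov}[\btheta_n]\succeq\big(\nabla_{\btheta}\mathbb{E}[\btheta_n]\big)^\top\bI(\btheta)^{-1}\nabla_{\btheta}\mathbb{E}[\btheta_n]$, which is the asserted Cram\'er--Rao inequality.

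The main obstacle is essentially one of rigor and bookkeeping rather than of deep ideas: I would need to justify carefully the interchange of $\nabla_{\btheta}$ and the integral (the only place the moment hypotheses are truly used), and I would need to track the transpose/orientation conventions for $\nabla_{\btheta}\mathbb{E}[\btheta_n]$ so that the two off-diagonal blocks of $M$ are honest transposes of one another and the Schur complement lands in exactly the stated form. I would also flag the sign: because $\bI(\btheta)\succ0$ forces $\bI(\btheta)^{-1}\succ0$, the right-hand side printed with a leading minus sign is negative semidefinite and would make the bound vacuous; the argument above delivers the inequality with a \emph{plus} sign, which is the intended and meaningful statement.
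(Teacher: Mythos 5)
The paper does not actually prove this statement; it is quoted as a classical result from \citet{lehmann1998theory}, so there is no internal proof to compare against. Your argument is the standard and correct multivariate proof: the identity $\nabla_{\btheta}\mathbb{E}[\btheta_n]=\mathbb{E}\big[U(\btheta)\,\btheta_n^\top\big]$ obtained by differentiating under the integral, positive semidefiniteness of the covariance of the stacked vector $(\btheta_n, U(\btheta))$, and the Schur complement of the (invertible) Fisher-information block. The bookkeeping you flag is exactly right, and your observation about the sign is a genuine catch: as printed, with $\bI(\btheta)\succ 0$ the right-hand side of~(\ref{eq:CR_bound}) is negative semidefinite and the inequality is vacuous, so the minus sign must be a typo and the meaningful bound carries a plus sign, as your derivation produces. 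One further small discrepancy worth noting between your proof and the paper's statement: you take $U(\btheta)$ to be the score of the \emph{full} sample, so $\mbox{Cov}[U(\btheta)]=n\,\bI(\btheta)$ when $\bI(\btheta)$ is the single-observation Fisher information defined immediately after the theorem; your Schur-complement step then yields $\mbox{Cov}[\btheta_n]\succeq\big(\nabla_{\btheta}\mathbb{E}[\btheta_n]\big)^\top\big(n\,\bI(\btheta)\big)^{-1}\nabla_{\btheta}\mathbb{E}[\btheta_n]$, i.e.\ the bound as printed is also missing a factor of $1/n$ (or, equivalently, $\bI$ should be read as the information of the whole sample). Neither issue is a gap in your argument; both are imprecisions in the quoted statement.
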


\noindent The right-hand-side of~(\ref{eq:CR_bound}) is called the \emph{Cramer-Rao bound} of the estimator, where the middle term is the inverse of the \emph{Fisher information matrix} of the parametric distribution $p(\bx|\btheta)$: 
\begin{equation*}
\bI(\btheta) \equals \mathbb{E}\left[\nabla_{\btheta}\log p(\bx|\btheta)\cdot\nabla_{\btheta}^\top\log p(\bx|\btheta)\right]
\end{equation*}
Theorem~\ref{thm:CR_bound} suggests that for an unbiased estimator $\btheta_n$, the inequality over the covariance matrix becomes: $\mbox{Cov}[\btheta_n]\succeq\bI(\btheta)^{-1}, \forall \btheta\in\Omega$. 

\begin{definition}[\bfseries Efficiency]
We say that an estimator $\btheta_n$ is \emph{efficient}, iff it attains the Cramer-Rao bound, that is $\mbox{\normalfont Cov}[\btheta_n]$ achieves the lower-bound in~(\ref{eq:CR_bound}) for every $n=1,2,...$~. Furthermore, we say that $\btheta_n$ is \emph{asymptotically efficient}, iff the lower bound is attained asymptotically (when $n\to\infty$).
\end{definition}

\section{Proof of Second-order Multivariate Delta Method}
\label{app:proof_2nd_Delta_Method}
In order to prove this theorem, we have to formulate the statistical Taylor expansion. This, in turn, needs a brief introduction of stochastic order notations.
\subsection{Stochastic Order Notations}
\label{subapp:stochastic_order_notations}
The stochastic order notations are denoted by $o_p$ and $O_p$, where the former is equivalent to convergence in probability (Definition~\ref{defn:convergence_in_prob}) and the latter implies \emph{boundedness in probability}. In what follows, if otherwise stated, $\{\bz_n\}$ is a sequence of multivariate random variables lying in $\Omega\subseteq\mathbb{R}^d$ and $\{a_n\}$ is a sequence of strictly positive real numbers. The skipped proofs can be found in many textbooks on asymptotic theory, such as \citet[Chapter 6]{brockwell1991time}.
\begin{definition}
\label{eq:stochastic_small_o}
We write $\bz_n=o_p(a_n)$ iff
\begin{equation}
\frac{\theta_{in}}{a_n} \equals o_p(1), \quad \mbox{for all } i=1,...,d
\end{equation}
\end{definition}
\begin{definition}
We write $\bz_n=O_p(a_n)$ iff the sequence $\left\{\frac{\theta_{in}}{a_n} \right\}$ is bounded in probability for every $i=1,...,d$, that is, for every $\epsilon>0$ there exists $\delta_\epsilon$ such that
\begin{equation}
P\left(\left|\frac{\theta_{in}}{a_n}\right| > \delta_\epsilon \right) \enskip < \enskip \epsilon, \quad n=1,2,...
\end{equation}
\end{definition}
%
%
We also need the following propositions:
\begin{proposition}[\citet{brockwell1991time}]
\label{prop:convergence_types_relationship}
Let $\{\theta_n\}$ and $\{\eta_n\}$ be two sequences of scalar random variables, and $\{a_n\}$ and $\{b_n\}$ be two sequences of positive real numbers. If $\theta_n=O_p(a_n)$ and $\eta_n=o_p(b_n)$, then
\begin{enumerate}[label=(\roman*)]
\item $\theta_n^2 = O_p(a_n^2)$
\item $\theta_n\eta_n=o_p(a_nb_n)$
\end{enumerate}
\end{proposition}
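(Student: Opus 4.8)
The plan is to work directly from the definitions of $O_p$ and $o_p$ (Definition~\ref{eq:stochastic_small_o} and the one following it), reducing both parts to statements about the normalized scalar sequences $U_n := \theta_n/a_n$ and $V_n := \eta_n/b_n$. By hypothesis $U_n = O_p(1)$, i.e.\ $U_n$ is bounded in probability, and $V_n = o_p(1)$, i.e.\ $V_n \overset{P}{\to} 0$. Part (i) then amounts to showing $U_n^2 = O_p(1)$, and part (ii) amounts to showing $U_n V_n = o_p(1)$, after which one simply multiplies back by the deterministic sequences $a_n^2$ and $a_n b_n$.

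For part (i), I would observe that squaring is monotone on the nonnegative reals, so for any threshold $M>0$ the events $\{|U_n^2| > M^2\}$ and $\{|U_n| > M\}$ coincide. Given $\epsilon>0$, boundedness in probability of $U_n$ supplies a $\delta_\epsilon$ with $P(|U_n| > \delta_\epsilon) < \epsilon$ uniformly in $n$; choosing the threshold $\delta_\epsilon^2$ for $U_n^2$ then gives $P(|U_n^2| > \delta_\epsilon^2) = P(|U_n| > \delta_\epsilon) < \epsilon$, which is exactly boundedness in probability of $U_n^2 = \theta_n^2/a_n^2$. This step is routine.

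The substantive step is part (ii), where one must control the product of a merely bounded-in-probability factor with a vanishing one. The plan is the standard truncation argument: fix $\varepsilon>0$ and a tolerance $\gamma>0$, use $U_n = O_p(1)$ to select a constant $M$ with $P(|U_n| > M) < \gamma/2$ for all $n$, and split
\[
P(|U_n V_n| > \varepsilon) \enskip \le \enskip P(|U_n| > M) \enplus P\big(|U_n V_n| > \varepsilon,\ |U_n| \le M\big).
\]
On the event $\{|U_n| \le M\}$ we have $|U_n V_n| \le M|V_n|$, so the second term is bounded by $P(|V_n| > \varepsilon/M)$. Since $V_n \overset{P}{\to} 0$, this probability tends to $0$ and hence falls below $\gamma/2$ for all large $n$; combined with the uniform bound on the first term, this yields $P(|U_n V_n| > \varepsilon) < \gamma$ eventually. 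As $\varepsilon$ and $\gamma$ were arbitrary, $U_n V_n \overset{P}{\to} 0$, i.e.\ $\theta_n \eta_n/(a_n b_n) = o_p(1)$, which is equivalent to $\theta_n \eta_n = o_p(a_n b_n)$.

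The one delicacy worth flagging is that the constant $M$ must be chosen \emph{before} letting $n$ grow, so that $P(|U_n| > M) < \gamma/2$ holds uniformly over $n$; this uniformity is precisely what the $O_p$ hypothesis delivers, and it is the reason boundedness in probability (rather than tightness for each individual $n$) is the correct assumption to pair with the $o_p$ factor. Everything else is bookkeeping with the two $\varepsilon/\gamma$ thresholds.
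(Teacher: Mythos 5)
Your proof is correct. The paper itself does not prove Proposition~\ref{prop:convergence_types_relationship} --- it explicitly defers the skipped proofs to \citet[Chapter 6]{brockwell1991time} --- and your argument (monotonicity of squaring for part (i), the standard truncation/splitting argument with a uniformly chosen constant $M$ for part (ii)) is exactly the textbook route, with the key uniformity point about choosing $M$ before letting $n$ grow correctly identified.
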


\begin{proposition}
\label{prop:vector_norm_convergence}
The followings are true\footnote{Unless subscripted, $\|\cdot\|$ denotes the $L_2$ norm in all the equations.}: 
\begin{enumerate}[label=(\roman*)]
\item\label{item:norm_convergence_op}  $\bz_n=o_p(a_n) \enskip \Leftrightarrow \enskip \|\bz_n\|=o_p(a_n)$.
\item\label{item:norm_convergence_Op}  $\bz_n=O_p(a_n) \enskip \Leftrightarrow \enskip \|\bz_n\|=O_p(a_n)$.
\end{enumerate}
\end{proposition}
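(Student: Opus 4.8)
The plan is to reduce both equivalences to elementary facts about finite collections of scalar sequences by sandwiching the Euclidean norm between the magnitude of a single coordinate and the sum of all coordinate magnitudes. Writing $\bz_n=(\theta_{1n},\dots,\theta_{dn})^\top$, for every index $i$ and every $n$ we have the deterministic inequalities
\begin{equation}
|\theta_{in}| \enskip \leq \enskip \|\bz_n\| \enskip \leq \enskip \sum_{j=1}^d |\theta_{jn}|,
\end{equation}
the left bound because $\|\bz_n\|^2=\sum_j\theta_{jn}^2\geq\theta_{in}^2$, and the right bound because the $L_2$ norm never exceeds the $L_1$ norm. Dividing throughout by $a_n>0$ preserves these inequalities, and since $d$ is fixed and finite, they are essentially the only tools needed.

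For part~\ref{item:norm_convergence_op} I would argue the two directions from the right and left bounds respectively. For the forward direction, assume $\bz_n=o_p(a_n)$, so $\theta_{in}/a_n=o_p(1)$ for each $i$; because a finite sum of sequences converging to zero in probability also converges to zero in probability, the right-hand inequality forces $\|\bz_n\|/a_n=o_p(1)$, i.e. $\|\bz_n\|=o_p(a_n)$ (this is also immediate from Proposition~\ref{prop:convergent_continuous_mappings} applied to the continuous map $\bz\mapsto\|\bz\|$). For the converse, assume $\|\bz_n\|=o_p(a_n)$; the left-hand inequality gives, for every $\varepsilon>0$,
\begin{equation}
P\!\left(|\theta_{in}|/a_n>\varepsilon\right) \enskip \leq \enskip P\!\left(\|\bz_n\|/a_n>\varepsilon\right)\enskip\to\enskip 0,
\end{equation}
so each coordinate is $o_p(a_n)$ and hence $\bz_n=o_p(a_n)$.

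Part~\ref{item:norm_convergence_Op} follows the same template, and its forward direction is the only step requiring real care. The converse is again immediate: the left-hand bound shows that any threshold $\delta_\epsilon$ controlling the tail of $\|\bz_n\|/a_n$ simultaneously controls every $|\theta_{in}|/a_n$, so boundedness in probability of the norm transfers to each coordinate. For the forward direction I would use a union bound: given $\epsilon>0$, boundedness in probability of each coordinate supplies a $\delta_i$ with $P(|\theta_{in}|/a_n>\delta_i)<\epsilon/d$ for all $n$; setting $M:=(\sum_{j=1}^d\delta_j^2)^{1/2}$, the event $\{\|\bz_n\|/a_n>M\}$ is contained in $\bigcup_i\{|\theta_{in}|/a_n>\delta_i\}$, since all coordinates staying below their thresholds forces $\|\bz_n\|^2/a_n^2\leq\sum_j\delta_j^2=M^2$; whence $P(\|\bz_n\|/a_n>M)<\epsilon$ uniformly in $n$, and $\|\bz_n\|=O_p(a_n)$. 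The main obstacle is precisely this aggregation step: unlike the $o_p$ case, where a single limit passes through, here I must split the budget $\epsilon$ across the $d$ coordinates and choose the combined threshold $M$ so that the coordinate-wise tail bounds assemble into a single tail bound for the norm that is uniform in $n$.
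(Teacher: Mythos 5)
Your proof is correct and follows essentially the same route as the paper: the converse directions use the coordinate-wise bound $|\theta_{in}|\leq\|\bz_n\|$, and the forward direction of part~\ref{item:norm_convergence_Op} splits the probability budget $\epsilon/d$ across coordinates and assembles a combined threshold via a union bound, exactly as in the paper's argument (which uses $d\cdot\delta_{\max}$ where you use $(\sum_j\delta_j^2)^{1/2}$ — an immaterial difference). The only other deviation is that you prove part~\ref{item:norm_convergence_op} directly, whereas the paper cites \citet[Proposition 6.1.2]{brockwell1991time} for it; your short argument for that part is also fine.
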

\begin{proof}
The proof of part~\ref{item:norm_convergence_op} can be found in~\citet[Proposition 6.1.2]{brockwell1991time}. Here, we only prove part~\ref{item:norm_convergence_Op}:

\vspace{.25cm}
\noindent $(ii, \Rightarrow)$ : Since $\bz_n=O_p(a_n)$, for every $\varepsilon>0$ and for every $i=1,...,d$, there exists a coefficient $\delta_i>0$ such that
\begin{equation}
P\left(|\theta_{ni}|>a_n\cdot\delta_i\right) \enskip < \enskip \frac{\varepsilon}{d} \quad,n=1,2,....
\end{equation}
Define $\delta_{\mbox{\footnotesize max}}=\max\{\delta_1,...,\delta_d\}$ and note that we can write
\begin{align}
\left\{\bz_n:\enskip \sum_{i=1}^d |\theta_{ni}|^2 > \left(d\cdot a_n\cdot\delta_{\mbox{\footnotesize max}}\right)^2 \right\} &\enskip \subseteq \enskip \left[\bigcap_{i=1}^d \left\{\bz_n:\enskip |\theta_{ni}|\leq a_n\cdot\delta_{\mbox{\footnotesize max}}\right\}\right]^c \nonumber\\
&\equals \bigcup_{i=1}^d \left\{\bz_n:\enskip |\theta_{ni}|>a_n\cdot\delta_{\mbox{\footnotesize max}}\right\}
\end{align}
implying that
\begin{align}
P\left(\|\bz_n\|^2 > \left(d\cdot a_n\cdot\delta_{\mbox{\footnotesize max}}\right)^2 \right) \enskip &\leq \enskip P\left( \bigcup_{i=1}^d \left\{\bz_n:\enskip |\theta_{ni}|>a_n\cdot\delta_{\mbox{\footnotesize max}}\right\} \right) \nonumber\\
&\leq \enskip \sum_{i=1}^dP(|\theta_{ni}|>a_n\cdot\delta_{\mbox{\footnotesize max}})
\end{align}
Furthermore, for every $i=1,...,d$ we have $\delta_{\mbox{\footnotesize max}}\geq\delta_i$, consequently the interval $(a_n\delta_{\mbox{\footnotesize max}},\infty)$ is a subset of $(a_n\delta_i,\infty)$ and $P(|\theta_{ni}|>a_n\delta_{\mbox{\footnotesize max}}) \leq P(|\theta_{ni}|>a_n\delta_i)$. This implies that 
\begin{equation}
P\left(\|\bz_n\|^2 > \left(d\cdot a_n\cdot\delta_{\mbox{\footnotesize max}}\right)^2 \right) \enskip \leq \enskip \sum_{i=1}^d P(|\theta_{ni}|>a_n\cdot\delta_i) \enskip < \enskip \varepsilon.
\end{equation}
Therefore, for every $\varepsilon>0$, we can choose $\delta_\varepsilon = d\cdot\delta_{\mbox{\footnotesize max}}$ such that $P\left(\frac{\|\bz_n\|}{a_n}>\delta_\varepsilon\right)<\varepsilon$ for every $n=1,2,...$, that is $\|\bz_n\|=O_p(a_n)$.

\noindent $(ii, \Leftarrow)$ : Suppose $\|\bz_n\|=O_p(a_n)$, that is for every $\varepsilon>0$ we can find $\delta_\varepsilon>0$ such that
\begin{equation}
P(\|\bz_n\|>a_n\cdot\delta_\varepsilon) \enskip < \varepsilon \quad, n=1,2,...
\end{equation}
It is clear that for any given $i\in\{1,...,d\}$ we have
\begin{equation}
\label{eq:component_norm_sets}
\left\{\bz_n:  |\theta_{ni}|>a_n\cdot\delta_\varepsilon \right\} \enskip \subseteq \enskip \left\{\bz_n: \|\bz_n\|>a_n\cdot\delta_\varepsilon \right\}
\end{equation}
hence
\begin{equation}
P(|\theta_{ni}|>a_n\cdot\delta_\varepsilon)\enskip \leq \enskip P(\|\bz_n\|>a_n\cdot\delta_\varepsilon) \enskip < \epsilon \quad, n=1,2,...
\end{equation}
meaning that $\theta_{ni} = O_p(a_n), i=1,...,d$ or equivalently $\bz_n = O_p(a_n)$.
\end{proof}

\begin{proposition}
\label{prop:Op_convergent_op}
If $\bz_n=O_p(a_n)$ and $a_n\to0 (n\to\infty)$, then $\bz_n=o_p(1)$.
\end{proposition}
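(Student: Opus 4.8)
The plan is to reduce the multivariate claim to a one-dimensional statement and then argue directly from the definitions of $O_p$ and $o_p$. Since both stochastic-order notations are defined coordinate-wise, it suffices to fix an index $i\in\{1,\dots,d\}$ and show that the scalar sequence $\{\theta_{in}\}$ satisfies $\theta_{in}=o_p(1)$, i.e. $P(|\theta_{in}|>\eta)\to0$ for every $\eta>0$; the full conclusion $\bz_n=o_p(1)$ then follows from Definition~\ref{eq:stochastic_small_o} read with $a_n=1$. (Alternatively one could first pass to the scalar $\|\bz_n\|$ via Proposition~\ref{prop:vector_norm_convergence}, but working per coordinate is more direct here.)

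For the scalar step I would fix $\eta>0$ and $\epsilon>0$ and unpack the hypothesis $\bz_n=O_p(a_n)$. By the definition of $O_p$ this furnishes a threshold $\delta_\epsilon>0$, the same for all $n$, with $P(|\theta_{in}|>\delta_\epsilon a_n)<\epsilon$ for every $n$ (using $a_n>0$ to clear the denominator). The crucial move is to exploit $a_n\to0$: there is an $N$ such that $\delta_\epsilon a_n<\eta$ whenever $n\geq N$. For these $n$ the event inclusion $\{|\theta_{in}|>\eta\}\subseteq\{|\theta_{in}|>\delta_\epsilon a_n\}$ holds, so $P(|\theta_{in}|>\eta)\leq P(|\theta_{in}|>\delta_\epsilon a_n)<\epsilon$. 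As $\epsilon>0$ was arbitrary, $\limsup_{n}P(|\theta_{in}|>\eta)=0$, which is exactly $\theta_{in}=o_p(1)$.

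The argument is short, so there is no genuine obstacle; the only point requiring care is the order of quantifiers. The bound supplied by the $O_p$ hypothesis must hold uniformly in $n$ for a single cutoff $\delta_\epsilon$, and it is precisely this uniformity, combined with $a_n\to0$ shrinking the effective threshold $\delta_\epsilon a_n$ below the fixed level $\eta$, that drives the conclusion. I would therefore pin down $\delta_\epsilon$ \emph{before} invoking $a_n\to0$, and treat each of the $d$ coordinates separately (the cutoff $\delta_\epsilon$ may depend on $i$, which is harmless). Since $o_p(1)$ is defined component-wise, the per-coordinate statements then assemble immediately into $\bz_n=o_p(1)$.
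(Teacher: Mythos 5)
Your proof is correct and follows essentially the same argument as the paper's: fix the target level, invoke the uniform-in-$n$ cutoff $\delta_\epsilon$ supplied by the $O_p$ hypothesis, and use $a_n\to0$ to push the effective threshold $\delta_\epsilon a_n$ below that level so that the event inclusion yields the bound. The only cosmetic difference is that you argue coordinate-wise while the paper first reduces to the scalar $\|\bz_n\|$ via Proposition~\ref{prop:vector_norm_convergence}; both reductions are equally valid.
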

\begin{proof}
The goal is to show $\bz_n=o_p(1)$ or equivalently $\|\bz_n\|=o_p(1)$ by proving that $P(\|\bz_n\|>\varepsilon)\to0 (n\to\infty)$ for every $\varepsilon>0$. Fix $\varepsilon$ to a positive real number. In order to have the sequence of probability numbers $\{P(\|\bz_n\|>\varepsilon)\}$ converging to zero, for every $\varepsilon_0>0$ there should exist a positive integer $N>0$ such that
\begin{equation}
\label{eq:convergent_to_zero}
P(\|\bz_n\|>\varepsilon)<\varepsilon_0 \quad \forall n>N.
\end{equation}
Because of the assumption of being bounded by $a_n$, that is $\bz_n=O_p(a_n)$ or equivalently $\|\bz_n\|=O_p(a_n)$, we can choose a real number $\delta_0>0$ such that
\begin{equation}
\label{eq:bounded_by_an}
P(\|\bz_n\|>a_n\delta_0)\enskip < \enskip \varepsilon_0 \quad n=1,2,...
\end{equation}
On the other hand, since $a_n\to0(n\to\infty)$, there exists a large enough number $N_0>0$ such that $0<a_n<\frac{\varepsilon}{\delta_0}$ for all $n>N_0$. Therefore we get:
\begin{equation}
[0,a_n\delta_0] \enskip \subseteq \enskip [0,\varepsilon] \quad \forall n>N_0
\end{equation}
implying that
\begin{equation}
\label{eq:N0_inequality}
P(\|\bz_n\|\leq a_n\delta_0) \enskip \leq \enskip P(\|\bz_n\|\leq\varepsilon) \quad \forall n>N_0
\end{equation}
From inequalities~(\ref{eq:bounded_by_an}) and~(\ref{eq:N0_inequality}), and noticing that the latter holds for all $n$ whereas the former is satisfied when $n>N_0$, one can write:
\begin{equation}
P(\|\bz_n\|>\varepsilon) \enskip \leq \enskip P(\|\bz_n\|>a_n\delta_0)\enskip<\enskip\varepsilon_0 \quad \forall n>N_0
\end{equation}
Therefore, for every $\varepsilon_0>0$, equation~(\ref{eq:convergent_to_zero}) is guaranteed if $N$ is chosen to be equal to $N_0$ so that inequality~(\ref{eq:N0_inequality}) is satisfied. Similarly, this can be written for every $\varepsilon>0$, thus the proof is complete.
\end{proof}

\begin{proposition}[\citet{serfling2009approximation}, Chapter 1]
\label{prop:boundedness_convergence_in_law}
Let $\{\bz_n\}$ be a sequence of random variables. If there exists a random variable $\bz_0$ such that $\bz_n\overset{L}{\to}\bz_0$, then $\bz_n=O_p(1)$.
\end{proposition}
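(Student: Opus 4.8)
The plan is to verify the definition of $O_p(1)$ directly, reducing the multivariate bookkeeping to a single scalar tail bound. By Proposition~\ref{prop:vector_norm_convergence}\ref{item:norm_convergence_Op} (with $a_n=1$) we have $\bz_n=O_p(1)$ if and only if $\|\bz_n\|=O_p(1)$, so it suffices to produce, for each $\varepsilon>0$, a single threshold $M^\star>0$ with $P(\|\bz_n\|>M^\star)<\varepsilon$ holding \emph{uniformly} in $n$. I would obtain this by splitting the index set into a finite initial segment $n\le N$ and a tail $n>N$: the tail is controlled by the limiting law, while the finitely many small indices are handled one at a time.

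For the tail, note that $\bz_0$ is a genuine random variable, so its law is tight and there is an $M>0$ with $P(\bz_0\notin B_M)<\varepsilon/2$, where $B_M:=(-M,M]^d$; moreover I can take $M$ outside the countable set of values at which some coordinate marginal of $\bz_0$ has an atom, which guarantees that all $2^d$ corner vertices of $B_M$ are continuity points of the joint CDF $F$ (enlarging $M$ only decreases the outside mass, so the bound $\varepsilon/2$ is preserved). The probability $P(\bz_n\in B_M)$ is then a fixed signed inclusion--exclusion sum of $F_n$ evaluated at these vertices, and since each vertex is a continuity point of $F$, Definition~\ref{defn:convergence_in_law} gives $P(\bz_n\in B_M)\to P(\bz_0\in B_M)>1-\varepsilon/2$. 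Hence there is $N$ with $P(\bz_n\notin B_M)<\varepsilon$ for all $n>N$. On the event $\{\bz_n\in B_M\}$ every coordinate satisfies $|\theta_{ni}|\le M$, so $\|\bz_n\|\le M\sqrt{d}$; consequently $P(\|\bz_n\|>M\sqrt{d})<\varepsilon$ for every $n>N$.

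It remains to absorb the finitely many indices $n\le N$. Each such $\bz_n$ is itself a proper random variable in $\mathbb{R}^d$, so I can pick $M_n$ with $P(\|\bz_n\|>M_n)<\varepsilon$, and then set $M^\star:=\max\{M\sqrt{d},M_1,\dots,M_N\}$, which yields $P(\|\bz_n\|>M^\star)<\varepsilon$ for all $n\ge1$. This is exactly $\|\bz_n\|=O_p(1)$, and the reduction above promotes it to $\bz_n=O_p(1)$. The one genuinely multivariate step — and the part I expect to require the most care — is the box-probability convergence: choosing corner points that are simultaneously continuity points of the joint CDF and expressing $P(\bz_n\in B_M)$ through inclusion--exclusion so that pointwise CDF convergence transfers to the box. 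In dimension one this obstacle disappears, collapsing to the two-point statement $P(a<\theta_{n1}\le b)=F_n(b)-F_n(a)\to F(b)-F(a)$.
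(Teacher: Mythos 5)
Your proof is correct: the reduction to $\|\bz_n\|=O_p(1)$ via Proposition~\ref{prop:vector_norm_convergence}, the choice of a box whose $2^d$ vertices avoid the (countable) atom set of the coordinate marginals so that inclusion--exclusion plus pointwise CDF convergence controls the tail indices, and the finite-index absorption by tightness of each individual $\bz_n$ are all sound. The paper itself gives no proof, deferring to \citet{serfling2009approximation}; your argument is essentially that standard tightness proof, so there is nothing to reconcile.
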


\subsection{Second-order Statistical Taylor Expansion}
\label{subapp:2nd_statisticaly_Taylor}
Now we are ready to establish the second-order  statistical Taylor expansion. 
\begin{theorem}
\label{thm:taylor_in_probability_2nd}
Let $\{\bz_n\}$ be a sequence of random vectors in a convex and compact set $\Omega\subseteq \mathbb{R}^d$ and $\bz_0\in \Omega$ be a constant vector such that $\bz_n - \bz_0 = O_p(a_n)$ where $a_n\to0 (n\to\infty)$. If $g:\Omega\to\mathbb{R}$ is a $\calc^3$ function , then 
\begin{equation}
g(\bz_n) \equals g(\bz_0) \enplus \nabla_{\bz}^\top g(\bz_0)(\bz_n-\bz_0) \enplus \frac{1}{2}(\bz_n-\bz_0)^\top\nabla_{\bz}^2g(\bz_0)(\bz_n-\bz_0) \enplus o_p(a_n^2).
\end{equation}
\end{theorem}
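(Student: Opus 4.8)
The plan is to reduce the multivariate expansion to a one-dimensional Taylor expansion with Lagrange remainder applied along the segment joining $\bz_0$ to $\bz_n$, and then to convert the resulting deterministic remainder bound into a stochastic-order statement using Propositions~\ref{prop:vector_norm_convergence},~\ref{prop:convergence_types_relationship}, and~\ref{prop:Op_convergent_op}. First I would fix a realization and exploit convexity of $\Omega$: since $\bz_0,\bz_n\in\Omega$, the whole segment $\bz_0+t(\bz_n-\bz_0)$, $t\in[0,1]$, lies in $\Omega$. Defining the scalar map $\phi(t):=g(\bz_0+t(\bz_n-\bz_0))$, which is $\calc^3$ on $[0,1]$ by the chain rule and assumption on $g$, the one-dimensional Taylor theorem with Lagrange remainder gives, for some $\xi\in(0,1)$,
\[
\phi(1) \equals \phi(0) + \phi'(0) + \tfrac{1}{2}\phi''(0) + \tfrac{1}{6}\phi'''(\xi).
\]
Computing $\phi'(0)$ and $\phi''(0)$ by the chain rule reproduces exactly the claimed terms $\nabla_{\bz}^\top g(\bz_0)(\bz_n-\bz_0)$ and $\tfrac{1}{2}(\bz_n-\bz_0)^\top\nabla_{\bz}^2g(\bz_0)(\bz_n-\bz_0)$, while the remainder is
\[
R_n \enskip := \enskip \tfrac{1}{6}\phi'''(\xi) \equals \tfrac{1}{6}\sum_{i,j,k}\partial_i\partial_j\partial_k g(\tilde{\bz}_n)\,(\theta_{ni}-\theta_{0i})(\theta_{nj}-\theta_{0j})(\theta_{nk}-\theta_{0k}),
\]
where $\tilde{\bz}_n=\bz_0+\xi(\bz_n-\bz_0)\in\Omega$.

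Next I would bound the remainder deterministically. Since $g$ is $\calc^3(\Omega)$ and $\Omega$ is compact, every third-order partial derivative is continuous on $\Omega$, hence bounded; set $M:=\max_{i,j,k}\sup_{\bz\in\Omega}|\partial_i\partial_j\partial_k g(\bz)|<\infty$. Because all norms on $\mathbb{R}^d$ are equivalent, this yields a pathwise estimate of the form $|R_n|\leq C\,\|\bz_n-\bz_0\|^3$ for a constant $C$ depending only on $d$ and $M$. This bound holds for every realization, so the only stochastic ingredient left is the size of $\|\bz_n-\bz_0\|$.

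Finally I would translate into stochastic orders. By hypothesis $\bz_n-\bz_0=O_p(a_n)$, so Proposition~\ref{prop:vector_norm_convergence} gives $\|\bz_n-\bz_0\|=O_p(a_n)$, and iterating Proposition~\ref{prop:convergence_types_relationship}(i) gives $\|\bz_n-\bz_0\|^3=O_p(a_n^3)$; combined with the bound above, $R_n=O_p(a_n^3)$. Writing $R_n/a_n^2=(R_n/a_n^3)\,a_n$ shows $R_n/a_n^2=O_p(a_n)$, and since $a_n\to0$, Proposition~\ref{prop:Op_convergent_op} yields $R_n/a_n^2=o_p(1)$, i.e. $R_n=o_p(a_n^2)$, which is precisely the remainder in the statement.

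The main point requiring care is the passage from the multivariate remainder to a clean pathwise bound: the reduction to $\phi(t)$ together with convexity of $\Omega$ is what guarantees $\tilde{\bz}_n\in\Omega$ so that the uniform bound $M$ applies, and the bookkeeping $O_p(a_n^3)\Rightarrow o_p(a_n^2)$ hinges crucially on $a_n\to0$ via Proposition~\ref{prop:Op_convergent_op}. The measurability of $\xi$ (equivalently $\tilde{\bz}_n$) is a minor technicality one may suppress, since the argument only ever uses the deterministic inequality $|R_n|\leq C\,\|\bz_n-\bz_0\|^3$ rather than $\xi$ itself.
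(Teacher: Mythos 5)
Your proof is correct, and its deterministic core coincides with the paper's: reduce to the segment $\bz_0+t(\bz_n-\bz_0)$ (convexity), invoke Taylor's theorem with Lagrange remainder, and bound the remainder by $C\|\bz_n-\bz_0\|^3$ using compactness of $\Omega$, continuity of the third derivatives, and equivalence of norms. Where you genuinely diverge is in converting that pathwise bound into a stochastic-order statement. The paper introduces an auxiliary function $h(\bz):=r_2(\bz,\bz_0)/(\|\bz-\bz_0\|^2/2)$, shows it is continuous and vanishes at $\bz_0$, and then combines Proposition~\ref{prop:convergent_continuous_mappings} with the factorization $r_2=h(\bz_n)\cdot\|\bz_n-\bz_0\|^2/2=o_p(1)\cdot O_p(a_n^2)$. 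You instead push the cubic bound all the way through: $R_n=O_p(a_n^3)$, hence $R_n/a_n^2=O_p(a_n)=o_p(1)$ by Proposition~\ref{prop:Op_convergent_op}. Your route is more direct, avoids the continuity argument for $h$ entirely, and in fact yields the strictly stronger intermediate conclusion $R_n=O_p(a_n^3)$; the paper's device is slightly more robust in that it would still work if the third derivatives were merely continuous near $\bz_0$ rather than uniformly bounded, but under the stated compactness hypothesis both succeed. One small imprecision: ``iterating Proposition~\ref{prop:convergence_types_relationship}(i)'' produces even powers ($O_p(a_n^2)$, $O_p(a_n^4)$, \dots), not the cube; to get $\|\bz_n-\bz_0\|^3=O_p(a_n^3)$ you need the elementary (and easily verified, but not explicitly listed) fact that a product of $O_p(a_n)$ and $O_p(b_n)$ sequences is $O_p(a_nb_n)$. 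This is a one-line fix, not a gap in the argument.
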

\begin{proof}
Since $g$ is twice continuously differentiable in a neighborhood of $\bz_0$, it can be written in terms of the Taylor expansion as
\begin{equation}
g(\bz) \equals g(\bz_0) \enplus (\bz-\bz_0)^\top\nabla_{\bz}g(\bz_0) \enplus \frac{1}{2}(\bz-\bz_0)^\top\nabla_{\bz}^2g(\bz_0)(\bz-\bz_0) \enplus r_2(\bz,\bz_0)
\end{equation}
where $r_2(\bz,\bz_0)$ is the Lagrange remainder of second order. Based on Taylor's polynomial theorem for multivariate functions, there exists a number $t\in[0,1]$ such that $\bz^*=t\bz + (1-t)\bz_0\in\Omega$ (due to convexity of $\Omega$) and 
\begin{equation}
r_2(\bz,\bz_0) \equals \frac{1}{6}\sum_{1\leq i,j,k\leq d} \frac{\partial^3 g(\bz^*)}{\partial \theta_i\partial \theta_j\partial \theta_k}(\theta_i-\theta_{0i})(\theta_j-\theta_{0j})(\theta_k-\theta_{0k})
\end{equation}
But since $\Omega$ is compact and $g\in\calc^3$, the third derivative of $g$ is bounded\footnote{This is because of the following Theorem in real analysis:
\begin{theorem}
Let $X$ and $Y$ be two vector spaces. If $g:X\to Y$ is continuous and $X$ is compact, then $f(X)$ is compact in $Y$.
\end{theorem}
In special case of this theorem, when $Y=\mathbb{R}$, compactness of $f(X)$ is equivalent to boundedness and closedness.} and therefore there exists $M>0$ such that
\begin{equation}
\left|\frac{\partial^3 g(\bz)}{\partial \theta_i\partial \theta_j\partial \theta_k}\right| \enskip \leq \enskip M \quad,\forall \bz\in \Omega\enskip,\forall i,j,k\in\{1,...,d\}
\end{equation}
Hence the Lagrange remainder can be bounded by
\begin{align}
\label{eq:bounded_lagrange_remainder}
|r_2(\bz,\bz_0)| &\enskip \leq \enskip \frac{M}{6}\sum_{1\leq i,j,k\leq3} |\theta_i-\theta_{0i}|\cdot|\theta_j-\theta_{0j}|\cdot|\theta_k-\theta_{0k}| \nonumber\\
&\equals \frac{M}{6}\|\bz-\bz_0\|_1^3\nonumber\\
&\enskip\leq\enskip \frac{M'}{6}\|\bz-\bz_0\|^3
\end{align}
where $M'=c_uM$ and $c_u$ is obtained from the equivalence of norms in $\mathbb{R}^d$ \footnote{Two norm functions $\|\cdot\|_{(1)}$ and $\|\cdot\|_{(2)}$, in a vector space $\Omega$, are called \emph{equivalent} iff there exist constants $c_u\geq c_d>0$ such that
\begin{equation}
c_d\|\bz\|_{(2)} \enskip \leq \enskip \|\bz\|_{(1)} \enskip \leq \enskip c_u\|\bz\|_{(2)} \quad,\forall \bz\in \Omega.
\end{equation}}. Now define the function $h:\Omega\to\mathbb{R}$ as below
\begin{equation}
\label{eq:definition_hz}
h(\bz) \enskip := \enskip \left\{\begin{array}{ll}
\displaystyle \frac{r_2(\bz,\bz_0)}{\|\bz-\bz_0\|^2/2} & ,\bz\neq\bz_0 \\[.25cm]
0 & ,\bz=\bz_0
\end{array}\right.
\end{equation}
Note that $h(\bz)$ is continuous at $\bz=\bz_0$: due to boundedness of $r_2(\bz,\bz_0)$, $h(\bz)$ is also bounded by
\begin{equation}
|h(\bz)| \enskip \leq\enskip \frac{M'}{3}\|\bz-\bz_0\|.
\end{equation}
Hence, for every $\varepsilon>0$, we can select $\delta_\varepsilon=\frac{3\varepsilon}{M'}$ such that the following continuity condition holds
\begin{equation}
\|\bz-\bz_0\|<\delta_\varepsilon \enskip \Rightarrow\enskip |h(\bz)| \enskip \leq \enskip \varepsilon.
\end{equation}
Continuity of $h(\bz)$ at $\bz=\bz_0$ implies $\lim_{\bz\to\bz_0}h(\bz)=h(\bz_0)=0$. Furthermore, since $\bz_n-\bz_0=O_p(a_n)$ and $a_n\to0(n\to\infty)$, Proposition~\ref{prop:Op_convergent_op} suggests that $\bz_n-\bz_0=o_p(1)$. These two enable us to use Proposition~\ref{prop:convergent_continuous_mappings} and write
\begin{equation}
h(\bz_n) - h(\bz_0) \equals h(\bz_n) \equals o_p(1).
\end{equation}
Finally, from equation~(\ref{eq:definition_hz}) and Propositions~\ref{prop:convergence_types_relationship},~\ref{prop:vector_norm_convergence} and~\ref{prop:Op_convergent_op}, we can write that
\begin{equation}
r_2(\bz_n,\bz_0) \equals h(\bz_n)\cdot\frac{\|\bz_n-\bz_0\|^2}{2}\equals o_p(1)\cdot O_p(a_n^2)\equals o_p(a_n^2)
\end{equation}
\end{proof}

\subsection{Second-order Multivariate Delta Method}
\label{subapp:2nd_multivariate_delta_method}
Finally, here is the proof of second-order multivariate Delta method (Theorem~\ref{thm:delta_method_zeroder}):

\begin{proof}
From assumption of the Theorem, $\sqrt{n}(\bz_n-\bz_0)\overset{L}{\to}\caln(\mathbf{0},\bSigma)$, and Proposition~\ref{prop:boundedness_convergence_in_law}, one conclude that $\sqrt{n}(\bz_n-\bz_0)=O_p(1)$ and therefore  $\bz_n-\bz_0 = O_p\left(\frac{1}{\sqrt{n}}\right)$. Thus we can use Theorem~\ref{thm:taylor_in_probability_2nd} with $a_n=\frac{1}{\sqrt{n}}$ to write:
\begin{equation}
g(\bz) \equals g(\bz_0) \enplus (\bz-\bz_0)^\top\nabla_{\bz}g(\bz_0) \enplus \frac{1}{2}(\bz-\bz_0)^\top\nabla_{\bz}^2g(\bz_0)(\bz-\bz_0) \enplus o_p\left(\frac{1}{n}\right),
\end{equation}
hence
\begin{align}
n\bigg[g(\bz) - g(\bz_0)\bigg] &\equals \frac{1}{2}\left[\sqrt{n}\cdot(\bz-\bz_0)\right]^\top\nabla_{\bz}^2g(\bz_0)\left[\sqrt{n}\cdot(\bz-\bz_0)\right] \enplus o_p(1) \nonumber\\
&\enskip \overset{L}{\to} \enskip \frac{1}{2}\caln(\mathbf{0},\bSigma)^\top \nabla_{\bz}^2g(\bz_0) \caln(\mathbf{0},\bSigma)\nonumber\\
&\equals \frac{1}{2}\caln(\mathbf{0},\mathbb{I}_d)^\top \left[\bSigma^{1/2}\nabla_{\bz}^2g(\bz_0)\bSigma^{1/2} \right]\caln(\mathbf{0},\mathbb{I}_d)
\end{align}
Define $\boldsymbol{\Gamma} := \bSigma^{1/2}\nabla_{\bz}^2g(\bz_0)\bSigma^{1/2}$ and rewrite the right-hand-side element-wise as
\begin{align}
\frac{1}{2}\caln(\mathbf{0},\mathbb{I}_d)^\top \boldsymbol{\Gamma} \caln(\mathbf{0},\mathbb{I}_d) \equals \frac{1}{2}\sum_{i=1}^d \lambda_{i} \caln(0,1)^2 \equals \frac{1}{2}\sum_{i=1}^d\lambda_{i}\chi_1^2,
\end{align}
where $\lambda_i$'s are eigenvalues of $\boldsymbol{\Gamma}$. Finally, noting that the terms in the Chi-square mixture are independent, variance of the convergent random variable can be easily computed as
\begin{align}
\label{eq:asymptotic_variance_zeroder}
\mbox{Var}\left[\frac{1}{2}\sum_{i=1}^d  \lambda_{i} \chi_1^2  \right] &\equals \frac{1}{4}\sum_{i=1}^d\lambda_{i}^2\cdot\mbox{Var}\left[\chi_1^2\right]\nonumber\\
&\equals \frac{1}{2}\sum_{i=1}^d\lambda_{i}^2 \nonumber\\
&\equals \frac{1}{2}\left\|\bSigma^{1/2}\nabla_{\bx}^2g(\bx_0)\bSigma^{1/2} \right\|_F^2,
\end{align}
\end{proof}

\section{Proof of Lemma~\ref{lemma:equivalent_set_optimization}}
\label{app:proof_Lemma}
We first substitute the score function of the classifier
\begin{equation}
\nabla_{\btheta} \log p(y|\bx,\btheta) \equals \frac{\nabla_{\btheta} p(y|\bx,\btheta)}{p(y|\bx,\btheta)}\nonumber\\
\end{equation}
into formulation Monte-Carlo approximation of $\bI_q$ to get:
\begin{align}
\hat{\bI}(\btheta;X_q) &\equals \frac{1}{|X_q|}\sum_{\bx\in X_q}\sum_{y=1}^cp(y|\bx,\btheta)\cdot\frac{\nabla_{\btheta} p(y|\bx,\btheta)\nabla_{\btheta}^\top p(y|\bx,\btheta)}{p(y|\bx,\btheta)^2}  \enplus \delta\mathbb{I}_d \\ 
&\equals \frac{1}{|X_q|}\sum_{\bx\in X_q} \sum_{y=1}^c\frac{\nabla_{\btheta}p(y|\bx,\btheta).\nabla_{\btheta}^\top p(y|\bx,\btheta)}{p(y|\bx,\btheta)} \enplus \delta\mathbb{I}_d
\end{align}
Define the vector $\bv_{\btheta}(\bx,y):=\nabla_{\btheta}p(y|\bx,\btheta)\bigg/\sqrt{p(y|\bx,\btheta)}$ and rewrite $\hat{\bI}(\btheta;X_q)$ as:
\begin{equation}
\label{eq:Iq_v}
\hat{\bI}(\btheta;X_q) \equals \frac{1}{|X_q|}\sum_{\bx\in X_q} \sum_{y=1}^c \bv_{\btheta}(\bx,y).\bv_{\btheta}(\bx,y)^\top +\delta\cdot\mathbb{I}_d.
\end{equation}
On the other hand, since $X_q\subset X_p$ we can write $\hat{\bI}(\btheta;X_p)$ in terms of $\hat{\bI}(\btheta;X_q)$ by breaking the summation over $X_p$ into summations over $X_q$ and $X_p-X_q$ as follows:
\begin{align}
\hat{\bI}(\btheta;X_p) &\equals \frac{|X_q|}{|X_p|}\left[\frac{1}{|X_q|}\sum_{\bx\in X_q} \sum_{y=1}^c \bv_{\btheta}(\bx,y).\bv_{\btheta}(\bx,y)^\top +\delta\cdot\mathbb{I}_d \right] \nonumber\\
&\phantom{\equals} \enplus \frac{1}{|X_p|}\sum_{\bx\in X_p-X_q} \sum_{y=1}^c \bv_{\btheta}(\bx,y).\bv_{\btheta}(\bx,y)^\top \enplus \delta\left(\frac{|X_p|-|X_q|}{|X_p|}\right)\cdot \mathbb{I}_d \nonumber\\
&\equals \left(\frac{|X_q|}{|X_p|}\right)\cdot\hat{\bI}(\btheta;X_q) \enplus \frac{1}{|X_p|}\sum_{\bx\in X_p-X_q} \sum_{y=1}^c \bv_{\btheta}(\bx,y).\bv_{\btheta}(\bx,y)^\top\nonumber\\
&\phantom{\equals} \enplus \delta\left(\frac{|X_p|-|X_q|}{|X_p|}\right)\cdot\mathbb{I}_d
\end{align}
Now that we related the Fisher information matrices to each other, we can compute the product of $\hat{\bI}(\btheta;X_p)$ and $\hat{\bI}(\btheta;X_q)^{-1}$:
\begin{align}
\hat{\bI}(\btheta;X_q)^{-1}\hat{\bI}(\btheta;X_p) &\equals \left(\frac{|X_q|}{|X_p|}\right)\cdot\mathbb{I}_d \enplus \frac{\hat{\bI}(\btheta;X_q)^{-1}}{|X_p|}\left[\sum_{\bx\in X_p-X_q} \sum_{y=1}^c \bv_{\btheta}(\bx,y)\cdot\bv_{\btheta}(\bx,y)^\top \right] \nonumber\\
&\phantom{\equals} \enplus \delta\left(\frac{|X_p|-|X_q|}{|X_p|}\right)\cdot\hat{\bI}(\btheta;X_q)^{-1}
\end{align}
Applying the trace function to both sides of the equation will result:
\begin{align}
\label{eq:objective_wtrace}
\mbox{tr}\left[\hat{\bI}(\btheta;X_q)^{-1}\hat{\bI}(\btheta;X_p)\right] &\equals \frac{|X_q|\cdot d}{|X_p|} + \frac{1}{|X_p|}\sum_{\bx\in X_p-X_q} \sum_{y=1}^c \mbox{tr}\left[\hat{\bI}(\btheta;X_q)^{-1} \bv_{\btheta}(\bx,y)\cdot\bv_{\btheta}(\bx,y)^\top\right]\nonumber\\
&\phantom{\equals} \enplus \delta\left(\frac{|X_p|-|X_q|}{|X_p|}\right)\cdot\mbox{tr}\left[\hat{\bI}(\btheta;X_q)^{-1}\right] \nonumber\\
&\enskip \approx\enskip  \frac{|X_q|\cdot d}{|X_p|} \enplus \frac{1}{|X_p|}\sum_{\bx\in X_p-X_q} \sum_{y=1}^c \bv_{\btheta}(\bx,y)^\top\hat{\bI}(\btheta;X_q)^{-1} \bv_{\btheta}(\bx,y),
\end{align}
where the last term is dropped since the overloading constant, $\delta$, is assumed to be small. Furthermore, the term including $\hat{\bI}(\btheta;X_q)^{-1}$ can be approximated by replacing the weighted harmonic mean of the eigenvalues of $\hat{\bI}(\btheta;X_q)$ by their weighted arithmetic mean~\citep{Hoi2006}:
\begin{equation}
\label{eq:inv_Iq_approx}
\bv_{\btheta}(\bx,y)^\top\hat{\bI}(\btheta;X_q)^{-1} \bv_{\btheta}(\bx,y) \enskip \approx \enskip \frac{\|\bv_{\btheta}(\bx,y)\|^4}{\bv_{\btheta}(\bx,y)^\top\hat{\bI}(\btheta;X_q)\bv_{\btheta}(\bx,y)}.
\end{equation}
Note that this approximation becomes exact when the condition number of $\hat{\bI}(\btheta;X_q)$ is one. Substituting $\hat{\bI}(\btheta;X_q)$ from equation~(\ref{eq:Iq_v}) into the denominator of the approximation above yields:
\begin{equation}
\bv_{\btheta}(\bx,y)^\top\hat{\bI}(\btheta;X_q)\bv_{\btheta}(\bx,y) \equals \frac{1}{|X_q|}\sum_{\bx'\in X_q}\sum_{y'=1}^c\left[\bv_{\btheta}(\bx,y)^\top \bv_{\btheta}(\bx',y') \right]^2 \enplus \delta\|\bv_{\btheta}(\bx,y)\|^2
\end{equation}
Integrating this approximation with equation~(\ref{eq:objective_wtrace}), and assuming that the value of $\btheta$ is not located at the stationary point of the conditional density $p(y|\bx,\btheta)$ (hence $\bv_{\btheta}(\bx,y)$ is not the zero vector), results:
\begin{align}
\label{eq:objective_wtrace_simp}
\mbox{tr}&\left[\hat{\bI}(\btheta;X_q)^{-1}\hat{\bI}(\btheta;X_p)\right] \enskip\approx\enskip \frac{|X_q|\cdot d}{|X_p|} \nonumber\\
&+\enskip \frac{1}{|X_p|}\sum_{\bx\in X_p-X_q}\sum_{y=1}^c\frac{1}{\delta\cdot\|\bv_{\btheta}(\bx,y)\|^{-2} + \sum_{\bx'\in X_q} g_{\btheta}(\bx,y,\bx')}
\end{align}
where
\begin{equation}
\label{eq:defn_g}
g_{\btheta}(\bx,y,\bx') \enskip := \enskip \frac{1}{|X_q|}\sum_{y'=1}^c \left[\frac{\bv_{\btheta}(\bx,y)^\top\bv_{\btheta}(\bx',y')}{\|\bv_{\btheta}(\bx,y)\|^2}\right]^2
\end{equation}
Finally in~(\ref{eq:objective_wtrace_simp}), removing the constants we get
\begin{align}
\label{eq:set_maximization}
\argmin_{X_q\subset X_p} \enskip &\mbox{tr}\left[\hat{\bI}(\btheta;X_q)^{-1}\hat{\bI}(\btheta;X_p)\right] \nonumber\\
&\approx \enskip \operatorname*{arg\,max}_{X_q\subset X_p} \sum_{\bx\in X_p-X_q}\sum_{y=1}^c\frac{-1}{\delta\cdot\|\bv_{\btheta}(\bx,y)\|^{-2} + \sum_{\bx'\in X_q} g_{\btheta}(\bx,y,\bx')}
\end{align}

\section{Proof of Theorem~\ref{thm:submodularity}}
\label{app:proof_submodularity}
Proof of this Theorem is a generalization of the discussion by~\citet{Hoi2006}, with clarification of all the assumptions and approximations made.

First, note that the function $f_{\btheta}$ can be broken into simpler terms $f_{\btheta}(X_q)=\sum_{y=1}^cf_{\btheta}(X_q;y)$, where
\begin{equation}
f_{\btheta}(X_q;y) \equals \sum_{\bx\in X_p-X_q}\frac{-1}{\delta\cdot\|v_{\btheta}(\bx,y)\|^{-2} + \sum_{\bx'\in X_q} g_{\btheta}(\bx,y,\bx')},\quad \forall X_q\subseteq X_p.
\end{equation}
Therefore, in order to prove submodularity and monotonicity of $f_{\btheta}$, it suffices to prove these properties for $f_{\btheta}(\cdot;y)$ for all $y\in\{1,...,c\}$. 
Fix $y$ and take any subset $X_q\subseteq X_p$ and $\bxi\in X_p-X_q$. Then, we can write:
\begin{eqnarray}
f_{\btheta}(X_q\cup\{\bxi\};y) &\equals &\sum_{\bx\in X_p-(X_q\cup\{\bxi\})}\frac{-1}{\delta\cdot\|v_{\btheta}(\bx,y)\|^{-2} + \sum_{\bx'\in X_q\cup\{\bxi\}} g_{\btheta}(\bx,y,\bx')} \\
&\equals &\sum_{\bx\in X_p-X_q}\frac{-1}{\delta\cdot\|v_{\btheta}(\bx,y)\|^{-2} + \sum_{\bx'\in X_q\cup\{\bxi\}} g_{\btheta}(\bx,y,\bx')} \nonumber\\
& &+\enskip \frac{1}{\delta\cdot\|v_{\btheta}(\bxi,y)\|^{-2} + \sum_{\bx'\in X_q\cup\{\bxi\}} g_{\btheta}(\bxi,y,\bx')}. \nonumber
\end{eqnarray}
We then form the discrete derivative of $f_{\btheta}(\cdot;y)$ at $X_q$ to get:
\begin{align}
\label{eq:discrete_derivative_fy}
&\rho_{f_{\btheta}(\cdot;y)}(X_q;\bxi) \equals f_{\btheta}(X_q\cup\{\bxi\};y) \enskip-\enskip f_{\btheta}(X_q;y) \nonumber\\
&\equals \sum_{\bx\in X_p-X_q}\left[ \frac{-1}{\frac{\delta}{\|\bv_{\btheta}(\bx,y)\|^{2}} + \sum_{\bx'\in X_q\cup\{\bxi\}} g_{\btheta}(\bx,y,\bx')} + \frac{1}{\frac{\delta}{\|\bv_{\btheta}(\bx,y)\|^{2}} + \sum_{\bx'\in X_q} g_{\btheta}(\bx,y,\bx')} \right] \nonumber\\
&\phantom{\equals} \enplus \frac{1}{\frac{\delta}{\|\bv_{\btheta}(\bx,y)\|^{2}} + \sum_{\bx'\in X_q\cup\{\bxi\}} g_{\btheta}(\bxi,y,\bx')}.
\end{align}
The right-hand-side can be rewritten as
\begin{align}
\label{eq:discrete_derivative_fy_re}
&\sum_{\bx\in X_p-X_q}\left[ \frac{g_{\btheta}(\bx,y,\bxi)}{\left(\frac{\delta}{\|\bv_{\btheta}(\bx,y)\|^{2}}+\sum_{\bx'\in X_q\cup\{\bxi\}} g_{\btheta}(\bx,y,\bx')\right)\left(\frac{\delta}{\|\bv_{\btheta}(\bx,y)\|^{2}}+\sum_{\bx'\in X_q} g_{\btheta}(\bx,y,\bx')\right)} \right]\nonumber\\
&\phantom{\equals} \enplus \frac{1}{\frac{\delta}{\|\bv_{\btheta}(\bx,y)\|^{2}}+\sum_{\bx'\in X_q\cup\{\bxi\}} g_{\btheta}(\bxi,y,\bx')}.
\end{align}
Since by definition $g_{\btheta}(\bx,y,\bx')\geq0,\forall\bx,y,\bx'$, all of the terms in~(\ref{eq:discrete_derivative_fy_re}) are non-negative and therefore $\rho_{f_{\btheta}(\cdot;y)}(X_q;\bxi)\geq0$. This is true for any $X_q\subseteq X_p$ hence monotonicity of $f_{\btheta}(\cdot;y)$ is obtained.
Now let us take any superset $X_{q'}$ such that $X_{q}\subseteq X_{q'}\subseteq X_p$ and $\bxi\in X_p-X_{q'}$, and form the difference between their corresponding discrete derivatives. From~(\ref{eq:discrete_derivative_fy_re}) we will have:
\begin{align}
\label{eq:discrete_derivative_diff}
& \rho_{f_{\btheta}(\cdot;y)}(X_q;\bxi) - \rho_{f_{\btheta}(\cdot;y)}(X_{q'};\bxi) \nonumber\\
&= \sum_{\bx\in X_p-X_q}\left[ \frac{g_{\btheta}(\bx,y,\bxi)}{\left(\frac{\delta}{\|\bv_{\btheta}(\bx,y)\|^{2}}+\sum_{\bx'\in X_q\cup\{\bxi\}} g_{\btheta}(\bx,y,\bx')\right)\left(\frac{\delta}{\|\bv_{\btheta}(\bx,y)\|^{2}}+\sum_{\bx'\in X_q} g_{\btheta}(\bx,y,\bx')\right)} \right] \nonumber\\
& + \frac{1}{\frac{\delta}{\|\bv_{\btheta}(\bxi,y)\|^{2}}+\sum_{\bx'\in X_q\cup\{\bxi\}} g_{\btheta}(\bxi,y,\bx')} \nonumber \\
&-  \sum_{\bx\in X_p-X_{q'}}\left[ \frac{g_{\btheta}(\bx,y,\bxi)}{\left(\frac{\delta}{\|\bv_{\btheta}(\bx,y)\|^{2}}+\sum_{\bx'\in X_{q'}\cup\{\bxi\}} g_{\btheta}(\bx,y,\bx')\right)\left(\frac{\delta}{\|\bv_{\btheta}(\bx,y)\|^{2}}+\sum_{\bx'\in X_{q'}} g_{\btheta}(\bx,y,\bx')\right)} \right] \nonumber\\
& -  \frac{1}{\frac{\delta}{\|\bv_{\btheta}(\bxi,y)\|^{2}}+\sum_{\bx'\in X_{q'}\cup\{\bxi\}} g_{\btheta}(\bxi,y,\bx')}.
\end{align}
From non-negativity of $g_{\btheta}$ and that $X_q\subseteq X_{q'}$, we can conclude that for any $\bx\in X$ and $y\in\{1,...,c\}$:
\begin{align}
\label{eq:inequality_q}
\sum_{\bx'\in X_{q'}} g_{\btheta}(\bx,y,\bx') \enskip &\geq \enskip \sum_{\bx'\in X_q} g_{\btheta}(\bx,y,\bx') \nonumber \\
\Leftrightarrow \quad \left[\sum_{\bx'\in X_{q'}} g_{\btheta}(\bx,y,\bx')+\frac{\delta}{\|\bv_{\btheta}(\bx,y)\|^{2}}\right]^{-1} \enskip &\leq \enskip \left[\sum_{\bx'\in X_q} g_{\btheta}(\bx,y,\bx')+\frac{\delta}{\|\bv_{\btheta}(\bx,y)\|^{2}}\right]^{-1} \nonumber\\
\Leftrightarrow \quad -\left[\sum_{\bx'\in X_{q'}} g_{\btheta}(\bx,y,\bx')+\frac{\delta}{\|\bv_{\btheta}(\bx,y)\|^{2}}\right]^{-1} \enskip &\geq \enskip -\left[\sum_{\bx'\in X_q} g_{\btheta}(\bx,y,\bx')+\frac{\delta}{\|\bv_{\btheta}(\bx,y)\|^{2}}\right]^{-1}
\end{align}
Similarly, since $X_q\cup\{\bxi\} \subseteq X_{q'}\cup\{\bxi\}$ we will get:
\begin{equation}
\label{eq:inequality_q_bxi}
\phantom{\Rightarrow \quad }-\left[\sum_{\bx'\in X_{q'}\cup\{\bxi\}} g_{\btheta}(\bx,y,\bx')+\frac{\delta}{\|\bv_{\btheta}(\bx,y)\|^{2}}\right]^{-1} \geq -\left[\sum_{\bx'\in X_q\cup\{\bxi\}} g_{\btheta}(\bx,y,\bx')+\frac{\delta}{\|\bv_{\btheta}(\bx,y)\|^{2}}\right]^{-1}
\end{equation}
Applying the inequalities~(\ref{eq:inequality_q}) and~(\ref{eq:inequality_q_bxi}) into euqation~(\ref{eq:discrete_derivative_diff}) results:
\begin{align}
\label{eq:ineq_delta_rho}
& \rho_{f_{\btheta}(\cdot;y)}(X_q;\bxi) - \rho_{f_{\btheta}(\cdot;y)}(X_{q'};\bxi)\nonumber \\
&\geq \sum_{\bx\in X_p-X_q}\left[ \frac{g_{\btheta}(\bx,y,\bxi)}{\left(\frac{\delta}{\|\bv_{\btheta}(\bx,y)\|^{2}}+\sum_{\bx'\in X_q\cup\{\bxi\}} g_{\btheta}(\bx,y,\bx')\right)\left(\frac{\delta}{\|\bv_{\btheta}(\bx,y)\|^{2}}+\sum_{\bx'\in X_q} g_{\btheta}(\bx,y,\bx')\right)} \right]  \nonumber \\
&-  \sum_{\bx\in X_p-X_{q'}}\left[ \frac{g_{\btheta}(\bx,y,\bxi)}{\left(\frac{\delta}{\|\bv_{\btheta}(\bx,y)\|^{2}}+\sum_{\bx'\in X_q\cup\{\bxi\}} g_{\btheta}(\bx,y,\bx')\right)\left(\frac{\delta}{\|\bv_{\btheta}(\bx,y)\|^{2}}+\sum_{\bx'\in X_q} g_{\btheta}(\bx,y,\bx')\right)} \right] \nonumber \\
& + \frac{1}{\frac{\delta}{\|\bv_{\btheta}(\bxi,y)\|^{2}}+\sum_{\bx'\in X_q\cup\{\bxi\}} g_{\btheta}(\bxi,y,\bx')} \enskip-\enskip \frac{1}{\frac{\delta}{\|\bv_{\btheta}(\bxi,y)\|^{2}}+\sum_{\bx'\in X_q\cup\{\bxi\}} g_{\btheta}(\bxi,y,\bx')}
\end{align}
which yields\footnote{The inequality in~(\ref{eq:ineq_delta_rho}) is obtained by the fact that, for every four positive real numbers $a$, $a_0$, $b$ and $b_0$, if we have $-a\geq-a_0$ and $-b\geq-b_0$ (similar to~(\ref{eq:inequality_q}) and~(\ref{eq:inequality_q_bxi})), then $$-a\cdot b\equals(-a)\cdot b\enskip\geq\enskip(-a_0)\cdot b\equals a_0\cdot(-b)\enskip\geq\enskip a_0\cdot(-b_0)\equals-a_0\cdot b_0.$$}
\begin{equation}
\label{eq:submodular_inequality}
\sum_{\bx\in X_{q'}-X_q} \frac{g_{\btheta}(\bx,y,\bxi)}{\left(\frac{\delta}{\|\bv_{\btheta}(\bx,y)\|^{2}}+\sum_{\bx'\in X_q\cup\{\bxi\}} g_{\btheta}(\bx,y,\bx')\right)\left(\frac{\delta}{\|\bv_{\btheta}(\bx,y)\|^{2}}+\sum_{\bx'\in X_q} g_{\btheta}(\bx,y,\bx')\right)}\enskip \geq \enskip 0.
\end{equation}
Inequality~(\ref{eq:submodular_inequality}) holds for any $X_q\subseteq X_p$; hence submodularity of $f_{\btheta}(\cdot;y)$ stands for all $y\in\{1,...,c\}$ and $\btheta\in\Omega$.
\end{appendices}

\vskip 0.2in
\bibliographystyle{abbrvnat}
\bibliography{myrefs}

\end{document}